\newcommand{\BEAS}{\begin{eqnarray*}}
\newcommand{\EEAS}{\end{eqnarray*}}
\newcommand{\BEA}{\begin{eqnarray}}
\newcommand{\EEA}{\end{eqnarray}}
\newcommand{\BEQ}{\begin{equation}}
\newcommand{\EEQ}{\end{equation}}
\newcommand{\BIT}{\begin{itemize}}
\newcommand{\EIT}{\end{itemize}}
\newcommand{\BNUM}{\begin{enumerate}}
\newcommand{\ENUM}{\end{enumerate}}
\newcommand{\beas}{\begin{eqnarray*}}
\newcommand{\eeas}{\end{eqnarray*}}
\newcommand{\bea}{\begin{eqnarray}}
\newcommand{\eea}{\end{eqnarray}}
\newcommand{\beq}{\begin{equation}}
\newcommand{\eeq}{\end{equation}}
\newcommand{\bit}{\begin{itemize}}
\newcommand{\eit}{\end{itemize}}
\newcommand{\bnum}{\begin{enumerate}}
\newcommand{\enum}{\end{enumerate}}
\newcommand{\ba}{\begin{array}}
\newcommand{\ea}{\end{array}}
\newcommand{\bbm}{\begin{bmatrix}}
\newcommand{\ebm}{\end{bmatrix}}
\newcommand{\eg}{{\it e.g.}}
\newcommand{\ie}{{\it i.e.}}
\newcommand{\ones}{\mathbf 1}
\newcommand{\reals}{{\mbox{\bf R}}}
\newcommand{\rank}{\mbox{\textrm{Rank}}}
\newcommand{\tr}{\mathop{\bf tr}}
\newcommand{\diag}{\mathop{\bf diag}}
\newcommand{\Expect}{\mathop{\bf E{}}}
\newcommand{\sign}{\mathop{\bf sign}}
\newcommand{\argmin}{\mathop{\rm argmin}}
\newcommand{\Card}{\mathop{\bf card}}
\newcommand{\card}{\mathop{\bf card}}
\newcommand{\argmax}{\mathop{\rm argmax}}
\newcommand{\prox}{{\bf prox}}
\newcommand{\update}{\mathop{\bf update}}
\newcommand{\huber}{\mathop{\bf huber}}
\newcommand{\sort}{\mathop{\bf sort}}
\newcommand{\round}{\mathop{\bf round}}
\newcommand{\half}{(1/2)}
\theoremstyle{definition} 
\newtheorem{theorem}{Theorem}
\newtheorem{lemma}{Lemma}
\newlength \figwidth
\long\def\@makecaption#1#2{
   \vskip 9pt 
   \begin{small}
   \setbox\@tempboxa\hbox{{\bf #1:} #2}
   \ifdim \wd\@tempboxa > 5.5in
        \begin{center}
        \begin{minipage}[t]{5.5in}
        \addtolength{\baselineskip}{-0.95pt}
        {\bf #1:} #2 \par
        \addtolength{\baselineskip}{0.95pt}
        \end{minipage}
        \end{center}
   \else 
	\hbox to\hsize{\hfil\box\@tempboxa\hfil}  
   \fi
   \end{small}\par
}
\pgfplotsset{compat=newest}
\pgfplotsset{every axis legend/.append style={%
cells={anchor=west}}
}
\tikzset{>=stealth'}
\pgfplotsset{width=12cm}
\newcommand{\figures}{.}
\title{Generalized Low Rank Models}
\author{Madeleine Udell, Corinne Horn, Reza Zadeh, and Stephen Boyd}
\date{\today.  (Original version posted September 2014.)}
\begin{document}
\maketitle

\begin{abstract}
Principal components analysis (PCA) is a well-known technique for approximating
a tabular data set by a low rank matrix.
Here, we extend the idea of PCA to handle arbitrary data sets consisting
of numerical, Boolean, categorical, ordinal, and other data types.
This framework encompasses many well known techniques in data analysis,
such as nonnegative matrix factorization, 
matrix completion, sparse and robust PCA, $k$-means, $k$-SVD,
and maximum margin matrix factorization.
The method handles heterogeneous data sets,
and leads to coherent schemes for compressing, denoising, and imputing 
missing entries across all data types simultaneously.
It also admits a number of interesting interpretations of the low rank factors,
which allow clustering of examples or of features.
We propose several parallel algorithms
for fitting generalized low rank models,
and describe implementations and numerical results.
\end{abstract}

This manuscript is a draft. Comments sent to \verb|udell@stanford.edu| are welcome.

\newpage
\setcounter{tocdepth}{2}
\tableofcontents
\newpage

\section{Introduction}
In applications of machine learning and data mining,
one frequently encounters large collections of high dimensional data
organized into a table. 
Each row in the table represents an example, and each column a feature or attribute.
These tables may have columns of different (sometimes, non-numeric) types, 
and often have many missing entries.

For example, in medicine, the table might record patient attributes or lab tests: 
each row of the table lists test or survey results for a particular patient, 
and each column corresponds to a distinct test or survey question. 
The values in the table might be numerical (3.14), Boolean (yes, no),
ordinal (never, sometimes, always), or categorical (A, B, O).
Tests not administered or questions left blank result in missing entries in the data set.
Other examples abound: in finance, the table might record known characteristics of 
companies or asset classes; in social science settings, it might record survey responses;
in marketing, it might record known customer characteristics and purchase history.

Exploratory data analysis can be difficult in this setting.
To better understand a complex data set, one would like to be able to visualize
archetypical examples, to cluster examples, to find correlated features, to fill in
(impute) missing entries, and to remove (or simply identify) spurious, anomalous,
or noisy data points.
This paper introduces a templated method to enable these analyses even on large data sets
with heterogeneous values and with many missing entries. 
Our approach will be to embed
both the rows (examples) and columns (features) of the table into the same
low dimensional vector space. 
These low dimensional vectors can then be plotted, clustered, 
and used to impute missing entries or identify anomalous ones.

If the data set consists only of numerical (real-valued) data, then a simple
and well-known technique to find this embedding is Principal
Components Analysis (PCA). PCA finds a low rank matrix that minimizes the
approximation error, in the least-squares sense, to the original data set.
A factorization of this low rank matrix embeds the original high 
dimensional features into a low dimensional space.
Extensions of PCA can handle missing data values, and can be used to 
impute missing entries.

Here, we extend PCA to approximate an arbitrary data set 
by replacing the least-squares error used in PCA 
with a loss function that is appropriate for the given data type.
Another extension beyond PCA is to add regularization on the low 
dimensional factors 
to impose or encourage some structure, such as sparsity or nonnegativity,
in the low dimensional factors.
In this paper we use the term \emph{generalized low rank model} (GLRM) 
to refer to the problem of approximating a data set as a product of two low dimensional factors
by minimizing an objective function.
The objective will consist of a loss function on the approximation error together 
with regularization of the low dimensional factors.
With these extensions of PCA, the resulting low rank representation 
of the data set still produces a low dimensional embedding of the data set, as in PCA.

Many of the low rank modeling problems we must solve will be familiar. 
We recover an optimization
formulation of nonnegative matrix factorization, 
matrix completion, sparse and robust PCA, $k$-means, $k$-SVD,
and maximum margin matrix factorization, to name just a few.%

These low rank approximation problems are not convex, and 
in general cannot be solved globally and efficiently.
There are a few exceptional problems that are known to have convex relaxations
which are tight under certain conditions, and hence are efficiently (globally) solvable
under these conditions. 
However, all of these approximation problems can be heuristically (locally) solved 
by methods that alternate
between updating the two factors in the low rank approximation.
Each step involves either a convex problem, or a nonconvex problem that is simple enough
that we can solve it exactly.
While these alternating methods need not find the globally best low rank
approximation, they are often very useful and effective for
the original data analysis problem.

\subsection{Previous work}

\paragraph{Unified views of matrix factorization.}
We are certainly not the first to note that matrix factorization algorithms 
may be viewed in a unified framework,
parametrized by a small number of modeling decisions.
The first instance we find in the literature of this unified view appeared 
in a paper by Collins, Dasgupta, and Schapire, \cite{collins2001}, %
extending PCA to use loss functions derived from any probabilistic model in the exponential family.
Gordon's Generalized$^2$ Linear$^2$ models \cite{gordon2002} extended
the framework to loss functions derived from the generalized Bregman divergence
of any convex function, which includes models such as Independent Components Analysis (ICA).
Srebro's 2004 PhD thesis \cite{srebro2004thesis} extended
the framework to other loss functions, including hinge loss
and KL-divergence loss, and to other regularizers, including the nuclear norm and max-norm.
Similarly, Chapter 8 in Tropp's 2004 PhD thesis \cite{tropp2004thesis} explored a number of new regularizers,
presenting a range of clustering problems as matrix factorization problems with constraints, 
and anticipated the $k$-SVD algorithm \cite{aharon2006}.
Singh and Gordon \cite{singh2008} offered a complete view of the state of the literature
on matrix factorization in Table~1 of their 2008 paper,
and noted that by changing the loss function and regularizer, one may 
recover algorithms including PCA, weighted PCA,
$k$-means, $k$-medians, $\ell_1$ SVD,
probabilistic latent semantic indexing (pLSI),
nonnegative matrix factorization with $\ell_2$ or KL-divergence loss,
exponential family PCA, and MMMF.
Witten et al.\ introduced the statistics community to sparsity-inducing matrix factorization
in a 2009 paper on penalized matrix decomposition,
with applications to sparse PCA and canonical correlation analysis \cite{witten2009}.
Recently, Markovsky's monograph on low rank approximation \cite{markovsky2012} 
reviewed some of this literature,
with a focus on applications in system, control, and signal processing.
The GLRMs discussed in this paper include all of these models, and many more.


\paragraph{Heterogeneous data.}
Many authors have proposed the use of low rank models as a tool for
integrating heterogeneous data. 
The earliest example of this approach is canonical correlation analysis,
developed by Hotelling \cite{hotelling1936} in 1936 to understand the relations between
two sets of variates in terms of the eigenvectors of their covariance matrix.
This approach was extended by Witten et al.\ \cite{witten2009} to encourage structured 
(\eg, sparse) factors.
In the 1970s, De Leeuw et al.\ proposed the use of low rank models to fit 
data measured in nominal, ordinal and cardinal levels \cite{deleeuw1976}.
More recently, Goldberg et al.\ \cite{goldberg2010} used a low rank model to perform
transduction (\ie, multi-label learning) in the presence of missing data by fitting
a low rank model to the features and the labels simultaneously.
Low rank models have also been used to embed image, text and video data
into a common low dimensional space \cite{gress2004},
and have recently come into vogue in the natural language processing community
as a means to embed words and documents into a low dimensional vector space 
\cite{mikolov2013a,mikolov2013b,pennington2014, srikumar2014}.

\paragraph{Algorithms.}
In general, it can be computationally hard to find the global optimum of a generalized low rank model.
For example, it is NP-hard to compute an exact solution to
$k$-means \cite{drineas2004}, nonnegative matrix factorization \cite{vavasis2009},
and weighted PCA and matrix completion \cite{gillis2011}, 
all of which are special cases of low rank models.

However, there are many (efficient) ways to go about \emph{fitting} a low rank model,
by which we mean finding a good model with a small objective value.
The resulting model may or may not be the global solution of the low rank optimization problem.
We distinguish a model fit in this way from the \emph{solution} to an optimization problem, 
which always refers to the global solution.

The matrix factorization literature presents a wide
variety of methods to fit low rank models in a variety of special cases.
For example, there are variants on alternating minimization 
(with alternating least squares as a special case)
\cite{deleeuw1976,young1976,takane1977,deleeuw1984,deleeuw2009},
alternating Newton methods \cite{gordon2002, singh2008},
(stochastic or incremental) gradient descent \cite{keshavan2009grassman,lee2010,niu2011,recht2011,bittorf2012,yun2013,recht2013},
conjugate gradients \cite{rennie2005,srebro2003},
expectation minimization (EM) (or ``soft-impute'') methods \cite{tipping1999, srebro2003, mazumder2010, hastiefastals},
multiplicative updates \cite{lee1999},
and convex relaxations to semidefinite programs \cite{srebro2004, fazel2004, recht2010, fithian2013}.

Generally, expectation minimization, which proceeds by iteratively
imputing missing entries in the matrix and solving the fully observed problem,
has been found to underperform relative to other methods \cite{singh2008}.
However, when used in conjunction with computational tricks exploiting 
a particular problem structure, such as Gram matrix caching,
these methods can still work extremely well \cite{hastiefastals}.

Semidefinite programming becomes computationally intractable 
for very large (or even just large) scale problems \cite{rennie2005}.
However, a theoretical analysis of optimality conditions 
for rank-constrainted semidefinite programs \cite{burer2003theory} 
has led to a few algorithms for semidefinite programming
based on matrix factorization \cite{burer2003alg,abernethy2009,journee2010} 
which guarantee global optimality 
and converge quickly if the global solution to the problem is exactly low rank.
Fast approximation algorithms for rank-constrained 
semidefinite programs have also been developed \cite{shalev2011}.

Recently, there has been a resurgence of interest in methods
based on alternating minimization, as numerous authors have shown
that alternating minimization (suitably initialized, and under a few technical assumptions) provably converges to the global minimum for a range of problems
including matrix completion \cite{keshavan2012, jain2013, hardt2013},
robust PCA \cite{netrapalli2014}, and
dictionary learning \cite{agarwal2013}.

Gradient descent methods are often preferred for extremely large scale problems
since these methods parallelize naturally in both shared memory
and distributed memory architectures.
See \cite{recht2013,yun2013} and references therein for some recent innovative approaches to 
speeding up stochastic gradient descent for matrix factorization by eliminating locking and 
reducing interprocess communication.

\paragraph{Contributions.}
The present paper differs from previous work in a number of ways.
We are consistently concerned with the \emph{meaning} of applying these
different loss functions and regularizers to approximate a data set.
The generality of our view allows us to introduce a number of loss functions and regularizers
that have not previously been considered.
Moreover, our perspective enables us to
extend these ideas to arbitrary data sets, rather than just matrices of real numbers.

A number of new considerations emerge when considering the problem so broadly.
First, we must face the problem of comparing approximation errors
across data of different types. For example,
we must choose a scaling to trade off the loss due to a misclassification of a categorical value 
with an error of $0.1$ (say) in predicting a real value.

Second, we require algorithms that can handle the full gamut of losses and regularizers,
which may be smooth or nonsmooth, finite or infinite valued, with arbitrary domain.
This work is the first to consider these problems in such generality,
and therefore also the first to wrestle with the algorithmic consequences.
Below, we give a number of algorithms appropriate for this setting, 
including many that have not been previously proposed in the literature.
Our algorithms are all based on alternating minimization and 
variations on alternating minimization that are more suitable for large scale data
and can take advantage of parallel computing resources.

Finally, we present some new results on some old problems. For example,
in Appendix~\ref{a-qpca}, we derive a formula for the solution to quadratically regularized PCA,
and show that quadratically regularized PCA has no local nonglobal minima;
and in \S\ref{s-optimality-certificate} we show how to certify (in some special cases) 
that a model is a global solution of a GLRM.

\subsection{Organization}
The organization of this paper is as follows. In \S\ref{s-qpca} we first recall some
properties of PCA and its common variations to familiarize the reader with our
notation. We then generalize the regularization on the low dimensional factors in \S\ref{s-rpca},
and the loss function on the approximation error in \S\ref{s-gpca}. Returning to the setting of
heterogeneous data, we extend these dimensionality reduction techniques to
abstract data types in \S\ref{s-apca} and to multi-dimensional loss functions in \S\ref{s-mpca}.
Finally, we address algorithms for fitting GLRMs in \S\ref{s-algorithms}, 
discuss a few practical considerations in choosing a GLRM for a particular problem in \S\ref{s-choosing}, 
and describe some implementations of the algorithms that we have developed in \S\ref{s-implementation}.

\section{PCA and quadratically regularized PCA} \label{s-qpca}

\paragraph{Data matrix.}
In this section, we let $A \in \reals^{m \times n}$ be a data matrix 
consisting of $m$ examples each with $n$ numerical features. 
Thus $A_{ij}\in\reals$ is the value of the $j$th feature in 
the $i$th example, the $i$th row of $A$ is the vector of $n$ feature values 
for the $i$th example, and the $j$th column of $A$ is the vector of
the $j$th feature across our set of $m$ examples.

It is common to represent other data types in a numerical matrix using
certain canonical encoding tricks.
For example, Boolean data is often encoded as 1 (for true) and -1 (for false),
ordinal data is often encoded using consecutive integers to represent the consecutive
levels of the variable, and
categorical data is often encoded by creating a column for each possible 
value of the categorical variable, and representing the data using a 1
in the column corresponding to the observed value, and -1 or 0 in all other columns.
We will see more systematic and principled ways to deal with these data types,
and others, in \S\ref{s-gpca}--\ref{s-mpca}.
For now, we assume the entries in the data matrix consist of real numbers.

\subsection{PCA}
Principal components analysis (PCA) is one of the oldest and most widely used tools 
in data analysis \cite{pearson1901,hotelling1933,jolliffe1986}. 
We review some of its well-known properties here in order to set notation and
as a warm-up to the variants presented later.

PCA seeks the best rank-$k$ approximation to the matrix $A$ in the least-squares sense,
by solving
\BEQ
\label{eq-pca-z}
\begin{array}{ll}
    \mbox{minimize} & \|A - Z\|_F^2 \\
    \mbox{subject to} & \rank(Z) \leq k,
\end{array}
\EEQ
with variable $Z\in \reals^{m \times n}$. 
Here, $\|\cdot\|_F$ is the Frobenius norm of a matrix, 
\ie, the square root of the sum of the squares of the entries.

The rank constraint can be encoded implicitly by expressing $Z$ in factored form as
$Z = XY$, with $X \in \reals^{m \times k}$, $Y \in \reals^{k \times n}$.
Then the PCA problem can be expressed as
\BEQ
\label{eq-pca}
\begin{array}{ll}
    \mbox{minimize} & \|A - XY\|_F^2
\end{array}
\EEQ
with variables $X \in \reals^{m \times k}$ and $Y \in \reals^{k \times n}$. 
(The factorization of $Z$ is of course not unique.)

Define $x_i \in \reals^{1 \times n}$ to be the $i$th \emph{row} of $X$, 
and $y_j\in \reals^m$ to be the $j$th \emph{column} of $Y$.
Thus $x_i y_j = (XY)_{ij} \in \reals$ denotes a dot or inner product.
(We will use this notation throughout the paper.)
Using this definition, we can rewrite the objective in problem~(\ref{eq-pca}) as
\[
\sum_{i=1}^m \sum_{j=1}^n(A_{ij} - x_i y_j)^2.
\]

We will give several interpretations of the low rank factorization $(X,Y)$
solving (\ref{eq-pca}) in \S\ref{s-qpca-interpretation}.  But for now, we note 
that (\ref{eq-pca}) can interpreted as a method for compressing the
$n$ features in the original data set to $k<n$ new features.
The row vector $x_i$ is associated  with example $i$; we can think of it 
as a feature vector for the example using the compressed set of $k<n$ features.
The column vector $y_j$ is associated with the original feature $j$;
it can be interpreted as mapping the $k$ new features onto the original feature $j$.

\subsection{Quadratically regularized PCA}
We can add quadratic regularization on $X$ and $Y$ to the objective.
The quadratically regularized PCA problem is
\BEQ
\label{eq-qpca}
\begin{array}{ll}
    \mbox{minimize} & \sum_{i = 1}^m\sum_{j = 1}^n(A_{ij} - x_iy_j)^2 + 
    \gamma\sum_{i = 1}^m \|x_i\|_2^2 + \gamma \sum_{j = 1}^n\|y_j\|_2^2,
\end{array}
\EEQ
with variables $X \in \reals^{m \times k}$ and $Y \in \reals^{k \times n}$,
and regularization parameter $\gamma \ge 0$. 
Problem (\ref{eq-qpca}) can be written more concisely in matrix form as
\BEQ
\begin{array}{ll}
    \mbox{minimize} & \|A - XY\|_F^2 + \gamma \|X\|_F^2 + \gamma\|Y\|_F^2.
\end{array}
\EEQ
When $\gamma = 0$, the problem reduces to the PCA problem (\ref{eq-pca}).


\subsection{Solution methods}

\paragraph{Singular value decomposition.}
It is well known that a solution to (\ref{eq-pca}) can be obtained by truncating 
the \emph{singular value decomposition} (SVD) of $A$ \cite{eckart1936}.
The (compact) SVD of $A$ is given by $A = U \Sigma V^T$, 
where $U \in \reals^{m\times r}$ and $V\in \reals^{n\times r}$ have orthonormal columns, and
$\Sigma = \diag(\sigma_1, \ldots, \sigma_r) \in \reals^{r\times r}$, 
with $\sigma_1 \geq \cdots \geq \sigma_r>0$ and $r = \rank(A)$.
The columns of $U = [u_1 \cdots u_r]$ and $V = [v_1 \cdots v_r]$ 
are called the left and right singular vectors of $A$, respectively,
and $\sigma_1, \ldots, \sigma_r$ are called the singular values of $A$.

Using the orthogonal invariance of the Frobenius norm, we can rewrite the objective
in problem~(\ref{eq-pca-z}) as
\[
\|A - XY\|_F^2 = \|\Sigma - U^T XY V\|_F^2.
\]
That is, we would like to find a matrix $U^T XY V$ of rank no more than $k$ approximating
the diagonal matrix $\Sigma$.
It is easy to see that there is no better rank $k$ approximation for $\Sigma$ than 
$\Sigma_k = \diag(\sigma_1, \ldots, \sigma_k, 0, \ldots, 0) \in \reals^{r\times r}$.
Here we have \emph{truncated} the SVD to keep only the top $k$ singular values.
We can achieve this approximation by choosing $U^T XY V = \Sigma_k$, 
or (using the orthogonality of $U$ and $V$) $XY = U \Sigma_k V^T$. For example,
define
\BEQ
\label{eq-svd-trunc}
U_k = [u_1 \cdots u_k], \quad
V_k = [v_1 \cdots v_k], 
\EEQ
and let
\BEQ
\label{eq-pca-soln}
X = U_k \Sigma_k^{1/2}, \qquad Y = \Sigma_k ^{1/2} V_k^T.
\EEQ
The solution to (\ref{eq-qpca}) is clearly not unique: if $X$, $Y$ is a
solution, then so is $XG$, $G^{-1}Y$ for any invertible matrix $G \in
\reals^{k \times k}$. 
When $\sigma_k > \sigma_{k+1}$, all solutions to the PCA problem have this form.
In particular, letting $G = tI$ and taking $t \to \infty$, 
we see that the solution set of the PCA problem is unbounded.

It is less well known that a solution to the quadratically regularized PCA problem can 
be obtained in the same way. 
(Proofs for the statements below can be found in Appendix \ref{a-qpca}.) 
Define $U_k$ and $V_k$ as above, and let 
$\tilde \Sigma_k = \diag((\sigma_1-\gamma)_+,\ldots, (\sigma_k-\gamma)_+)$, 
where $(a)_+ = \max(a,0)$.
Here we have both \emph{truncated} the SVD to keep only the top $k$ singular values,
and performed \emph{soft-thresholding} on the singular values to reduce their
values by $\gamma$.
A solution to the 
quadratically regularized PCA problem (\ref{eq-qpca}) is then
given by
\BEQ
\label{eq-pca-reg-soln}
X = U_k \tilde \Sigma_k^{1/2}, \qquad Y = \tilde \Sigma_k ^{1/2} V_k^T.
\EEQ
For $\gamma = 0$,
the solution reduces to the familiar solution to PCA (\ref{eq-pca})
obtained by truncating the SVD to the top $k$ singular values.

The set of solutions to problem (\ref{eq-qpca}) is significantly
smaller than that of problem (\ref{eq-pca}), although solutions are still not unique: 
if $X$, $Y$ is a solution, then so is $XT$, $T^{-1}Y$ 
for any orthogonal matrix $T \in \reals^{k\times k}$. 
When $\sigma_k > \sigma_{k+1}$, all solutions to (\ref{eq-qpca}) have this form.
In particular, adding quadratic regularization results in a solution set that is bounded.

The quadratically regularized PCA problem (\ref{eq-qpca}) 
(including the PCA problem as a special case)
is the only problem we will encounter for which an analytical solution exists.
The analytical tractability of PCA explains its popularity as a technique for
data analysis in the era before computers were machines.
For example, in his 1933 paper on PCA \cite{hotelling1933},
Hotelling computes the solution to his problem using power iteration to find
the eigenvalue decomposition of the matrix $A^T A = V \Sigma^2 V^T$,
and records in the appendix to his paper the itermediate results at 
each of the (three) iterations required for the method to converge.

\paragraph{Alternating minimization.}
Here we mention a second method for solving (\ref{eq-qpca}),
which extends more readily to the extensions of PCA that we discuss below.
The \emph{alternating minimization} algorithm simply alternates between minimizing the
objective over the variable $X$, holding $Y$ fixed, and then minimizing over $Y$, holding $X$ fixed.
With an initial guess for the factors $Y^0$, we repeat the iteration
\begin{eqnarray*}
X^{l} &=& \argmin_X \left(\sum_{i=1}^m \sum_{j=1}^n(A_{ij} - x_i y^{l-1}_j)^2 +
\gamma\sum_{i=1}^m\|x_i\|_2^2\right) \\
Y^{l} &=& \argmin_Y \left(\sum_{i=1}^m \sum_{j=1}^n(A_{ij} - x^l_i y_j)_{ij})^2 +
\gamma\sum_{j=1}^n\|y_j\|_2^2\right)
\end{eqnarray*}
for $l=1,\ldots$ until a stopping condition is satisfied.
(If $X$ and $Y$ are full rank, or $\gamma>0$,
the minimizers above are unique; when they are not,
we can take any minimizer.)
The objective function is nonincreasing at each iteration,
and therefore bounded.
This implies, for $\gamma > 0$, 
that the iterates $X^l$ and $Y^l$ are bounded.

This algorithm does not always work.
In particular, it has stationary points that are not
solutions of problem (\ref{eq-qpca}).
In particular, if the rows of $Y^l$ lie in a subspace spanned by
a subset of the (right) singular vectors of $A$,
then the columns of $X^{l+1}$ will lie in a subspace spanned by the 
corresponding left singular vectors of $A$, and vice versa.
Thus, if the algorithm is initialized with $Y^0$ orthogonal to any 
of the top $k$ (right) singular vectors, 
then the algorithm (implemented in exact arithmetic) 
will not converge to the global solution to the problem.

But all \emph{stable} stationary points of the iteration are solutions
(see Appendix \ref{a-qpca}).
So as a practical matter, the alternating minimization method always works, \ie,
the objective converges to the optimal value.

\paragraph{Parallelizing alternating minimization.}
Alternating minimization parallelizes easily over examples and features. 
The problem of minimizing over $X$ splits into $m$ independent minimization
problems. We can solve the simple quadratic problems
\BEQ
\label{eq-x-update-qpca}
\begin{array}{ll}
\mbox{minimize} & \sum_{j=1}^n (A_{ij} - x_i y_j)^2 + \gamma\|x_i\|_2^2
\end{array}
\EEQ
with variable $x_i$, in parallel, for $i = 1, \ldots, m$.
Similarly, the problem of minimizing over $Y$
splits into $n$ independent quadratic problems,
\BEQ
\label{eq-y-update-qpca}
\begin{array}{ll}
\mbox{minimize} & \sum_{i=1}^m (A_{ij} - x_i y_j)^2 + \gamma\|y_j\|_2^2
\end{array}
\EEQ
with variable $y_j$, which can be solved in parallel for $j = 1, \ldots, n$.

\paragraph{Caching factorizations.}\label{s-caching}
We can speed up the solution of the quadratic problems using  
a simple factorization caching technique.

For ease of exposition, we assume here that $X$ and $Y$ have full rank $k$.
The updates (\ref{eq-x-update-qpca}) and (\ref{eq-y-update-qpca}) can be expressed as
\[
X = A Y^{T}(Y Y^{T} + \gamma I)^{-1}, \qquad Y = (X^{T} X + \gamma I)^{-1} X^{T} A.
\]
We show below how to efficiently compute $X = A Y^{T}(Y Y^{T} + \gamma I)^{-1}$;
the $Y$ update admits a similar speedup using the same ideas.
We assume here that $k$ is modest, say, not more than a few hundred or a few thousand.
(Typical values used in applications are often far smaller, on the order of tens.)
The dimensions $m$ and $n$, however, can be very large.

First compute the Gram matrix $G = YY^T$ using an outer product expansion
\[
G = \sum_{j=1}^n y_j y_j^T.
\] 
This sum can be computed on-line by streaming over the index $j$,
or in parallel, split over the index $j$.
This property allows us to scale up to extremely large problems 
even if we cannot store the entire matrix $Y$ in memory.
The computation of the Gram matrix requires $2k^2n$ floating point operations (flops),
but is trivially parallelizable: with $r$ workers, 
we can expect a speedup on the order of $r$.
We next add the diagonal matrix $\gamma I$ to $G$ in $k$ flops, 
and form the Cholesky factorization of $G + \gamma I$
in $k^3/3$ flops and cache the factorization. 

In parallel over the rows of $A$, 
we compute $D = A Y^T$ ($2kn$ flops per row),
and use the factorization of $G + \gamma I$ to compute $D (G + \gamma I)^{-1}$ 
with two triangular solves ($2k^2$ flops per row).
These computations are also trivially parallelizable: with $r$ workers, 
we can expect a speedup on the order of $r$.

Hence the total time required for each update with 
$r$ workers scales as $\mathcal O (\frac{k^2 (m+n) + kmn}{r})$.
For $k$ small compared to $m$ and $n$, the time is dominated by the computation of $A Y^T$.

\subsection{Missing data and matrix completion}\label{s-mc}
Suppose we observe only entries $A_{ij}$ for $(i,j) \in \Omega
\subset \{1,\ldots,m\}\times \{1,\ldots, n\}$ from the matrix $A$,
so the other entries are unknown.
Then to find a low rank matrix that fits the data well, we solve
the problem
\BEQ
\label{eq-mc}
\begin{array}{ll}
\mbox{minimize} & \sum_{(i,j) \in \Omega} 
(A_{ij} - x_i y_j)^2 + \gamma\|X\|_F^2 + \gamma\|Y\|_F^2,
\end{array}
\EEQ
with variables $X$ and $Y$, with $\gamma > 0$.
A solution of this problem gives an estimate $\hat A_{ij} = x_i y_j$ for the value
of those entries $(i,j) \not \in \Omega$ that were not observed.
In some applications, this data imputation (\ie, guessing entries of a matrix
that are not known) is the main point.

There are two very different regimes in which solving the problem~(\ref{eq-mc})
may be useful.

\paragraph{Imputing missing entries to borrow strength.}
Consider a matrix $A$ in which very few entries are missing.
The typical approach in data analysis 
is to simply remove any rows with missing entries from the matrix 
and exclude them from subsequent analysis.
If instead we solve the problem above \emph{without} removing these affected rows,
we ``borrow strength'' from the entries that are \emph{not} missing
to improve our global understanding of the data matrix $A$.
In this regime we are imputing the (few) missing entries of $A$,
using the examples that ordinarily we would discard.

\paragraph{Low rank matrix completion.}
Now consider a matrix $A$ in which most entries are missing,
\ie, we only observe relatively few of the $mn$ elements of $A$,
so that by discarding every example with a missing feature
or every feature with a missing example,
we would discard the \emph{entire} matrix. 
Then the solution to (\ref{eq-mc}) becomes even more interesting:
we are guessing all the entries of a (presumed low rank) matrix, given just a few of them.
It is a surprising fact that this is possible: 
typical results from the matrix completion literature
show that one can recover an unknown $m\times n$ matrix $A$ of low rank $r$ from
just about $nr \log_2 n$ noisy samples $\Omega$ with an error 
that is proportional to the noise level \cite{recht2008, candes2010, recht2010, plan2009},
so long as the matrix $A$ satisfies a certain incoherence condition
and the samples $\Omega$ are chosen uniformly at random.
These works use an estimator that minimizes a nuclear norm penalty 
along with a data fitting term to
encourage low rank structure in the solution.

The argument in \S\ref{s-optimality-certificate} shows that 
problem~(\ref{eq-mc}) is equivalent to the rank-constrained nuclear-norm regularized 
convex problem
\[
\begin{array}{ll}
\mbox{minimize} & \sum_{(i,j) \in \Omega} 
(A_{ij} - Z_{ij})^2 + 2\gamma\|Z\|_*\\
\mbox{subject to} & \rank(Z) \leq k,
\end{array}
\]
where the \emph{nuclear norm} $\|Z\|_*$ (also known as the trace norm)
is defined to be the sum of the singular values of $Z$.
Thus, the solutions to problem~(\ref{eq-mc})
correspond exactly to the solutions of these proposed estimators
so long as the rank $k$ of the model is chosen to be larger than the true rank $r$ of the matrix $A$.
Nuclear norm regularization is often used to encourage solutions of rank less than $k$,
and has applications ranging from graph embedding to
linear system identification \cite{fazel2004, liu2009, mohan2010, smith2012, osnaga2014}.

Low rank matrix completion problems arise in applications like predicting
customer ratings or customer (potential) purchases.
Here the matrix consists of the ratings or numbers of purchases that $m$ customers
give (or make) for each of $n$ products.   The vast majority of the entries in this
matrix are missing, since a customer will rate (or purchase) only a small fraction 
of the total number of products available.
In this application, imputing a missing entry of the matrix as $x_iy_j$, 
for $(i,j)\not\in \Omega$, is guessing
what rating a customer would give a product, if she were to rate it.
This can used as the basis for a recommendation system, or a marketing plan.

\paragraph{Alternating minimization.}
When $\Omega \neq \{1,\ldots, m \} \times \{1, \ldots, n\}$, 
the problem (\ref{eq-mc}) has no known analytical solution, 
but it is still easy to fit a model using alternating minimization. 
Algorithms based on alternating minimization have been shown to converge quickly 
(even geometrically \cite{jain2013}) 
to a global solution satisfying a recovery guarantee when the initial values 
of $X$ and $Y$ are chosen carefully 
\cite{keshavan2009, keshavan2010, keshavan2010regularization, jain2013, hardt2013, gunasekar2013}.

On the other hand, all of these analytical results rely on using a \emph{fresh} batch of samples $\Omega$
for each iteration of alternating minimization; none uses the quadratic regularizer
above that corresponds to the nuclear norm penalized estimator;
and interestingly, Hardt \cite{hardt2013} notes that none achieves the same sample complexity guarantees 
found in the convex matrix completion literature which, unlike the alternating minimization guarantees,
match the information theoretic lower bound \cite{candes2010} up to logarithmic factors.
For these reasons, it is plausible to expect that in practice using alternating minimization to solve problem~(\ref{eq-mc})
might yield a \emph{better} solution than the ``alternating minimization'' algorithms presented 
in the literature on matrix completion when suitably initialized 
(for example, using the method proposed below in \S\ref{s-initialization}). 
However, in general the method should be considered a heuristic.

\subsection{Interpretations and applications}\label{s-qpca-interpretation}

The recovered matrices $X$ and $Y$ in the quadratically regularized PCA problems
(\ref{eq-qpca}) and (\ref{eq-mc}) admit a number of interesting interpretations.
We introduce some of these interpretations now; the terminology we use here
will recur throughout the paper.
Of course these interpretations are related to each other, and not distinct.

\paragraph{Feature compression.}
Quadratically regularized PCA (\ref{eq-qpca}) can be 
interpreted as a method for compressing the
$n$ features in the original data set to $k<n$ new features.
The row vector $x_i$ is associated  with example $i$; we can think of it 
as a feature vector for the example using the compressed set of $k<n$ features.
The column vector $y_j$ is associated with the original feature $j$;
it can be interpreted as the mapping from the original feature $j$ into
the $k$ new features.

\paragraph{Low-dimensional geometric embedding.}
We can think of each $y_j$ as associating feature $j$ with a point in a 
low ($k$-) dimensional space.
Similarly, each $x_i$ associates example $i$ with a point in the
low dimensional space.
We can use these low dimensional vectors to judge which features (or examples) 
are similar. For example, we can run a clustering algorithm on the low dimensional
vectors $y_j$ (or $x_i$) to find groups of similar features (or examples).

\paragraph{Archetypes.}
We can think of each row of $Y$ as an \emph{archetype}
which captures the behavior of one of $k$ idealized and 
maximally informative examples. These archetypes might also be 
called profiles, factors, or atoms. 
Every example $i=1,\ldots,m$ is then represented (approximately) as a 
linear combination of these archetypes, with the row vector $x_i$ giving
the coefficients.
The coefficient $x_{il}$ gives the resemblance
or \emph{loading} of example $i$ to the $l$th archetype.

\paragraph{Archetypical representations.}
We call $x_i$ the \emph{representation} of example $i$ in terms of the archetypes.
The rows of $X$ give an embedding of the examples into $\reals^k$,
where each coordinate axis corresponds to a different archetype.
If the archetypes are simple to understand or interpret, then the representation
of an example can provide better intuition about that example.

The examples can be clustered according to their representations in order
to determine a group of similar examples.
Indeed, one might choose to apply any machine learning algorithm to the 
representations $x_i$ rather than to the initial data matrix:
in contrast to the initial data, which may consist of high dimensional vectors with
noisy or missing entries, the representations $x_i$ will be low dimensional,
less noisy, and complete.

\paragraph{Feature representations.}
The columns of $Y$ embed the features into $\reals^k$. 
Here, we think of the columns of $X$ as archetypical features, and represent
each feature $j$ as a linear combination of the archetypical features.
Just as with the examples, we might choose to apply any machine learning algorithm
to the feature representations.
For example, we might find clusters of similar features that represent redundant measurements.

\paragraph{Latent variables.}
Each row of $X$ represents an example by a vector in $\reals^k$. 
The matrix $Y$ maps these representations back into $\reals^m$.
We might think of $X$ as discovering the \emph{latent variables} that 
best explain the observed data.
If the approximation error $\sum_{(i,j) \in \Omega} (A_{ij} - x_i y_j)^2$ is small,
then we view these latent variables as providing a good explanation 
or summary of the full data set.

\paragraph{Probabilistic intepretation.} \label{s-qpca-interp-prob}
We can give a probabilistic interpretation of $X$ and $Y$,
building on the probabilistic model of PCA developed by Tipping and Bishop \cite{tipping1999}.
We suppose that the matrices $\bar X$ and $\bar Y$ have entries which are generated by taking 
independent samples from a normal distribution with mean 0 and variance
$\gamma^{-1}$ for $\gamma > 0$.
The entries in the matrix $\bar X \bar Y$ 
are observed with noise $\eta_{ij} \in \reals$,
\[
A_{ij} = (\bar X \bar Y)_{ij} + \eta_{ij},
\]
where the noise $\eta$ in the $(i,j)$th entry is sampled independently from a standard normal distribution.
We observe each entry $(i,j)\in\Omega$.
Then to find the maximum a posteriori (MAP) estimator $(X,Y)$ of $(\bar X, \bar Y)$, we solve
\[
\begin{array}{ll}
\mbox{maximize} & 
\exp\left(-\frac{\gamma}{2}\|\bar X\|_F^2\right) 
\exp\left(-\frac{\gamma}{2}\|\bar Y\|_F^2\right)
\prod_{(i,j)\in\Omega} \exp\left(- (A_{ij} - x_i y_j)^2\right),
\end{array}
\]
which is equivalent, by taking logs, to (\ref{eq-qpca}).

This interpretation explains the recommendation we gave above for imputing 
missing observations $(i,j)\not\in\Omega$.
We simply use the MAP estimator $x_i y_j$ to estimate the missing entry $(\bar X \bar Y)_{ij}$.
Similarly, we can interpret $(XY)_{ij}$ for $(i,j)\in\Omega$ as a denoised version of the observation $A_{ij}$.



\paragraph{Auto-encoder.}
The matrix $X$ encodes the data; the matrix $Y$ decodes it back into the full space.
We can view PCA as providing the best linear auto-encoder for the data;
among all (bi-linear) low rank encodings ($X$) and decodings ($Y$) of the data,
PCA minimizes the squared reconstruction error.

\paragraph{Compression.}
We impose an \emph{information bottleneck} \cite{tishby2000}
on the data by using a low rank auto-encoder to fit the data.
PCA finds $X$ and $Y$ to maximize the information transmitted through this $k$-dimensional
information bottleneck.
We can interpret the solution as a compressed representation of the data,
and use it to efficiently store or transmit the information present in the original data.

\subsection{Offsets and scaling}\label{s-qpca-scaling}

For good practical performance of a generalized low rank model, it is critical
to ensure that model assumptions match the data. 
We saw above in \S\ref{s-qpca-interp-prob} that quadratically regularized PCA
corresponds to a model in which features are observed with $\mathcal N (0,1)$ errors.
If instead each column $j$ of $XY$ is observed with $\mathcal N (\mu_j, \sigma_j^2)$ errors,
our model is no longer unbiased,
and may fit very poorly, particularly if some of the column means $\mu_j$ are large.

For this reason it is standard practice 
to \emph{standardize} the data before appplying PCA or
quadratically regularized PCA:
the column means are subtracted from each column, and the columns
are normalized by their variances.
(This can be done approximately; there is no need to get the scaling and offset 
exactly right.) 
Formally, define $n_j = |\{i: (i,j) \in \Omega\}|$,
and let
\[
\mu_j = \frac{1}{n_j} \sum_{(i,j) \in \Omega} A_{ij}, \qquad \sigma_j^2 =
\frac{1}{n_j - 1} \sum_{(i,j) \in \Omega} (A_{ij} - \mu_j)^2
\]
estimate the mean and variance of each column of the data matrix.
PCA or quadratically regularized PCA 
is then applied to the matrix whose $(i,j)$ entry 
is $(A_{ij} - \mu_j)/\sigma_j$.

\section{Generalized regularization}\label{s-rpca}

It is easy to see how to extend PCA to allow arbitrary regularization
on the rows of $X$ and columns of $Y$. We form the \emph{regularized PCA problem}
\BEQ
\label{eq-rpca}
\begin{array}{ll}
\mbox{minimize} & \sum_{(i,j) \in \Omega} (A_{ij} - x_i y_j)^2 
+ \sum_{i=1}^m r_i(x_i) + \sum_{j=1}^n \tilde r_j(y_j),
\end{array}
\EEQ
with variables $x_i$ and $y_j$, with given regularizers
$r_i: \reals^k \to \reals \cup \{\infty\}$ and 
$\tilde r_j: \reals^k \to \reals \cup \{\infty\}$ for $i=1,\ldots, n$ and $j=1,\ldots,m$.
Regularized PCA (\ref{eq-rpca}) reduces to quadratically regularized PCA
(\ref{eq-qpca}) when $r_i = \gamma \|\cdot\|_2^2$, $\tilde r_j = \gamma
\|\cdot\|_2^2$. We do not restrict the regularizers to be convex.

The objective in problem~(\ref{eq-rpca}) can be expressed compactly in matrix notation as
\[
\|A-XY\|_F^2 + r(X) + \tilde r (Y),
\]
where $r(X) = \sum_{i=1}^n r(x_i)$ and $\tilde r(Y) = \sum_{j=1}^n \tilde r(y_j)$.
The regularization functions $r$ and $\tilde r$ are separable across
the rows of $X$, and the columns of $Y$, respectively.

Infinite values of $r_i$ and $\tilde r_j$ are used to enforce 
constraints on the values
of $X$ and $Y$. For example, the regularizer
\[
r_i(x) = \left\{\begin{array}{ll}
0 & x \geq 0 \\
\infty & \mathrm{otherwise},
\end{array}\right.
\]
the indicator function of the nonnegative orthant, 
imposes the constraint that $x_i$ be nonnegative.

Solutions to (\ref{eq-rpca}) need not be unique, depending on
the choice of regularizers.  If $X$ and $Y$ are a solution,
then so are $XT$ and $T^{-1}Y$, where $T$ is any nonsingular matrix
that satisfies $r(UT)=r(U)$ for all $U$ and
$\tilde r(T^{-1}V)=r(V)$ for all $V$.

By varying our choice of regularizers $r$ and $\tilde r$, 
we are able to represent a wide range of known models, as well as many new ones.
We will discuss a number of choices for regularizers below, but turn now to
methods for solving the regularized PCA problem (\ref{eq-rpca}).

\subsection{Solution methods}
In general, there is no analytical solution for (\ref{eq-rpca}).
The problem is not convex, even when $r$ and $\tilde r$ are convex.
However, when $r$ and $\tilde r$ are convex, the problem is bi-convex:
it is convex in $X$ when $Y$ is fixed, and convex in $Y$ when $X$ is fixed.

\paragraph{Alternating minimization.} There is no reason to believe that 
alternating minimization will always converge to the global minimum 
of the regularized PCA problem (\ref{eq-rpca}).
Indeed, we will see many cases below in which the problem is known to have many local minima.
However, alternating minimization can still be applied in this setting,
and it still parallelizes over the rows of $X$ and columns of $Y$. 
To minimize over $X$, we solve, in parallel,
\BEQ
\label{eq-x-update-rpca}
\begin{array}{ll}
\mbox{minimize} & \sum_{j: (i,j) \in \Omega} (A_{ij} - x_i y_j)^2 + r_i(x_i)
\end{array}
\EEQ
with variable $x_i$, for $i = 1, \ldots, m$.
Similarly, to minimize over $Y$, we solve, in parallel,
\BEQ
\label{eq-y-update-rpca}
\begin{array}{ll}
\mbox{minimize} & \sum_{i: (i,j) \in \Omega} (A_{ij} - x_i y_j)^2 + \tilde r_j(y_j)
\end{array}
\EEQ
with variable $y_j$, for $j = 1, \ldots, n$.

When the regularizers are convex, these problems are convex.
When the regularizers are not convex,
there are still many cases in which we can find analytical solutions to the nonconvex subproblems 
(\ref{eq-x-update-rpca}) and (\ref{eq-y-update-rpca}), as we will see below.
A number of concrete algorithms, in which these subproblems are solved explicitly,
are given in \S\ref{s-algorithms}.

\paragraph{Caching factorizations.}
Often, the $X$ and $Y$ updates (\ref{eq-x-update-rpca}) and (\ref{eq-y-update-rpca})
reduce to convex quadratic programs.
For example, this is the case for nonnegative matrix factorization,
sparse PCA, and quadratic mixtures 
(which we define and discuss below in \S\ref{s-greg-ex}).
The same factorization caching of the Gram matrix that was described above in the
case of PCA can be used here to speed up the solution of these updates.
Variations on this idea are described in detail in \S\ref{s-alg-qp}.

\subsection{Examples} \label{s-greg-ex}
Here and throughout the paper, we present a set of examples chosen
for pedagogical clarity, not for completeness. 
In all of the examples below, $\gamma>0$ is a parameter that controls
the strength of the regularization,
and we drop the subscripts from $r$ (or $\tilde r$) to lighten 
the notation.
Of course, it is possible to mix and match these regularizers,
\ie, to choose different $r_i$ for different $i$, and
choose different $\tilde r_j$ for different $j$.

\paragraph{Nonnegative matrix factorization (NNMF).}
Consider the regularized PCA problem (\ref{eq-rpca}) with 
$r = I_+$ and $\tilde r=I_+$, where $I_+$ is
the indicator function of the nonnegative reals.
(Here, and throughout the paper, we define the indicator function of a set $C$, 
to be $0$ when its argument is in $C$ and $\infty$ otherwise.) 
Then problem~(\ref{eq-rpca}) is NNMF: a solution gives the matrix best approximating $A$
that has a nonnegative factorization (\ie, a factorization into 
elementwise nonnegative matrices) \cite{lee1999}.
It is NP-hard to solve NNMF problems exactly \cite{vavasis2009}.
However, these problems have a rich analytical structure which can sometimes be exploited 
\cite{gillis2011thesis,bittorf2012, damle2014},
and a wide range of uses in practice \cite{lee1999, shahnaz2006, berry2007, virtanen2007, kim2007, fevotte2009}.
Hence a number of specialized algorithms and codes for fitting NNMF models are available 
\cite{lee2001nonneg, lin2007, kim2008nonneg, kim2008toward, SmallK, kim2014, kim2011}.

We can also replace the nonnegativity constraint with any interval constraint.
For example, $r$ and $\tilde r$ can be 0 if all entries of $X$ and $Y$,
respectively, are between $0$ and $1$, and infinite otherwise.

\paragraph{Sparse PCA.}
If very few of the coefficients of $X$ and $Y$ are nonzero, it can be easier
to interpret the archetypes and representations.
We can understand each archetype using only a small number of features,
and can understand each example as a combination of 
only a small number of archetypes.
To get a sparse version of PCA, we use a sparsifying penalty as
the regularization.
Many variants on this basic idea have been proposed,
together with a wide variety of algorithms
\cite{daspremont2004,zou2006,shen2008,mackey2008,witten2009,richtarik2012,vu2013}.

For example, we could enforce that no entry $A_{ij}$
depend on more than $s$ columns of $X$ or of $Y$ by setting $r$ to be the
indicator function of a $s$-sparse vector, \ie, 
\[
r (x) = \left\{\begin{array}{ll} 0 & \Card (x) \leq s \\
\infty & \mbox{otherwise},
\end{array}\right.
\]
and defining $\tilde r(y)$ similarly, 
where $\Card (x)$ denotes the cardinality (number of nonzero entries) in the vector $x$.
The updates (\ref{eq-x-update-rpca}) and (\ref{eq-y-update-rpca}) are not convex
using this regularizer,
but one can find approximate solutions using a pursuit algorithm
(see, \eg, \cite{chen1998,tropp2007}),
or exact solutions (for small $s$) using the branch and bound method 
\cite{lawler1966, boyd2003}.

As a simple example, consider $s=1$.   Here we insist that
each $x_i$ have at most one nonzero entry, which means that each example is
a multiple of \emph{one} of the rows of $Y$.
The $X$-update is easy to carry out, by evaluating the 
best quadratic fit of $x_i$ with each of the $k$ rows of $Y$.  This reduces 
to choosing the row of $Y$ that has the smallest angle to the $i$th row of $A$.

The $s$-sparse regularization can be relaxed to a convex, but still sparsifying, 
regularization using $r(x) = \|x\|_1$, $\tilde r(y) = \|y\|_1$ \cite{zou2006}.
In this case, the $X$-update reduces to solving a (small) $\ell_1$-regularized 
least-squares problem.

\paragraph{Orthogonal nonnegative matrix factorization.}
One well known property of PCA is that the principal components obtained 
(\ie, the columns of $X$ and rows of $Y$) can be chosen to be orthogonal,
so $X^T X$ and $Y Y^T$ are both diagonal.
We can impose the same condition on a nonnegative matrix factorization.
Due to nonnegativity of the matrix, two columns of $X$
cannot be orthogonal if they both have a nonzero in the same row.
Conversely, if $X$ has only one nonzero per row, then its columns
are mutually orthogonal.
So an orthogonal nonnegative matrix factorization is identical to
to a nonnegativity condition in addition to the $1$-sparse
condition described above.
Orthogonal nonnegative matrix factorization can be achieved by using the regularizer
\[
r (x) = \left\{\begin{array}{ll}
    0 & \card (x) = 1, \quad x \geq 0\\
    \infty & \mbox{otherwise},
\end{array}\right.
\]
and letting $\tilde r(y)$ be the indicator of the nonnegative orthant,
as in NNMF.

Geometrically, we can interpret this problem as modeling the data
$A$ as a union of rays. Each row of $Y$, interpreted as a point in $\reals^n$, defines a ray from the origin passing through that point.
Orthogonal nonnegative matrix factorization models each row of $X$ as 
a point along one of these rays.

Some authors \cite{ding2006} have also considered how to obtain a \emph{bi-orthogonal}
nonnegative matrix factorization, in which both $X$ and $Y^T$ have orthogonal columns.
By the same argument as above, we see this is equivalent to requiring
both $X$ and $Y^T$ to have only one positive entry per row, 
with the other entries equal to 0.

\paragraph{Max-norm matrix factorization.}
We take $r = \tilde r = \phi$ with
\[
\phi(x) = \left\{\begin{array}{ll}
0 & \|x\|^2_2 \leq \mu\\
\infty & \mathrm{otherwise}.
\end{array}\right. 
\]
This penalty enforces that
\[
\|X\|_{2,\infty}^2 \leq \mu, \qquad \|Y^T\|_{2,\infty}^2 \leq \mu,
\]
where the $(2,\infty)$ norm of a matrix $X$ with rows $x_i$ is defined as $\max_i \|x_i\|_2$.
This is equivalent to requiring the 
\emph{max-norm} (sometimes called the \emph{$\gamma_2$-norm}) of $Z = XY$,
which is defined as 
\[
\|Z\|_{\mathrm{max}} = \inf\{\|X\|_{2,\infty} \|Y^T\|_{2,\infty}: XY = Z\},
\]
to be bounded by $\mu$.
This penalty has been proposed by \cite{lee2010} as a heuristic 
for low rank matrix completion, which can perform better than Frobenius norm regularization when
the low rank factors are known to have bounded entries.

\paragraph{Quadratic clustering.} \label{s-kmeans}
Consider (\ref{eq-rpca}) with
$\tilde r = 0$. Let $r$ be the indicator function of a selection, \ie, 
\[
r (x) = \left\{\begin{array}{ll}
    0 & x = e_l~\mbox{for some}~l \in \{1,\ldots,k\} \\
    \infty & \mbox{otherwise},
\end{array}\right.
\]
where $e_l$ is the $l$-th standard basis vector.
Thus $x_i$ encodes the cluster (one of $k$) to which the data vector $(A_{i1},
\ldots, A_{im})$ is assigned. 

Alternating minimization on this problem reproduces 
the well-known $k$-means algorithm (also known as Lloyd's algorithm) \cite{lloyd1982}.
The $y$ update (\ref{eq-y-update-rpca}) is a least squares problem with the simple solution
\[
Y_{lj} = \frac{\sum_{i: (i,j) \in \Omega} A_{ij} X_{il}}{\sum_{i: (i,j) \in \Omega} X_{il}} ,
\]
\ie, each row of $Y$ is updated to be the mean of the rows of 
$A$ assigned to that archetype.
The $x$ update (\ref{eq-x-update-rpca}) is not a convex problem, but is easily solved.
The solution is given by assigning $x_i$ to the closest
archetype (often called a \emph{cluster centroid} in the context of $k$-means): 
$x_i = e_{l^\star}$ for $l^\star = \argmin_l \left( \sum_{j=1}^n 
(A_{ij} - Y_{lj})^2\right)$.


\paragraph{Quadratic mixtures.}
We can also implement partial assignment of data vectors to clusters.
Take $\tilde r = 0$, and let
$r$ be the indicator function of the set of probability vectors, \ie,
\[
r (x) = \left\{\begin{array}{ll}
    0 & \sum_{l = 1}^k x_{l} = 1, \quad x_{l} \geq 0 \\
    \infty & \mbox{otherwise}.
\end{array}\right.
\]

 
\paragraph{Subspace clustering.}
PCA approximates a data set by a single low dimensional subspace. 
We may also be interested in approximating a data set as a \emph{union}
of low dimensional subspaces.
This problem is known as \emph{subspace clustering} (see \cite{vidal2010} and references therein).
Subspace clustering may also be thought of as generalizing quadratic clustering 
to assign each data vector to a low dimensional subspace rather than to a single cluster centroid.

To frame subspace clustering as a regularized PCA problem (\ref{eq-rpca}),
partition the columns of $X$ into $k$ blocks. 
Then let $r$ be the indicator function of block sparsity
(\ie, $r(x) = 0$ if only one block of $x$ has nonzero entries, and 
otherwise $r(x) = \infty$).

It is easy to perform alternating minimization on this objective function.
This method is sometimes called the \emph{$k$-planes} algorithm 
\cite{vidal2010,tseng2000,agarwal2004}, 
which alternates over assigning examples to subspaces, and fitting the subspaces to the examples.
Once again, the $X$ update (\ref{eq-x-update-rpca}) is not a convex problem, but 
can be easily solved.
Each block of the columns of $X$ defines a subspace 
spanned by the corresponding rows of $Y$.
We compute the distance from example $i$ (the $i$th row of $A$) to each subspace
(by solving a least squares problem),
and assign example $i$ to the subspace that minimizes the least squares error
by setting $x_i$ to be the solution to the corresponding least squares problem.

Many other algorithms for this problem have also been proposed, 
such as the $k$-SVD \cite{tropp2004thesis,aharon2006}
and sparse subspace clustering \cite{elhamifar2009},
some with provable guarantees on the quality of the recovered solution \cite{soltanolkotabi2012}.


\paragraph{Supervised learning.}
Sometimes we want to understand the variation that a certain set of features can explain, and the variance that remains unexplainable.
To this end, one natural strategy would be to regress the labels in the dataset on the features; to subtract the predicted values
from the data; and to use PCA to understand the remaining variance. 
This procedure gives the same answer as the solution to a single regularized PCA problem.
Here we present the case in which the features we wish to use in the 
regression are present in the data as the first column of $A$.
To construct the regularizers, we make sure the first column of 
$A$ appears as a feature in the supervised learning problem by setting
\[
r_i(x) = \left\{\begin{array}{ll}
r_0(x_2, \ldots, x_{k+1}) & x_1 = A_{i1} \\
\infty & \mathrm{otherwise},
\end{array} \right . 
\]
where $r_0 = 0$ can be chosen as in any regularized PCA model.
The regularization on the first row of $Y$ is the regularization
used in the supervised regression, and the regularization on the 
other rows will be that used in regularized PCA.

Thus we see that regularized PCA can naturally combine supervised and unsupervised learning into a single problem.

\paragraph{Feature selection.}
We can use regularized PCA to perform feature selection.
Consider (\ref{eq-rpca}) with $r(x) = \|x\|_2^2$ and $\tilde r (y) = \|y\|_2$.
(Notice that we are \emph{not} using  $\|y\|_2^2$.)
The regularizer $\tilde r$ encourages the matrix $\tilde Y$ to be column-sparse, 
so many columns are all zero. 
If $\tilde y_j = 0$, it means that feature $j$ was uninformative, in the
sense that its values do not help much in predicting any feature in the matrix
$A$ (including feature $j$ itself). In this case we say that feature $j$ was not
selected.
For this approach to make sense, it is important that the columns of the matrix $A$ should
have mean zero. 
Alternatively, one can use the de-biasing regularizers $r'$ and $\tilde r'$ 
introduced in \S\ref{s-rpca-offset} along with the feature selection regularizer introduced here.

\paragraph{Dictionary learning.}
Dictionary learning (also sometimes called \emph{sparse coding}) 
has become a popular method to design concise representations
for very high dimensional data 
\cite{olshausen1997,lee2006efficient,mairal2009a,mairal2009b}. 
These representations have been shown to perform well when used as features
in subsequent (supervised) machine learning tasks \cite{raina2007}.
In dictionary learning, each row of $A$ is modeled as a linear combination of dictionary atoms,
represented by rows of $Y$.
The total size of the dictionary used is often very large ($k \gg \max(m,n)$),
but each example is represented using a very small number of atoms.
To fit the model, one solves the regularized PCA problem (\ref{eq-rpca}) with
$r(x) = \|x\|_1$, to induce sparsity in the number of atoms used
to represent any given example,
and with $\tilde{r}(y) = \|y\|_2^2$ or $\tilde{r}(y) = I_+(c - \|y\|_2)$ for some $c>0 \in \reals$,
in order to ensure the problem is well posed.
(Note that our notation transposes the usual notation 
in the literature on dictionary learning.)

\paragraph{Mix and match.}
It is possible to combine these regularizers to obtain a factorization with
any combination of the above properties.
As an example, one may require that both $X$ and $Y$ 
be simultaneously sparse and nonnegative
by choosing 
\[
r(x) = \|x\|_1 + I_+(x) = \ones^T x + I_+(x),
\]
and similarly for $\tilde r(y)$.
Similarly, \cite{kim2007} show how to obtain a nonnegative matrix factorization
in which one factor is sparse
by using $r(x) = \|x\|^2_1 + I_+(x)$ and $\tilde r(y) = \|y\|^2_2 + I_+(y)$;
they go on to use this factorization as a clustering technique.

\subsection{Offsets and scaling} \label{s-rpca-offset}
In our discussion of the quadratically regularized PCA problem (\ref{eq-qpca}),
we saw that it can often be quite important to standardize the data
before applying PCA.
Conversely, in regularized PCA problems such as nonnegative matrix factorization,
it makes no sense to standardize the data, since subtracting column means
introduces negative entries into the matrix.

A flexible approach is to allow an offset in the model: we solve
\BEQ
\label{eq-orpca}
\begin{array}{ll}
\mbox{minimize} & \sum_{(i,j) \in \Omega} (A_{ij} - x_i y_j - \mu_j)^2 
+ \sum_{i=1}^m r_i(x_i) + \sum_{j=1}^n \tilde r_j(y_j),
\end{array}
\EEQ
with variables $x_i$, $y_j$, and $\mu_j$.
Here, $\mu_j$ takes the role of the column mean, and in fact will be equal to the 
column mean in the trivial case $k=0$.

An offset may be included in the standard form
regularized PCA problem (\ref{eq-rpca}) by augmenting the problem slightly. 
Suppose we are given an instance of the problem (\ref{eq-rpca}), \ie, we are given $k$, $r$, and $\tilde r$.
We can fit an offset term $\mu_j$ by letting $k'=k+1$
and modifying the regularizers.
Extend the regularization $r: \reals^{k}\to\reals$ and $\tilde r: \reals^{k}\to\reals$ to new regularizers
$r': \reals^{k+1}\to\reals$ and $\tilde r': \reals^{k+1}\to\reals$ which enforce that 
the first column of $X$ is constant and the first row of $Y$ is not penalized. 
Using this scheme, the first row of the optimal $Y$ will be equal to the optimal $\mu$ in (\ref{eq-orpca}).

Explicitly, let
\[
r'(x)=\left\{\begin{array}{ll}
r(x_2, \ldots, x_{k+1}) & x_1 = 1 \\
\infty & \mathrm{otherwise},
\end{array} \right . 
\]
and $\tilde r'(y) = \tilde r(y_2, \ldots, y_{k+1})$.
(Here, we identify $r(x) = r(x_1, \ldots, x_k)$ to explicitly show the dependence 
on each coordinate of the vector $x$,
and similarly for $\tilde r$.)

It is also possible to introduce row offsets in the same way.


\section{Generalized loss functions}\label{s-gpca}

We may also generalize the \emph{loss} function in PCA to form a \emph{generalized low rank model},
\BEQ
\label{eq-gpca}
\begin{array}{ll}
\mbox{minimize} & \sum_{(i,j) \in \Omega} L_{ij}(x_i y_j, A_{ij}) 
+ \sum_{i=1}^m r_i(x_i) + \sum_{j=1}^n \tilde r_j(y_j),
\end{array}
\EEQ
where $L_{ij}:\reals \times \reals \to \reals_+$ are given loss functions 
for $i=1,\ldots,m$ and $j=1,\ldots,n$.
Problem (\ref{eq-gpca}) reduces to PCA with generalized regularization
when $L_{ij}(u,a) = (a-u)^2$.
However, the loss function $L_{ij}$ can now depend on the data $A_{ij}$
in a more complex way. 

\subsection{Solution methods}
As before, problem~(\ref{eq-gpca}) is not convex, 
even when $L_{ij}$, $r_i$ and $\tilde r_j$ are convex; 
but if all these functions are convex, then
the problem is bi-convex.

\paragraph{Alternating minimization.}
Alternating minimization can still be used to find a local minimum,
and it is still often possible to use factorization caching to speed up the solution
of the subproblems that arise in alternating minimization.
We defer a discussion of how to solve these subproblems explicitly
to \S\ref{s-algorithms}.

\paragraph{Stochastic proximal gradient method.}
For use with extremely large scale problems, we discuss fast variants of the basic
alternating minimization algorithm in \S\ref{s-algorithms}.
For example, we present an alternating directions 
stochastic proximal gradient method.
This algorithm accesses the functions $L_{ij}$, $r_i$, and $\tilde r_j$ only through
a subgradient or proximal interface, allowing it to generalize trivially
to nearly any loss function and regularizer.
We defer a more detailed discussion of this method to \S\ref{s-algorithms}.

\subsection{Examples}

\paragraph{Weighted PCA.}
A simple modification of the PCA objective is to weight the importance of fitting each
element in the matrix $A$. In the generalized low rank model, 
we let $L_{ij}(u-a) = w_{ij} (a-u)^2$, where
$w_{ij}$ is a weight, and take $r = \tilde r = 0$. 
Unlike PCA, the weighted PCA problem has no known analytical solution \cite{srebro2003}.
In fact, it is NP-hard to find an exact solution to weighted PCA \cite{gillis2011},
although it is not known whether it is always possible to find
approximate solutions of moderate accuracy efficiently.

\paragraph{Robust PCA.}
Despite its widespread use, PCA is very sensitive to outliers. 
Many authors have proposed a robust version of PCA
obtained by replacing least-squares loss with $\ell_1$ loss,
which is less sensitive to large outliers
\cite{candes2011,wright2009,xu2012}.
They propose to solve the problem
\BEQ
\label{eq-robust-pca}
\begin{array}{ll}
\mbox{minimize} & \|S\|_1 + \|Z\|_*\\
\mbox{subject to} & S+Z = A.
\end{array}
\EEQ
The authors interpret $Z$ as a robust version of the 
principal components of the data matrix $A$,
and $S$ as the sparse, possibly large noise corrupting the observations.

We can frame robust PCA as a GLRM in the following way.
If $L_{ij}(u,a) = |a - u|$, and $r(x) = \frac{\gamma}{2}\|x\|_2^2$,
$\tilde r(y) = \frac{\gamma}{2}\|y\|_2^2$, then (\ref{eq-gpca}) becomes
\[
\begin{array}{ll}
\mbox{minimize} & \|A - XY\|_1 
+ \frac{\gamma}{2}\|X\|_F^2 + \frac{\gamma}{2}\|Y\|_F^2.
\end{array}
\]
Using the arguments in \S\ref{s-optimality-certificate}, 
we can rewrite the problem by introducing a new variable $Z = XY$ as
\[
\begin{array}{ll}
    \mbox{minimize} & \|A - Z\|_1 + \gamma \|Z\|_* \\
    \mbox{subject to} & \rank(Z) \leq k.
\end{array}
\]
This results in a rank-constrained version of the estimator
proposed in the literature on robust PCA \cite{wright2009,candes2011,xu2012}:
\[ 
\begin{array}{ll}
    \mbox{minimize} & \|S\|_1 + \gamma \|Z\|_* \\
	\mbox{subject to} & S + Z = A \\
    & \rank (Z) \leq k,
\end{array}
\]
where we have introduced the new variable $S=A-Z$.

\paragraph{Huber PCA.}
The Huber function is defined as
\[
\huber(x) = \left\{ \begin{array}{ll}
\half x^2 & |x| \leq 1 \\
|x| - \half & |x| > 1.
\end{array} \right .
\]
Using Huber loss,
\[
L(u,a) = \huber(u-a),
\]
in place of $\ell_1$ loss also yields an
estimator robust to occasionaly large outliers \cite{huber1981}. 
The Huber function is less sensitive to small errors 
$|u-a|$ than the $\ell_1$
norm, but becomes linear in the error for large errors.
This choice of loss function results in a generalized low rank model formulation that is robust
both to large outliers and to small Gaussian perturbations in the data.

Previously, the problem of Gaussian noise in robust PCA has been treated by
decomposing the matrix $A = L+S+N$ into a low rank matrix $L$, 
a sparse matrix $S$, and a matrix with small Gaussian entries $N$
by minimizing the loss 
\[
\|L\|_* + \|S\|_1 + \half\|N\|_F^2
\]
over all decompositions $A = L+S+N$ of $A$ \cite{xu2012}.

In fact, this formulation is equivalent to Huber PCA
with quadratic regularization on the factors $X$ and $Y$.
The argument showing this is very similar to the one we made above for robust PCA.
The only added ingredient is the observation that 
\[
\huber(x) = \inf\{|s| + \half n^2: x = n+s\}.
\]
In other words, the Huber function is the infimal convolution
of the negative log likelihood of a gaussian random variable and 
a laplacian random variable:
it represents the most likely assignment of (additive) blame for 
the error $x$ to a gaussian error $n$ and a laplacian error $s$.

\paragraph{Robust regularized PCA.}
We can design robust versions of all the regularized PCA problems above 
by the same transformation we used to design robust PCA.
Simply replace the quadratic loss function with an $\ell_1$ or Huber loss function.
For example, $k$-mediods \cite{kaufman2009,park2009} is obtained by 
using $\ell_1$ loss in place of quadratic loss in the quadratic clustering problem.
Similarly, robust subspace clustering \cite{soltanolkotabi2013}
can be obtained by using an $\ell_1$ or Huber penalty in the subspace clustering problem.

\paragraph{Quantile PCA.}
For some applications, it can be much worse to \emph{overestimate}
the entries of $A$ than to \emph{underestimate} them, or vice versa.
One can capture this asymmetry by using the loss function
\[
L (u,a) = \alpha(a - u)_+ + (1-\alpha)(u - a)_+
\] 
and choosing $\alpha \in (0,1)$ appropriately.
This loss function is sometimes called a \emph{scalene} loss,
and can be interpreted as performing \emph{quantile regression}, 
\eg, fitting the 20th percentile \cite{koenker1978,koenker2005}.

\paragraph{Fractional PCA.}
For other applications, we may be interested in finding an approximation
of the matrix $A$ whose entries are close to the original matrix on a relative,
rather than an absolute, scale.
Here, we assume the entries $A_{ij}$ are all positive.
The loss function
\[
L (u,a) = \max\left(\frac{a-u}{u},\frac{u-a}{a}\right)
\]
can capture this objective.
A model $(X,Y)$ with objective value less than $0.10 mn$ gives
a low rank matrix $XY$ that is on average within 10\% of the original matrix.

\paragraph{Logarithmic PCA.}
Logarithmic loss functions may also useful for finding an approximation of $A$
that is close on a relative, rather than absolute, scale.
Once again, we assume all entries of $A$ are positive.
Define the logarithmic loss
\[
L(u,a) = \log^2 (u/a).
\]
This loss is \emph{not} convex, but has the nice property that it fits the 
\emph{geometric mean} of the data:
\[
\argmin_u \sum_i L(u,a_i) = (\prod_i a_i)^{1/n}.
\]
To see this, note that we are solving a least squares problem in log space.
At the solution, $\log(u)$ will be the mean of $\log(a_i)$, \ie,
\[
\log(u) = 1/n \sum_i \log(a_i) = \log \left((\prod_i a_i)^{1/n} \right).
\]

\paragraph{Exponential family PCA.}\label{s-exp-pca}

It is easy to formulate a version of PCA corresponding to any loss
in the exponential family.
Here we give some interesting loss functions generated by exponential families
when all the entries $A_{ij}$ are positive.
(See \cite{collins2001} for a general treatment of exponential family PCA.)
One popular loss function in the exponential family is the KL-divergence loss,
\[
L (u,a) = a \log\left(\frac{a}{u}\right) - a + u,
\]
which corresponds to a Poisson generative model \cite{collins2001}.

Another interesting loss function is the Itakura-Saito (IS) loss, 
\[
L (u,a) = \log\left(\frac{a}{u}\right) - 1 + \frac{a}{u},
\]
which has the property that it is scale invariant, so scaling $a$ and $u$ by the 
same factor produces the same loss \cite{sun2014}.
The IS loss corresponds to \emph{Tweedie} distributions 
(\ie, distributions for which the variance is some power of the mean) \cite{tweedie1984}.
This makes it interesting in applications, such as audio processing,
where fractional errors in recovery are perceived.

The $\beta$-divergence,
\[
L (u,a) = \frac{a^\beta}{\beta(\beta-1)} + \frac{u^\beta}{\beta} - \frac{au^{\beta-1}}{\beta-1},
\]
generalizes both of these losses. With $\beta=2$, we recover quadratic loss;
in the limit as $\beta \to 1$, we recover the KL-divergence loss;
and in the limit as $\beta \to 0$, we recover the IS loss \cite{sun2014}.

\subsection{Offsets and scaling}\label{s-gpca-scaling}
In \S\ref{s-qpca-scaling}, we saw how to use standardization to rescale the data 
in order to compensate for unequal scaling in different features.
In general, standardization destroys sparsity in the data by subtracting the (column) means
(which are in general non-zero) from each element of the data matrix $A$.
It is possible to instead rescale the \emph{loss functions} in order to compensate for unequal scaling.
Scaling the loss functions instead has the advantage that no arithmetic is performed directly
on the data $A$, so sparsity in $A$ is preserved.

A savvy user may be able to select loss functions $L_{ij}$ that are scaled 
to reflect the importance of fitting different columns.
However, it is useful to have a default automatic scaling for times 
when no savvy user can be found.
The scaling proposed here generalizes the idea of standardization to a setting with heterogeneous loss functions.

Given initial loss functions $L_{ij}$, which we assume are nonnegative,
for each feature $j$ let 
\[
\mu_j = \argmin_\mu \sum_{i: (i,j) \in \Omega} L_{ij}(\mu, A_{ij}), \qquad
\sigma^2_j = \frac{1}{n_j - 1} \sum_{i: (i,j) \in \Omega} L_{ij}(\mu_j, A_{ij}).
\]
It is easy to see that $\mu_j$ generalizes the mean of column $j$,
while $\sigma^2_j$ generalizes the column variance.
For example, when $L_{ij}(u, a) = (u-a)^2$ for every $i=1,\ldots,m$, $j=1,\ldots,n$, 
$\mu_j$ is the mean and $\sigma^2_j$ is the sample variance of the $j$th column of $A$.
When $L_{ij}(u, a) = |u-a|$ for every $i=1,\ldots,m$, $j=1,\ldots,n$, 
$\mu_j$ is the median of the $j$th column of $A$,
and $\sigma^2_j$ is the sum of the absolute values of the deviations of 
the entries of the $j$th column from the median value.

To fit a standardized GLRM, we rescale the loss functions by $\sigma^2_j$ and solve
\BEQ
\label{eq-sgpca}
\begin{array}{ll}
\mbox{minimize} & \sum_{(i,j) \in \Omega} 
L_{ij}(A_{ij}, x_i y_j + \mu_j) /\sigma^2_j
+ \sum_{i=1}^m r_i(x_i) + \sum_{j=1}^n \tilde r_j(y_j).
\end{array}
\EEQ
Note that this problem can be recast in the standard form for a generalized low rank model 
(\ref{eq-gpca}).
For the offset, we may use the same trick described in \S\ref{s-rpca-offset} to encode the offset in the regularization;
and for the scaling, we simply replace the original loss function $L_{ij}$ 
by $L_{ij}/\sigma^2_j$.

\section{Loss functions for abstract data types}\label{s-apca}
We began our study of generalized low rank modeling by considering 
the best way to approximate a matrix by another matrix of lower rank.
In this section, we apply the same procedure to approximate a data table
that may not consist of real numbers, 
by choosing a loss function that respects the data type.

We now consider $A$ to be a \emph{table}
consisting of $m$ examples (\ie, rows, samples) 
and $n$ features (\ie, columns, attributes), with each entry $A_{ij}$
drawn from a feature set $\mathcal F_j$.
The feature set $\mathcal F_j$ may be discrete or continuous.
So far, we have only considered numerical data
($\mathcal F_j = \reals$ for $j = 1, \ldots, n$), 
but now $\mathcal F_j$ can represent more abstract data types. 
For example, entries of $A$ can take on 
Boolean values ($\mathcal F_j = \{T, F\}$), 
integral values ($\mathcal F_j = 1,2,3,\ldots$),
ordinal values ($\mathcal F_j = \{\mbox{very much}, \mbox{a little}, \mbox{not at all}\}$),
or consist of a tuple of these types ($\mathcal F_j = \{(a,b): a \in \reals\}$).

We are given a loss function $L_{ij}: \reals \times \mathcal F_j \to \reals$.
The loss $L_{ij}(u, a)$ describes the approximation error incurred when 
we represent a feature value $a \in \mathcal F_j$ by the number $u \in \reals$. 
We give a number of examples of these loss functions below.

We now formulate a generalized low rank model on the database $A$ as
\BEA
\label{eq-apca}
\begin{array}{ll}
\mbox{minimize} & \sum_{(i, j) \in \Omega} L_{ij}(x_i y_j, A_{ij}) 
+ \sum_{i=1}^m r_i(x_i) + \sum_{j=1}^n \tilde r_j(y_j),
\end{array}
\EEA
with variables $X \in \reals^{n \times k}$ and $Y \in \reals^{k \times m}$, and
with loss $L_{ij}$ as above and regularizers $r_i(x_i): \reals^{1 \times k} \to
\reals$ and $\tilde r_j(y_j): \reals^{k \times 1} \to \reals$ (as before). 
When the domain of each loss function is $\reals \times \reals$, 
we recover the generalized low rank model on a matrix (\ref{eq-gpca}).

\subsection{Solution methods}
As before, this problem is not convex, but it is bi-convex if $r_i$, and $\tilde r_j$
are convex, and $L_{ij}$ is convex in its first argument. 
The problem is also separable across samples $i = 1, \ldots, m$ and
features $j = 1, \ldots, m$.
These properties makes it easy to perform alternating minimization on this objective.
Once again, we defer a discussion of how to solve these subproblems explicitly
to \S\ref{s-algorithms}.

\subsection{Examples}

\paragraph{Boolean PCA.}
Suppose $A_{ij} \in \{-1,1\}^{m \times n}$, 
and we wish to approximate this Boolean matrix.
For example, we might suppose that the entries of $A$ are generated
as noisy, 1-bit observations from an underlying low rank matrix $XY$.
Surprisingly, it is possible to accurately estimate the underlying matrix
with only a few observations $|\Omega|$ from the matrix by solving problem (\ref{eq-apca})
(under a few mild technical conditions) with an appropriate loss function \cite{davenport2012}.

We may take the loss to be 
\[
L (u,a) = (1 - au)_+,
\]
which is the hinge loss (see Figure~\ref{f-hinge-loss}), 
and solve the problem (\ref{eq-apca}) with or without regularization.
When the regularization is sum of squares ($r(x) = \lambda \|x\|_2^2$, $\tilde
r(y) = \lambda \|y\|_2^2$),
fixing $X$ and minimizing over $y_j$ is equivalent to training a 
support vector machine (SVM) on 
a data set consisting of $m$ examples with features $x_i$ and labels $A_{ij}$.
Hence alternating minimization for the problem (\ref{eq-gpca}) 
reduces to repeatedly training an SVM.
This model has been previously considered under the name 
Maximum Margin Matrix Factorization (MMMF) \cite{srebro2004,rennie2005}.

\paragraph{Logistic PCA.}
Again supposing $A_{ij} \in \{-1,1\}^{m \times n}$,
we can also use a logistic loss to measure the approximation quality.
Let 
\[
L (u,a) = \log(1+\exp(-au))
\]
(see Figure~\ref{f-logistic-loss}).
With this loss, 
fixing $X$ and minimizing over $y_j$ is equivalent to using logistic regression
to predict the labels $A_{ij}$.
This model has been previously considered under the name logistic PCA \cite{schein2003}.

\begin{figure}
\centering
\begin{minipage}{0.5\textwidth}
\centering
\begin{tikzpicture}  
\tikzset{every pin/.style={font=\small}}
  \begin{axis}[
    xlabel = $u$,
    ylabel = {$(1 - au)_+$},
    samples = 100,
    xmin = -3,
    xmax = 3,
    width = \textwidth
  ]
  \addplot[mark = none, blue, thick] 
    {max(1 - x, 0)} 
    node[pos=.25,pin={[pin distance=0cm]0:{$a = 1$}}] 
    {};
  \addplot[mark = none, red, thick] 
    {max(1 + x, 0)} 
    node[pos=.75,pin={[pin distance=0cm]180:{$a = -1$}}]  
    {};
  \end{axis}
\end{tikzpicture}
\caption{\label{f-hinge-loss}Hinge loss.}
\end{minipage}\hfill
\begin{minipage}{0.5\textwidth}
\centering
\begin{tikzpicture}
\tikzset{every pin/.style={font=\small}}
  \begin{axis}[
    xlabel = $u$,
    ylabel = {$\log(1+\exp(au))$},
    samples = 100,
    xmin = -3,
    xmax = 3,
    width = \textwidth
  ]
  \addplot[mark = none, blue, thick] 
    {ln(1+exp(-x))} 
    node[pos=.25,pin={[pin distance=0cm]0:{$a = 1$}}] 
    {};
  \addplot[mark = none, red, thick] 
    {ln(1+exp(x))} 
    node[pos=.75,pin={[pin distance=0cm]180:{$a = -1$}}]  
    {};
  \end{axis}
\end{tikzpicture}
\caption{\label{f-logistic-loss}Logistic loss.}
\end{minipage}
\end{figure}

\paragraph{Poisson PCA.}
Now suppose the data $A_{ij}$ are nonnegative integers. 
We can use any loss function that might be used in a regression framework
to predict integral data to construct a generalized low rank model for Poisson PCA.
For example, we can take 
\[
L (u,a) = \exp(u) - au + a \log a - a. 
\]
This is the exponential family loss corresponding to Poisson data.
(It differs from the KL-divergence loss from \S\ref{s-exp-pca} only in that $u$ has been replaced by $\exp(u)$,
which allows $u$ to take negative values.)

\paragraph{Ordinal PCA.}
Suppose the data $A_{ij}$ records the levels of some ordinal variable, 
encoded as $\{1, 2, \ldots, d\}$.
We wish to penalize the entries of the low rank matrix $XY$ which deviate by
many levels from the encoded ordinal value.
A convex version of this penalty is given by the ordinal hinge loss,
\BEQ
\label{eq-ord-pca}
L(u,a) = \sum_{a'=1}^{a-1}(1-u+a')_+ + \sum_{a'=a+1}^{d}(1+u-a')_+,
\EEQ
which generalizes the hinge loss to ordinal data (see Figure \ref{f-ordinal-hinge}).

\begin{figure}
\centering
\includegraphics[width=\textwidth]{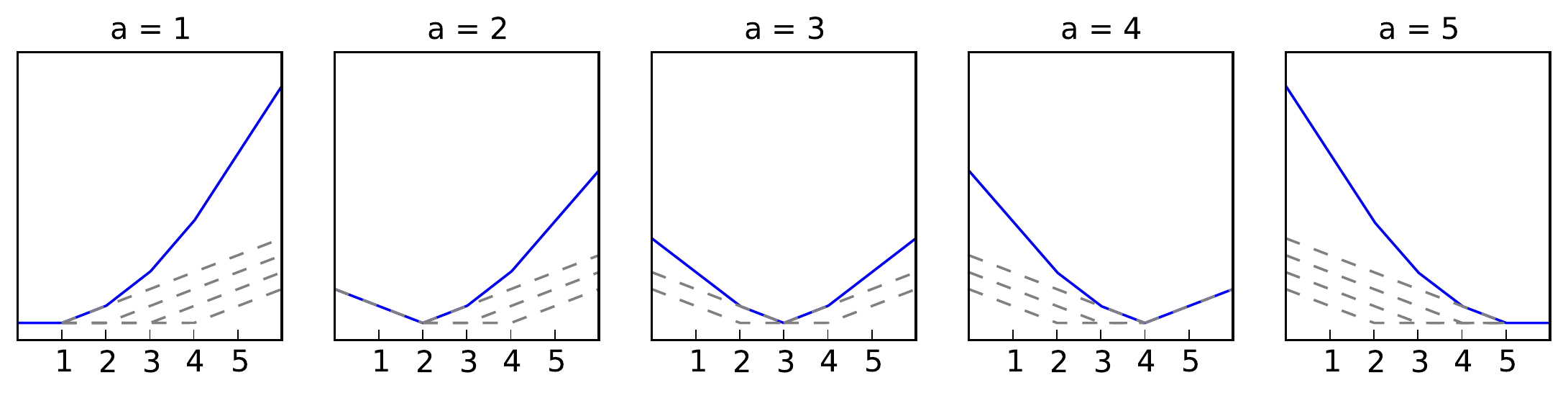}
\caption{\label{f-ordinal-hinge}Ordinal hinge loss.}
\end{figure}

This loss function may be useful for encoding Likert-scale data
indicating degrees of agreement with a question \cite{likert1932}.
For example, we might have
\[\mathcal F_j = \{\mbox{strongly disagree},\mbox{disagree},\mbox{neither agree nor disagree},\mbox{agree},\mbox{strongly agree}\}.
\]
We can encode these levels as the integers $1,\ldots,5$ and use the above loss to 
fit a model to ordinal data. 

This approach assumes that every increment of error is equally bad:
for example, that approximating ``agree'' by ``strongly disagree''
is just as bad as aproximating ``neither agree nor disagree'' by ``agree''.
In \S\ref{s-ord-mpca} we introduce a more flexible ordinal loss function 
that can learn a more flexible relationship between ordinal labels. 
For example, it could determine
that the difference between ``agree'' and ``strongly disagree'' is smaller
than the difference between ``neither agree nor disagree'' and ``agree''.

\paragraph{Interval PCA.}
Suppose that the data $A_{ij} \in \reals^2$ are tuples 
denoting the endpoints of an interval, 
and we wish to find a low rank matrix whose entries lie inside these intervals.
We can capture this objective using, for example, the deadzone-linear loss
\[
L(u,a) = \max((a_1-u)_+,(u-a_2)_+).
\]

\subsection{Missing data and data imputation}\label{s-apca-impute}
We can use the solution $(X,Y)$ to a low rank model 
to impute values corresponding to missing data $(i,j) \not \in \Omega$.
This process is sometimes also called \emph{inference}.
Above, we saw that for quadratically regularized PCA,
the MAP estimator for the missing entry $A_{ij}$ is equal to $x_i y_j$.
This is still true for many of the loss functions above, such as
the Huber function or $\ell_1$ loss,
for which it makes sense for the data to take on any real value.

However, to approximate abstract data types we must consider a more nuanced view.
While we can still think of the solution $(X,Y)$ to the generalized low rank model
(\ref{eq-gpca}) in Boolean PCA as approximating the Boolean matrix $A$,
the solution is not a Boolean matrix.
Instead we say that we have \emph{encoded}
the original Boolean matrix as a real-valued low rank matrix $XY$, or that
we have \emph{embedded} the original Boolean matrix into the space of real-valued matrices.

To fill in missing entries in the original matrix $A$, 
we compute the value $\hat A_{ij}$ that minimizes
the loss for $x_iy_j$:
\[
\hat A_{ij} = \argmin_a L_{ij}(x_i y_j,a).
\]
This implicitly constrains $\hat A_{ij}$ to lie in the domain 
$\mathcal F_j$ of $L_{ij}$.
When $L_{ij}: \reals \times \reals \to \reals$, 
as is the case for the losses in \S\ref{s-gpca} above
(including $\ell_2$, $\ell_1$, and Huber loss), then $\hat A_{ij} = x_i y_j$.
But when the data is of an abstract type, 
the minimum $\argmin_a L_{ij}(u,a)$ will not in general be equal to $u$.

For example, when the data is Boolean, $L_{ij}: \{0,1\} \times \reals \to \reals$,
we compute the Boolean matrix $\hat A$
implied by our low rank model by solving 
\[
\hat A_{ij} = \argmin_{a\in\{0,1\}} (a (XY)_{ij} - 1)_+
\]
for MMMF, or 
\[
\hat A_{ij} = \argmin_{a\in\{0,1\}} \log(1+\exp(-a (XY)_{ij}))
\] 
for logistic PCA.
These problems both have the simple solution 
\[
\hat A_{ij} = \sign(x_i y_j).
\]

When $\mathcal F_j$ is finite, 
inference \emph{partitions} the real numbers into regions
\[
\mathcal R_a = \{x\in\reals: L_{ij}(u,x)=\min_a L_{ij}(u,a)\}
\] 
corresponding to different values $a \in \mathcal F_j$.
When $L_{ij}$ is convex, these regions are intervals.


We can use the estimate $\hat A_{ij}$ even when $(i,j)\in \Omega$ \emph{was} observed.
If the original observations have been corrupted by noise, we can view $\hat A_{ij}$ 
as a denoised version of the original data. 
This is an unusual kind of denoising: both the noisy ($A_{ij}$) and denoised ($\hat A_{ij}$)
versions of the data lie in the \emph{abstract} space $\mathcal F_j$.

\subsection{Interpretations and applications}
We have already discussed some interpretations of $X$ and $Y$ in the PCA setting. Now
we reconsider those interpretations in the context of approximating these abstract data types. 

\paragraph{Archetypes.}
As before, we can think of each row of $Y$ as an \emph{archetype}
which captures the behavior of an idealized example.
However, the rows of $Y$ are real numbers. To represent each archetype $l=1,\ldots,k$ 
in the abstract space as $\mathcal Y_l$ with $(\mathcal Y_l)_j \in \mathcal F_j$, we solve
\[
\mathcal (Y_l)_j = \argmin_{a \in \mathcal F_j} L_j(y_{lj},a).
\]
(Here we assume that the loss $L_{ij} = L_j$ is independent of the example $i$.)

\paragraph{Archetypical representations.}
As before, we call $x_i$ the \emph{representation} of example $i$ in terms of the archetypes.
The rows of $X$ give an embedding of the examples into $\reals^k$,
where each coordinate axis corresponds to a different archetype.
If the archetypes are simple to understand or interpret, then the representation
of an example can provide better intuition about that example.

In contrast to the initial data, which may consist of arbitrarily complex data types, 
the representations $x_i$ will be low dimensional vectors, and can easily be 
plotted, clustered, or used in nearly any kind of machine learning
algorithm. Using the generalized low rank model, we have converted an abstract feature space 
into a vector space.

\paragraph{Feature representations.}
The columns of $Y$ embed the features into $\reals^k$. 
Here we think of the columns of $X$ as archetypical features, and represent
each feature $j$ as a linear combination of the archetypical features.
Just as with the examples, we might choose to apply any machine learning algorithm
to the feature representations.

This procedure allows us to compare non-numeric features using their representation
in $\reals^l$.
For example, if the features $\mathcal F$ are Likert variables giving the 
extent to which respondents on a questionnaire agree with statements $1,\ldots,n$,
we might be able to say that questions $i$ and $j$ are similar if $\|y_i - y_j\|$ is small; 
or that question $i$ is a more polarizing form of question $j$ if 
$y_i = \alpha y_j$, with $\alpha>1$.

Even more interesting, it allows us to compare features of different types. We 
could say that the real-valued feature $i$ is similar to Likert-valued question $j$
if $\|y_i - y_j\|$ is small. 

\paragraph{Latent variables.}
Each row of $X$ represents an example by a vector in $\reals^k$. 
The matrix $Y$ maps these representations back into the original feature space
(now nonlinearly) as described in the discussion on data imputation in \S\ref{s-apca-impute}.
We might think of $X$ as discovering the \emph{latent variables} that 
best explain the observed data, with the added benefit that these latent variables
lie in the vector space $\reals^k$. 
If the approximation error $\sum_{(i,j) \in \Omega} L_{ij}(x_i y_j,A_{ij})$ is small,
then we view these latent variables as providing a good explanation 
or summary of the full data set.

\paragraph{Probabilistic intepretation.}
We can give a probabilistic interpretation of $X$ and $Y$, generalizing the 
hierarchical Bayesian model presented by Fithian and Mazumder in \cite{fithian2013}.
We suppose that the matrices $\bar X$ and $\bar Y$ are generated according to a probability distribution
with probability proportional to $\exp(-r(\bar X))$ and $\exp(-\tilde r(\bar Y))$, respectively.
Our observations $A$ of the entries in the matrix $\bar Z = \bar X \bar Y$ 
are given by
\[
A_{ij} = \psi_{ij}((\bar X \bar Y)_{ij}),
\]
where the random variable $\psi_{ij}(u)$ takes value $a$
with probability proportional to
\[
\exp\left(-L_{ij}(u, a)\right).
\]
We observe each entry $(i,j)\in\Omega$.
Then to find the maximum a posteriori (MAP) estimator $(X,Y)$ of $(\bar X, \bar Y)$, we solve
\[
\begin{array}{ll}
\mbox{maximize} &
\exp\left(-\sum_{(i,j)\in\Omega} L_{ij}(x_i y_j, A_{ij})\right) 
\exp(-r(X)) \exp(-\tilde r(Y)),
\end{array}
\]
which is equivalent, by taking logs, to problem~(\ref{eq-apca}).

This interpretation gives us a simple way to interpret our procedure for imputing 
missing observations $(i,j)\not\in\Omega$.
We are simply computing the MAP estimator $\hat A_{ij}$.

\paragraph{Auto-encoder.}
The matrix $X$ encodes the data; the matrix $Y$ decodes it back into the full space.
We can view (\ref{eq-apca}) as providing the best linear auto-encoder for the data.
Among all linear encodings ($X$) and decodings ($Y$) of the data,
the abstract generalized low rank model (\ref{eq-apca}) minimizes the 
reconstruction error measured according to the loss functions $L_{ij}$.

\paragraph{Compression.}
We impose an information bottleneck by using a low rank auto-encoder to fit the data.
The bottleneck is imposed by both the dimensionality reduction and the regularization,
giving both soft and hard constraints on the information content allowed. 
The solution $(X,Y)$ to problem~(\ref{eq-apca}) maximizes the information transmitted through this $k$-dimensional
bottleneck, measured according to the loss functions $L_{ij}$.
This $X$ and $Y$ give a compressed and real-valued representation that may be used to more 
efficiently store or transmit the information present in the data.

\subsection{Offsets and scaling}\label{s-apca-scaling}
Just as in the previous section, better practical performance can often be achieved by allowing an offset in the model as described in \S\ref{s-rpca-offset},
and automatic scaling of loss functions as described in 
\S\ref{s-gpca-scaling}.
As we noted in \S\ref{s-gpca-scaling}, scaling the loss functions (instead of standardizing the data)
has the advantage that no arithmetic is performed directly on the data $A$.
When the data $A$ consists of abstract types, 
it is quite important that no arithmetic is performed on the data,
so that we need not take the average of, 
say, ``very much'' and ``a little'',
or subtract it from ``not at all''.

\subsection{Numerical examples}
In this section we give results of some small experiments illustrating 
the use of different loss functions
adapted to abstract data types, and comparing their performance to quadratically
regularized PCA. To fit these GLRMs, we use alternating minimization and solve
the subproblems with subgradient descent. This approach is explained more fully in
\S\ref{s-algorithms}.
Running the alternating subgradient method multiple times on the same GLRM from
different initial conditions yields different models, all with very
similar (but not identical) objective values.

\paragraph{Boolean PCA.} \label{ss-bool}

For this experiment, we generate Boolean data $A \in \{-1, +1\}^{n \times m}$ as
\[
    A = \sign{\left(X^{\mathrm{true}} Y^{\mathrm{true}}\right)},
\]
where $X^{\mathrm{true}} \in \reals^{n \times k_{\mathrm{true}}}$ and 
$Y^{\mathrm{true}} \in \reals^{k_{\mathrm{true}} \times m}$ 
have independent, standard normal entries. 
We consider a problem instance with $m = 50$, $n = 50$, and
$k_{\mathrm{true}} = k = 10$.

We fit two GLRMs to this data to compare their performance.
Boolean PCA uses hinge loss $L(u, a) = \max{(1-au, 0)}$
and quadratic regularization $r(u) = \tilde r(u) = .1 \|u\|_2^2$,
and produces the model $(X^{\mathrm{bool}}, Y^{\mathrm{bool}})$.
Quadratically regularized PCA uses squared loss $L(u, a) = (u-a)^2$
and the same quadratic regularization, 
and produces the model $(X^{\mathrm{real}}, Y^{\mathrm{real}})$. 

Figure~\ref{f-boolean-boolean} shows the results of fitting Boolean PCA to this data.
The first column shows the original ground-truth data $A$;
the second shows the imputed data given the model, $\hat A^{\mathrm{bool}}$, generated by
rounding the entries of $X^{\mathrm{bool}}Y^{\mathrm{bool}}$ to the closest number in ${0,1}$
(as explained in \S\ref{s-apca-impute});
the third shows the error $A - \hat A^{\mathrm{bool}}$.
Figure~\ref{f-boolean-boolean} shows the results of running quadratically regularized PCA
on the same data, and shows 
$A$, $\hat A^{\mathrm{real}}$, and $A - \hat A^{\mathrm{real}}$.

As expected, Boolean PCA performs substantially better 
than quadratically regularized PCA on this data set.
On average over 100 draws from the ground truth data distribution,
the misclassification error (percentage of misclassified entries)
\[
    \epsilon(X,Y; A) = \frac{\#\{(i,j) \mid A_{ij} \ne \sign{(XY)_{ij}} \}}{mn}
\]
is much lower using hinge loss
($\epsilon(X^{\mathrm{bool}}, Y^{\mathrm{bool}}; A) = 0.0016$)
than squared loss
($\epsilon(X^{\mathrm{real}}, Y^{\mathrm{real}}; A) = 0.0051$).
The average RMS errors
\[
    \mbox{RMS}(X,Y; A) =  \left(\frac{1}{mn} \sum_{i=1}^m \sum_{j=1}^n (A_{ij} - (XY)_{ij})^2\right)^{1/2}
\]
using hinge loss ($\mbox{RMS}(X^{\mathrm{bool}}, Y^{\mathrm{bool}}; A) = 0.0816$) 
and squared loss ($\mbox{RMS}(X^{\mathrm{real}}, Y^{\mathrm{real}}; A) = 0.159$)
also indicate an advantage for Boolean PCA.

\begin{figure}[htb!]
\begin{center}
\includegraphics[height = 0.2\textheight]{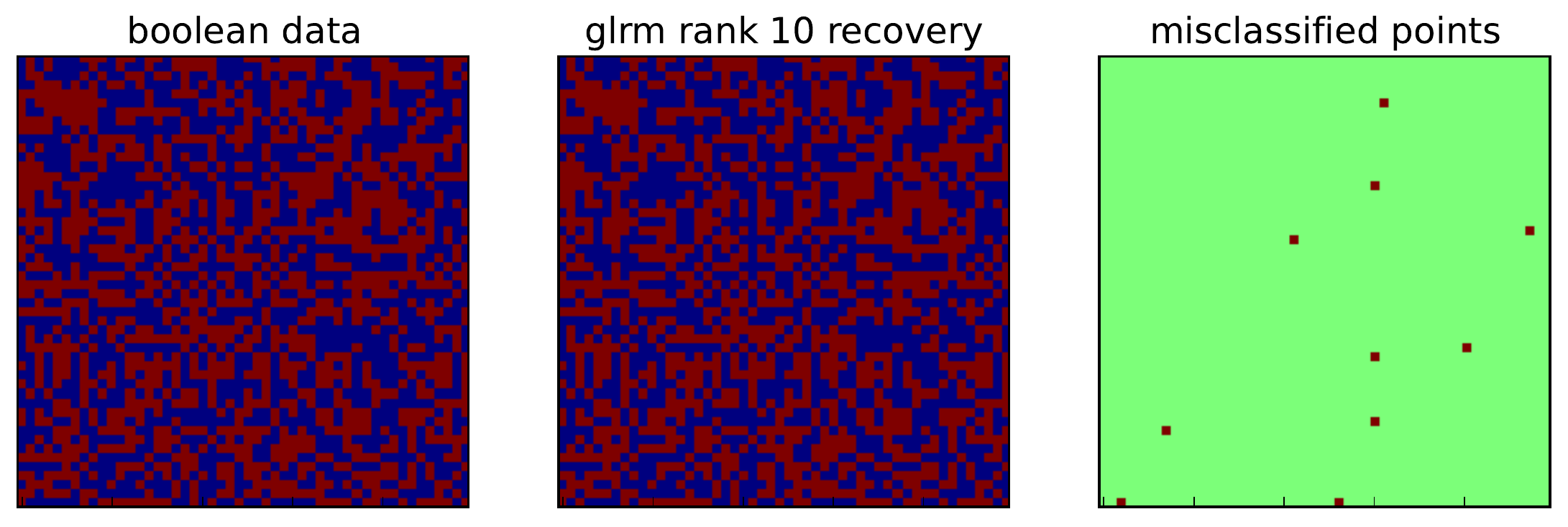}
\end{center}
\caption{\label{f-boolean-boolean}Boolean PCA on Boolean data.}
\end{figure}

\begin{figure}[htb!]
\begin{center}
\includegraphics[height = 0.2\textheight]{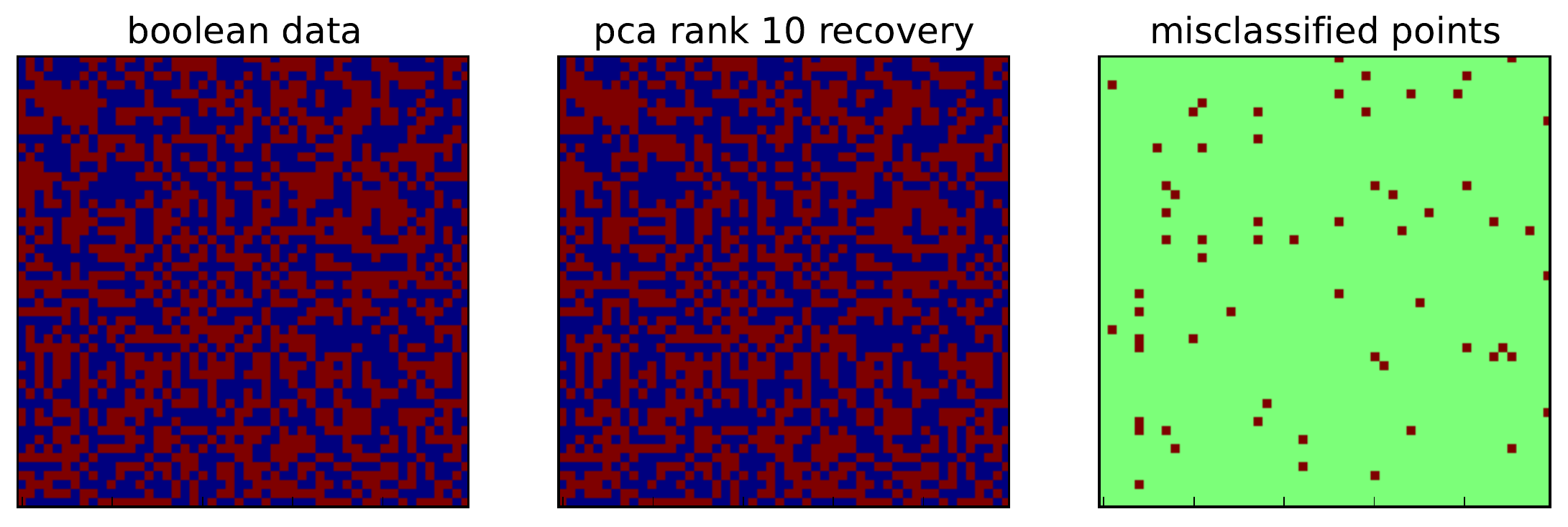}
\end{center}
\caption{\label{f-boolean-pca}Quadratically regularized PCA on Boolean data.}
\end{figure}

\paragraph{Censored PCA.}

In this example, we consider the performance of Boolean PCA
when only a subset of positive entries in 
the Boolean matrix $A \in \{-1, 1\}^{m \times n}$ have been observed, 
\ie, the data has been \emph{censored}.
For example, a retailer might know only a subset of the products each customer purchased;
or a medical clinic might know only a subset of the diseases a patient has contracted,
or of the drugs the patient has taken.
Imputation can be used in this setting to (attempt to) 
distinguish true negatives $A_{ij} = -1$
from unobserved positives $A_{ij} = +1$, $(i,j) \not \in \Omega$.

We generate a low rank matrix $B = XY \in [0,1]^{m \times n}$ with 
$X\in\reals^{m\times k}$, $Y\in\reals^{k \times n}$,
where the entries of $X$ and $Y$ are drawn from a uniform distribution on $[0,1]$,
$m=n=300$ and $k=3$.
Our data matrix $A$ is chosen by letting $A_{ij} = 1$ with probability 
proportional to $B_{ij}$, and $-1$ otherwise;
the constant of proportionality is chosen so that half of the entries in $A$ are positive.
We fit a rank 5 GLRM to an observation set $\Omega$ consisting of 
$10\%$ of the positive entries in the matrix, drawn uniformly at random, 
using hinge loss and quadratic regularization.
That is, we fit the low rank model
\[
\begin{array}{ll}
\mbox{minimize} & \sum_{(i,j) \in \Omega} \max(1 - x_i y_j A_{ij}, 0) 
+ \gamma \sum_{i=1}^m \|x_i\|_2^2 + \gamma \sum_{j=1}^n \|y_j\|_2^2
\end{array}
\]
and vary the regularization parameter $\gamma$.

We consider three error metrics to measure the performance 
of the fitted model $(X,Y)$:
normalized training error, 
\[
\frac{1}{|\Omega|}\sum_{(i,j) \in \Omega} \max(1 - A_{ij} x_i y_j, 0),
\]
normalized test error, 
\[
\frac{1}{|\Omega^C|}\sum_{(i,j) \in \Omega^C} \max(1 - A_{ij} x_i y_j, 0),
\]
and \emph{precision at 10} (p@10), which is computed 
as the fraction of the top ten predicted values not in the observation set,
$\{x_i y_j: (i,j) \in \Omega^C\}$, for which $A_{ij}=1$.
(Here, $\Omega^C = \{1,\ldots, m\} \times \{1,\ldots, n\} \setminus \Omega$.)
Precision at 10 measures the usefulness of the model: if we predicted that 
the top 10 unseen elements $(i,j)$ had values $+1$, how many would we get right? 

Figure~\ref{f-censored} shows the regularization path as $\gamma$
ranges from 0 to 40, averaged over 50 samples from the distribution generating the data.
Here, we see that while the training error decreases as $\gamma$ decreases, the test
error reaches a minimum around $\gamma = 5$.
Interestingly, the precision at 10 improves as the regularization increases;
since precision at 10 is computed using only relative rather than absolute values of the
model, it is insensitive to the shrinkage of the parameters
introduced by the regularization.
The grey line shows the probability of identifying a positive entry by guessing randomly; 
precision at 10, which exceeds $80\%$ when $\gamma \gtrsim 30$, is significantly higher.
This performance is particularly impressive given that the observations $\Omega$
are generated by sampling from rather than rounding the auxiliary matrix B.

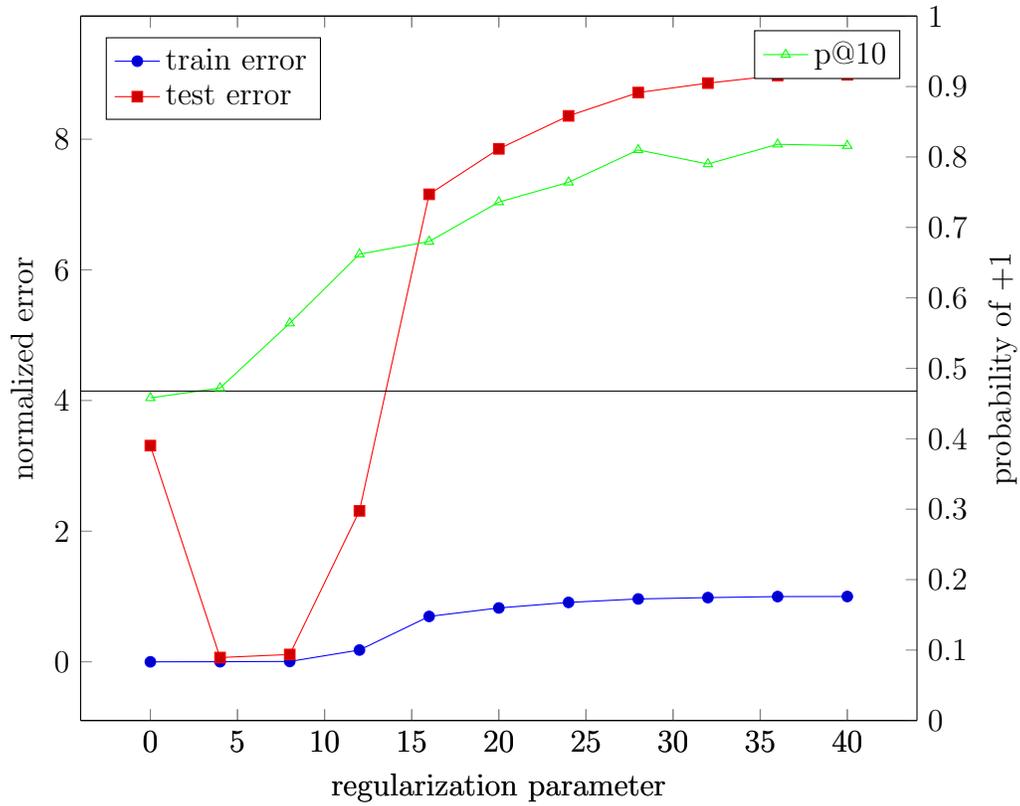
\begin{figure}
\begin{centering}
\begin{tikzpicture}[]
\begin{axis}[view = {0}{90}, axis y line*=left, width = 5in, xlabel = regularization parameter, ylabel = normalized error, legend pos=north west]
\addplot+ coordinates {
(40.0, 0.9999250275022336)
(36.0, 0.998645592703602)
(32.0, 0.9830282610127545)
(28.0, 0.9624431669025898)
(24.0, 0.9093238824715273)
(20.0, 0.8245805204249774)
(16.0, 0.6949344902207811)
(12.0, 0.1806216908618794)
(8.0, 0.005602388796491626)
(4.0, 0.0014610770292638547)
(0.0, 2.2410618538422013e-5)
};
\addlegendentry{train error}
\addplot+ coordinates {
(40.0, 8.985700255329329)
(36.0, 8.9758731137834)
(32.0, 8.858315286350836)
(28.0, 8.713651933666064)
(24.0, 8.357011604287244)
(20.0, 7.852363931187378)
(16.0, 7.1564876377135045)
(12.0, 2.310889850642374)
(8.0, 0.11232973341434623)
(4.0, 0.06555686970026971)
(0.0, 3.309612957804895)
};
\addlegendentry{test error}
\end{axis}
\begin{axis}[view = {0}{90}, axis y line*=right, width = 5in, xlabel = regularization parameter, ylabel = probability of +1, ymin = 0, ymax = 1]
\addplot[mark=triangle,color=green] coordinates {
(40.0, 0.816)
(36.0, 0.818)
(32.0, 0.79)
(28.0, 0.81)
(24.0, 0.764)
(20.0, 0.7360000000000001)
(16.0, 0.68)
(12.0, 0.6619999999999999)
(8.0, 0.5640000000000001)
(4.0, 0.47200000000000003)
(0.0, 0.458)
};
\addlegendentry{p@10}
\draw[thin] (axis cs:\pgfkeysvalueof{/pgfplots/xmin},.4677) -- (axis cs:\pgfkeysvalueof{/pgfplots/xmax},.4677);
\addlegendentry{proportion of +1s in $\Omega^C$}
\end{axis}
\end{tikzpicture}
\caption{\label{f-censored}Error metrics for Boolean GLRM on censored data. The grey line shows the probability of identifying a positive entry by guessing randomly.}
\end{centering}
\end{figure}

\paragraph{Mixed data types.} \label{ss-mix}

In this experiment, we fit a GLRM to a data table with numerical,
Boolean, and ordinal columns generated as follows.
Let $\mathcal N_1$, $\mathcal N_2$, and $\mathcal N_3$ partition the column indices
$1,\ldots,n$.
Choose $X^{\mathrm{true}} \in \reals^{m \times k_{\mathrm{true}}}$,
$Y^{\mathrm{true}} \in \reals^{k_{\mathrm{true}} \times n}$ to have independent,
standard normal entries.
Assign entries of $A$ as follows:
\[
    A_{ij} = \left\{
    \begin{array}{ll}
         x_iy_j & j \in \mathcal N_1 \\
         \sign{\left(x_iy_j\right)} & j \in \mathcal N_2 \\
         \round(3x_iy_j +1) & j \in \mathcal N_3,
    \end{array}\right.
\]
where the function $\round$ maps $a$ to the nearest integer
in the set $\{1, \ldots, 7 \}$.
Thus, $\mathcal N_1$ corresponds to real-valued data;
$\mathcal N_2$ corresponds to Boolean data;
and $\mathcal N_3$ corresponds to ordinal data. 
We consider a problem instance in which 
$m = 100$, $n_1 = 40$, $n_2 = 30$, $n_3 = 30$, and $k_{\mathrm{true}} = k =
10$. 

We fit a heterogeneous loss GLRM to this data with loss function
\[
    L_{ij}(u,a) = \left\{
        \begin{array}{ll}
            L_{\mathrm{real}}(u,a) & j \in \mathcal N_1\\
            L_{\mathrm{bool}}(u,a) & j \in \mathcal N_2\\
            L_{\mathrm{ord}}(u,a) & j \in \mathcal N_3,
        \end{array} \right. 
\]
where $L_{\mathrm{real}}(u,a) = (u-a)^2$, $L_{\mathrm{bool}}(u,a) =
(1 - au)_+$, and $L_{\mathrm{ord}}(u,a)$ is defined in (\ref{eq-ord-pca}),
and with quadratic regularization $r(u) = \tilde r(u) = .1 \|u\|_2^2$.
We fit the GLRM to produce the model $(X^{\mathrm{mix}},Y^{\mathrm{mix}})$.
For comparison, we also fit quadratically regularized PCA to the same data,
using $L_{ij}(u,a) = (u-a)^2$ for all $j$ 
and quadratic regularization $r(u) = \tilde r(u) = .1 \|u\|_2^2$,
to produce the model $(X^{\mathrm{real}}, Y^{\mathrm{real}})$.



Figure~\ref{f-mixed-glrm} shows the results of fitting the heterogeneous loss GLRM
to the data.
The first column shows the original ground-truth data $A$;
the second shows the imputed data given the model, $\hat A^{\mathrm{mix}}$, generated by
rounding the entries of $X^{\mathrm{mix}}Y^{\mathrm{mix}}$ to the closest number in ${0,1}$
(as explained in \S\ref{s-apca-impute});  
the third shows the error $A - \hat A^{\mathrm{mix}}$.
Figure~\ref{f-mixed-pca} corresponds to quadratically regularized PCA, and shows 
$A$, $\hat A^{\mathrm{real}}$, and $A - \hat A^{\mathrm{real}}$.

To evaluate error for Boolean and ordinal data, we use the misclassification
error $\epsilon$ defined above. For notational convenience, we let 
$Y_{\mathcal N_l}$ ($A_{\mathcal N_l}$) denote
$Y$ ($A$) restricted to the columns $\mathcal N_l$ in order to pick out 
real-valued columns ($l=1$), Boolean columns ($l=2$), and ordinal columns ($l=3$).

Table \ref{results_mix} compare the average error (difference between imputed entries and ground truth) over 100
draws from the ground truth distribution for models using heterogeneous loss ($X^{\mathrm{mix}}$,
$Y^{\mathrm{mix}}$) and quadratically regularized loss ($X^{\mathrm{real}}$,
$Y^{\mathrm{real}}$). Columns are labeled by error metric.
We use misclassification error $\epsilon$ for Boolean and
ordinal data and MSE for numerical data.
\begin{table}[h]
\begin{center}

     \begin{tabular}{ | l | c | c | c | }
    \hline
                        &$\mbox{MSE}(X, Y_{\mathcal N_1}; A_{\mathcal N_1})$ & 
                        $\epsilon(X, Y_{\mathcal N_2}; A_{\mathcal N_2})$ & 
                        $\epsilon(X, Y_{\mathcal N_3}; A_{\mathcal N_3})$ \\ \hline
    $X^{\mathrm{mix}}$, $Y^{\mathrm{mix}}$  & $0.0224$ & $0.0074$ & $0.0531$ \\ \hline
    $X^{\mathrm{real}}$ , $Y^{\mathrm{real}}$ & $0.0076$ & $0.0213$ & $0.0618$
    \\ \hline
  \end{tabular}
  \caption{Average error for numerical, Boolean, and ordinal features using GLRM
  with heterogenous loss and quadratically regularized loss.}
   \label{results_mix}
   \end{center}
\end{table}

%
%
\begin{figure}
\begin{centering}
\includegraphics[height = 0.2\textheight]{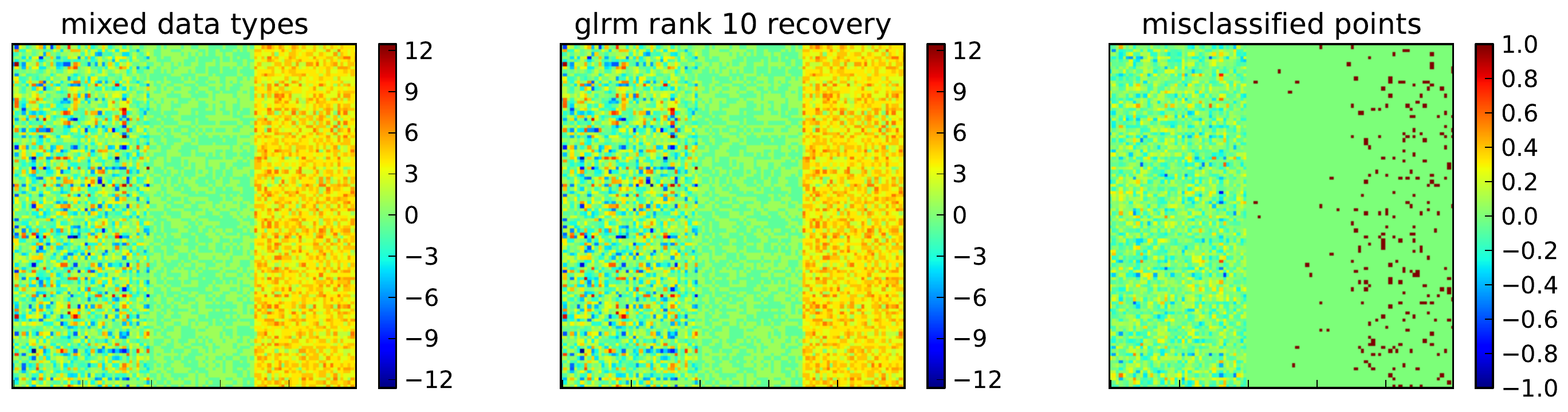}
\end{centering}
\caption{\label{f-mixed-glrm}Heterogeneous loss GLRM on mixed data.}
\end{figure}

\begin{figure}
\begin{centering}
\includegraphics[height = 0.2\textheight]{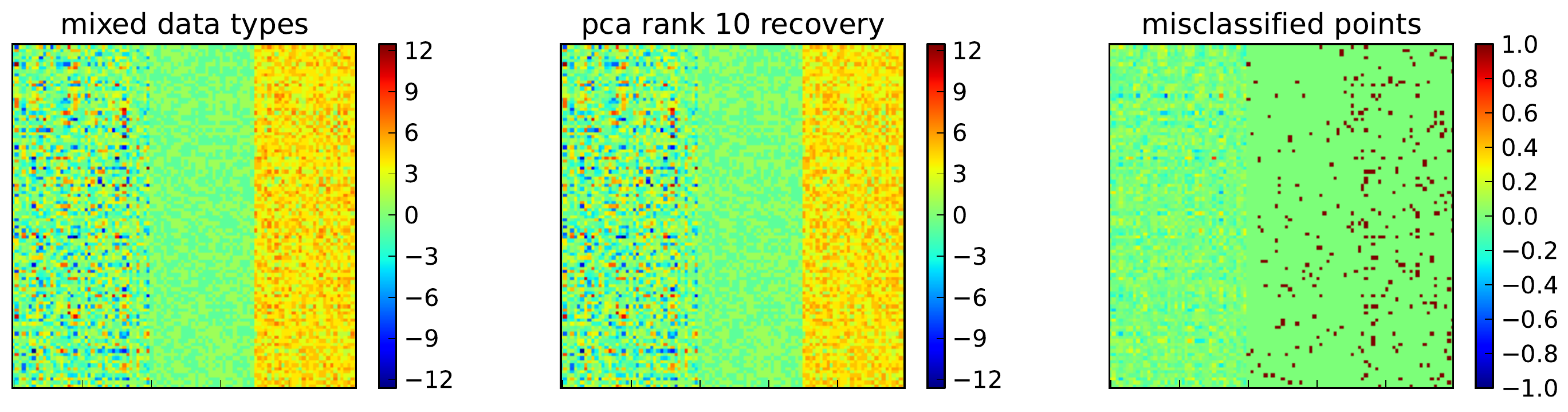}
\end{centering}
\caption{\label{f-mixed-pca}Quadratically regularized PCA on mixed data.}
\end{figure}

\paragraph{Missing data.}
Here, we explore the effect of missing entries on the accuracy of the recovered model.
We generate data $A$ as detailed above, 
but then censor one large block of entries in the table
(constituting 3.75\% of numerical, 50\% of Boolean, and 50\% of ordinal data),
removing them from the observed set $\Omega$.

Figure~\ref{f-missing-glrm} shows the results of fitting the heterogeneous loss GLRM
described above 
on the censored data.
The first column shows the original ground-truth data $A$;
the second shows the block of data that has been removed from the
observation set $\Omega$;
the third shows the imputed data given the model, $\hat A^{\mathrm{mix}}$, generated by
rounding the entries of $X^{\mathrm{mix}}Y^{\mathrm{mix}}$ to the closest number in $\{0,1\}$
(as explained in \S\ref{s-apca-impute});
the fourth shows the error $A - \hat A^{\mathrm{mix}}$.
Figure~\ref{f-missing-pca} corresponds to running quadratically regularized PCA on the same data, 
and shows $A$, $\hat A^{\mathrm{real}}$, and $A - \hat A^{\mathrm{real}}$.
While quadradically regularized PCA and the heterogeneous loss GLRM performed 
similarly when no data was missing, the heterogeneous loss GLRM performs much
better than quadradically regularized PCA when a large block of data is
censored.

We compare the average error (difference between imputed entries and ground truth) over 100
draws from the ground truth distribution in Table \ref{results_missing}. 
As above, we use misclassification error $\epsilon$ for Boolean and
ordinal data and MSE for numerical data.

\begin{table}[h]
\begin{center}

     \begin{tabular}{ | l | c | c | c | }
    \hline
                        &$\mbox{MSE}(X, Y_{\mathcal N_1}; A_{\mathcal N_1})$ & 
                        $\epsilon(X, Y_{\mathcal N_2}; A_{\mathcal N_2})$ & 
                        $\epsilon(X, Y_{\mathcal N_3}; A_{\mathcal N_3})$ \\ \hline
    $X^{\mathrm{mix}}$, $Y^{\mathrm{mix}}$  & $0.392$ & $0.2968$ & $0.3396$ \\ \hline
    $X^{\mathrm{real}}$ , $Y^{\mathrm{real}}$ & $0.561$ & $0.4029$ & $0.9418$
    \\ \hline
  \end{tabular}
  \caption{Average error over imputed data using a GLRM with heterogenous
  loss and regularized quadratic loss.}
   \label{results_missing}
   \end{center}
\end{table}

%


\begin{figure}
\begin{centering}
\includegraphics[height = 0.15\textheight]{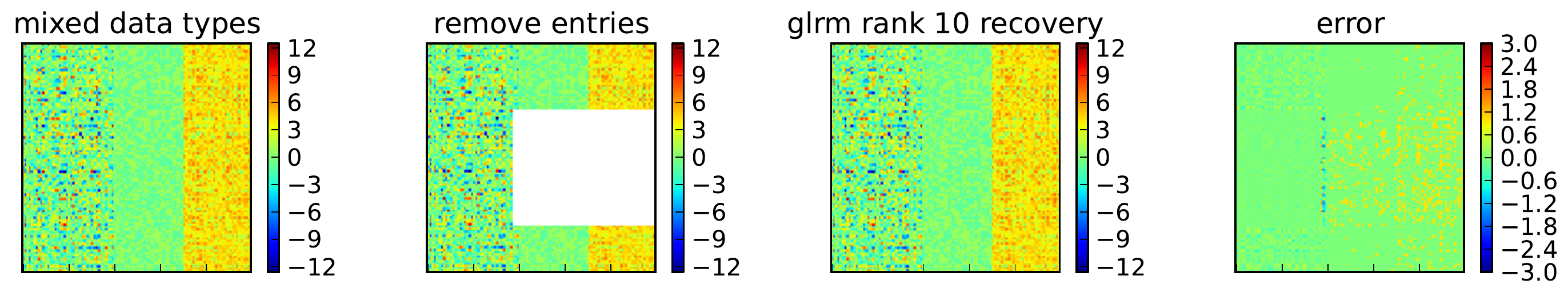}\\
\end{centering}
\caption{\label{f-missing-glrm}Heterogeneous loss GLRM on missing data.}
\end{figure}

\begin{figure}
\begin{centering}
\includegraphics[height = 0.15\textheight]{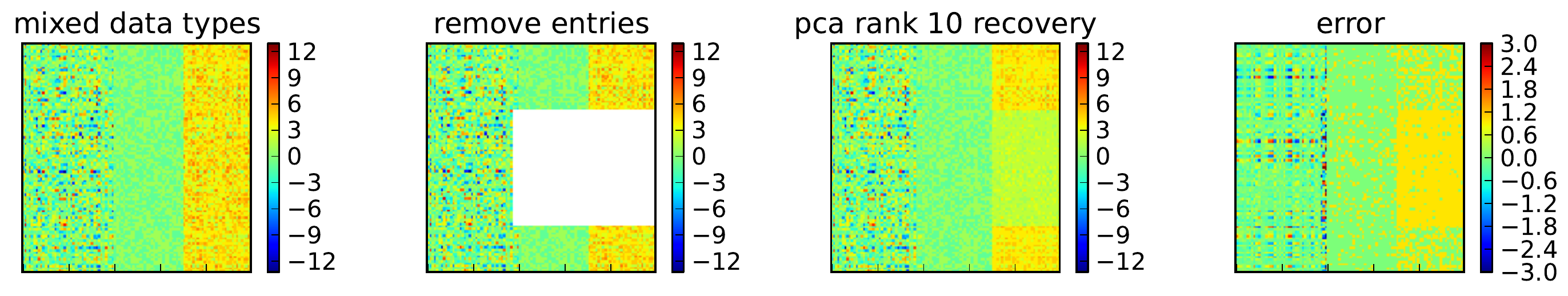}\\
\end{centering}
\caption{\label{f-missing-pca}Quadratically regularized PCA on missing data.}
\end{figure}

\section{Multi-dimensional loss functions}\label{s-mpca}

In this section, we generalize the procedure to allow the loss functions to depend on \emph{blocks} of
the matrix $XY$, which allows us to represent abstract data types more naturally.
For example, we can now represent categorical values 
, permutations, distributions, and rankings. 

We are given a loss function $L_{ij}: \reals^{1\times d_j} \times \mathcal F_j \to
\reals$, where $d_j$ is the \emph{embedding dimension} of feature $j$, 
and $d = \sum_j d_j$ is the total dimension of the embedded features. 
The loss $L_{ij}(u, a)$ describes the
approximation error incurred when we represent a feature value $a \in \mathcal
F_j$ by the vector $u \in \reals^{d_j}$. 

Let $x_i \in \reals^{1 \times k}$ be the $i$th row of $X$ (as before), and let
$Y_j \in \reals^{k \times d_j}$ be the $j$th block matrix of $Y$ so
the columns of $Y_j$ correspond to the columns of embedded feature $j$. 
We now formulate a multi-dimensional generalized low rank model on the database $A$,
\BEQ
\label{eq-mpca}
\begin{array}{ll}
\mbox{minimize} & \sum_{(i, j) \in \Omega} L_{ij}(x_i Y_j, A_{ij}) 
+ \sum_{i=1}^m r_i(x_i) + \sum_{j=1}^n \tilde r_j(Y_j),
\end{array}
\EEQ
with variables $X \in \reals^{n \times k}$ and $Y \in \reals^{k \times d}$, and
with loss $L_{ij}$ as above and regularizers $r_i(x_i): \reals^{1 \times k} \to
\reals$ (as before) and $\tilde r_j(Y_j): \reals^{k \times d_j} \to \reals$. 
Note that the first argument of $L_{ij}$ is a \emph{row} vector with $d_j$ entries,
and the first argument of $r_j$ is a matrix with $d_j$ columns.
When every entry $A_{ij}$ is real-valued (\ie, $d_j = 1$), then we recover the
generalized low rank model (\ref{eq-gpca}) seen in the previous section.

\subsection{Examples}

\paragraph{Categorical PCA.}
Suppose that $a\in \mathcal F$ is a categorical variable, 
taking on one of $d$ values or labels.
Identify the labels with the integers $\{1,\ldots,d\}$.
In (\ref{eq-mpca}), set
\[
L(u,a) = (1-u_a)_+ + \sum_{a' \in \mathcal F, ~a' \ne a} (1 + u_{a'})_+,
\]
and use the quadratic regularizer $r_i = \gamma \|\cdot\|_2^2$, 
$\tilde r = \gamma \|\cdot\|_2^2$.

Fixing $X$ and optimizing over $Y$ is equivalent to training one SVM 
per label to separate that label from all the others:
the $j$th column of $Y$ gives the weight vector corresponding to the $j$th SVM.
(This is sometimes called one-vs-all multiclass classification \cite{rifkin2004}.)
Optimizing over $X$ identifies the low-dimensional feature vectors for each example that allow these SVMs to most accurately predict the labels.

The difference between categorical PCA and Boolean PCA 
is in how missing labels are imputed. 
To impute a label for entry $(i,j)$ with feature vector $x_i$
according to the procedure described above in \ref{s-apca-impute}, 
we project the representation $Y_j$ 
onto the line spanned by $x_i$ to form $u = x_i Y_j$.
Given $u$, the imputed label is simply $\argmax_l u_l$. 
This model has the interesting property that if column $l'$ of $Y_j$ lies 
in the interior of the convex hull of the columns of $Y_j$, 
then $u_{l'}$ will lie in the interior of the interval $[\min_l u_l, \max_l u_l]$ \cite{boyd2004}.
Hence the model will never impute label $l'$ for any example. 

We need not restrict ourselves to the loss function given above.
In fact, any loss function that can be used to train a classifier for categorical variables 
(also called a multi-class classifier) 
can be used to fit a categorical PCA model,
so long as the loss function depends only on the inner products between 
the parameters of the model and the features corresponding to each example.
The loss function becomes the loss function $L$ used in (\ref{eq-mpca});
the optimal parameters of the model give the optimal matrix $Y$,
while the implied features will populate the optimal matrix $X$.
For example, it is possible to use loss functions derived from
error-correcting output codes \cite{dietterich1995};
the Directed Acyclic Graph SVM \cite{platt1999};
the Crammer-Singer multi-class loss \cite{crammer2002};
or the multi-category SVM \cite{lee2004}.

Of these loss functions, only the one-vs-all loss is separable across the classes $a \in \mathcal F$.
(By separable, we mean that the objective value can be written as a sum over the classes.)
Hence fitting a categorical features with any other loss functions is \emph{not}
the same as fitting $d$ Boolean features.
For example, in the Crammer-Singer loss
\[
L(u,a) = (1 - u_a + \max_{a' \in \mathcal F, ~a' \not = a} u_a')_+,
\]
the classes are combined according to their maximum, rather than their sum.
While one-vs-all classification performs about as well as more sophisticated loss functions
on small data sets \cite{rifkin2004}, these more sophisticated nonseparable loss 
tend to perform much better as the number of classes (and examples) increases \cite{gupta2014}.

Some interesting nonconvex loss functions have also been suggested 
for this problem.
For example, consider a generalization of Hamming distance to this setting,
\[
L(u,a) = \delta_{u_a, 1} + \sum_{a' \ne a} \delta_{u_{a'}, 0},
\]
where $\delta_{\alpha, \beta} = 0$ if $\alpha = \beta$ and 1 otherwise.
In this case, alternating minimization with regularization that enforces a 
clustered structure in the low rank model 
(see the discussion of quadratic clustering in \S\ref{s-kmeans})
reproduces the $k$-modes algorithm \cite{huang1999}.

\paragraph{Ordinal PCA.}\label{s-ord-mpca}
We saw in \S\ref{s-apca}
one way to fit a GLRM to ordinal data.
Here, we use a larger embedding dimension for ordinal features.
The multi-dimensional embedding will be particularly useful
when the best mapping of the ordinal variable onto a linear scale is not uniform;
\eg, if level 1 of the ordinal variable is much more similar to level 2 than level 2 is
to level 3. Using a larger embedding dimension allows us to infer the relations between
the levels from the data itself.
Here we again identify the labels $a \in \mathcal F$ with the integers $\{1,\ldots,d\}$.

One approach we can use for (multi-dimensional) ordinal PCA
is to solve (\ref{eq-mpca}) with the loss function
\BEQ
\label{eq-ord-mpca}
L(u,a) = \sum_{a'=1}^{d -1} (1 - I_{a > a'} u_{a'})_+,
\EEQ
and with quadratic regularization.
Fixing $X$ and optimizing over $Y$ is equivalent to training an SVM 
to separate labels $a \le l$ from $a > l$ for each $l \in \mathcal F$. 
This approach produces a set of hyperplanes 
(given by the columns of $Y$) separating each level $l$ from the next.
The hyperplanes need not be parallel to each other.
Fixing $Y$ and optimizing over $X$ finds the low dimensional features vector
for each example that places the example between the appropriate hyperplanes.
(See Figure \ref{f-mpca-ordinal} for an illustration of an optimal fit 
of this loss function, with $k=2$, to a simple synthetic data set.)

\begin{figure}
\centering
\includegraphics[width=.5\textwidth]{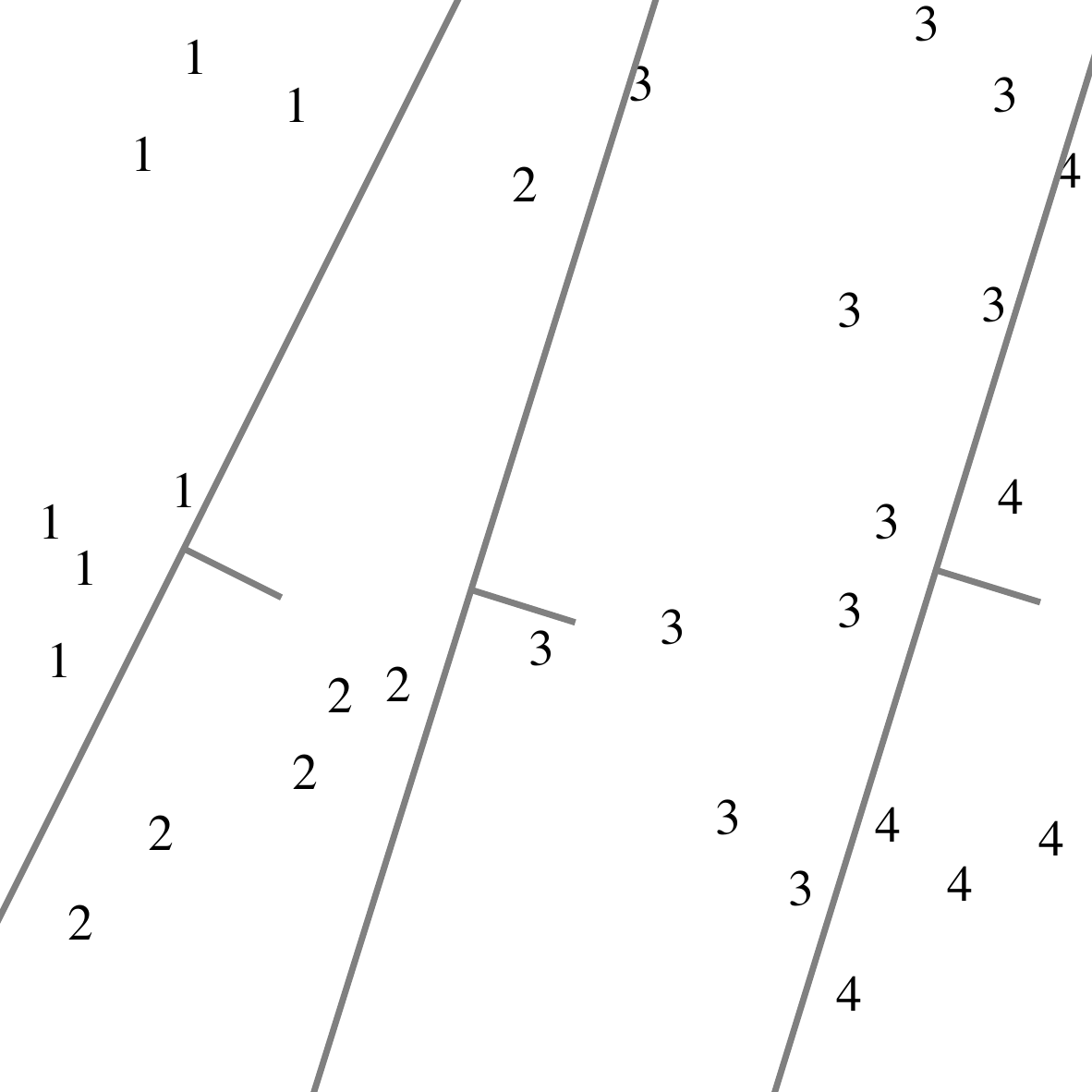}
\caption{\label{f-mpca-ordinal}Multi-dimensional ordinal loss.}
\end{figure}

\paragraph{Permutation PCA.}
Suppose that $a$ is a permutation of the numbers $1,\ldots,d$.
Define the permutation loss
\[
L(u,a) = \sum_{i = 1}^{d-1} (1 - u_{a_i} + u_{a_{i+1}})_+.
\]
This loss is zero if $u_{a_i} > u_{a_{i+1}} + 1$ for $i=1,\ldots, d-1$,
and increases linearly when these inequalities are violated.
Define $\sort(u)$ to return a permutation $\hat a$ of the indices $1,\ldots,d$
so that $u_{\hat a_i} \ge u_{\hat a_{i+1}}$ for $i=1,\ldots, d-1$.
It is easy to check that $\argmin_a L(u,a) = \sort(u)$.
Hence using the permutation loss function in generalized PCA (\ref{eq-mpca})
finds a low rank approximation of a given table of permutations.

\paragraph{Ranking PCA.}
Many variants on the permutation PCA problem are possible.
For example, in ranking PCA, we interpret the permutation as 
a ranking of the choices $1,\ldots,d$, and penalize deviations of many levels 
more strongly than deviations of only one level by choosing the loss
\[
L(u,a) = \sum_{i = 1}^{d-1} \sum_{j = i+1}^{d} 
(1 - u_{a_i} + u_{a_{j}})_+.
\]

From here, it is easy to generalize to a setting in which the rankings are
only partially observed. Suppose that we observe pairwise comparisons
$a \subseteq \{1,\ldots,d\} \times \{1,\ldots,d\}$, where
$(i,j) \in a $ means that choice $i$ was ranked above choice $j$. 
Then a loss function penalizing devations from these observed rankings is 
\[
L(u,a) = \sum_{(i,j) \in a} 
(1 - u_{a_i} + u_{a_{j}})_+.
\]

Many other modifications to ranking loss functions 
have been proposed in the literature that interpolate
between the the two first loss functions proposed above,
or which prioritize correctly predicting the top ranked choices.
These losses include
the area under the curve loss \cite{steck2007},
ordered weighted average of pairwise classification losses \cite{usunier2009},
the weighted approximate-rank pairwise loss \cite{weston2010},
the $k$-order statistic loss \cite{weston2013},
and the accuracy at the top loss \cite{boyd2012aatp}.

\subsection{Offsets and scaling}
 Just as in the previous section, better practical performance can often be achieved by allowing an offset in the model as described in \S\ref{s-rpca-offset},
and scaling loss functions as described in 
\S\ref{s-gpca-scaling}.

\subsection{Numerical examples}\label{s-census}

We fit a low rank model to the 2013 American Community Survey (ACS) to illustrate
how to fit a low rank model to heterogeneous data.

The ACS is a survey administered to 1\% of the population of the United States each year
to gather their responses to a variety of demographic and economic questions.
Our data sample consists of $m=3132796$ responses gathered from residents of the US, 
excluding Puerto Rico, in the year 2013, on the 23 questions listed in Table \ref{t-ACS-q}.

\begin{table}
\begin{center}
\begin{tabular}{l | l | l}
Variable & Description & Type \\
\hline
HHTYPE & household type & categorical \\
STATEICP & state & categorical \\
OWNERSHP & own home & Boolean \\
COMMUSE & commercial use & Boolean \\
ACREHOUS & house on $\geq 10$ acres & Boolean \\
HHINCOME & household income & real \\
COSTELEC & monthly electricity bill & real \\
COSTWATR & monthly water bill & real \\
COSTGAS & monthly gas bill & real \\
FOODSTMP & on food stamps & Boolean \\
HCOVANY & have health insurance & Boolean \\
SCHOOL & currently in school & Boolean \\
EDUC & highest level of education & ordinal \\
GRADEATT & highest grade level attained & ordinal \\
EMPSTAT & employment status & categorical \\
LABFORCE & in labor force & Boolean \\
CLASSWKR & class of worker & Boolean \\
WKSWORK2 & weeks worked per year & ordinal \\
UHRSWORK & usual hours worked per week & real \\
LOOKING & looking for work & Boolean \\
MIGRATE1 & migration status & categorical \\
\end{tabular}
\caption{\label{t-ACS-q}ACS variables.}
\end{center}
\end{table}

We fit a rank 10 model to this data using 
Huber loss for real valued data,
hinge loss for Boolean data,
ordinal hinge loss for ordinal data,
one-vs-all categorical loss for categorical data,
and regularization parameter $\gamma = .1$.
We allow an offset in the model and scale the loss functions and regularization
as described in \S\ref{s-gpca-scaling}.



In Table~\ref{t-similar}, we select a few features $j$ from the model, along with their
associated vectors $y_j$, and find the two features most similar to them by finding 
the two features $j'$ which minimize $\cos(y_j, y_{j'})$.
The model automatically groups states which intuitively share demographic features:
for example, three wealthy states adjoining (but excluding) a major metropolitan area
--- Virginia, Maryland, and Connecticut --- are grouped together.
The low rank structure also identifies the results (high water prices) of the prolonged 
drought afflicting California, and corroborates the intuition that work leads only
to more work: hours worked per week, weeks worked per year, and education level are 
highly correlated.

\begin{table}
\begin{center}
\begin{tabular}{l|l}
Feature & Most similar features\\
\hline
Alaska & Montana, North Dakota \\
California & Illinois, cost of water \\
Colorado & Oregon, Idaho \\
Ohio & Indiana, Michigan \\
Pennsylvania & Massachusetts, New Jersey \\
Virginia & Maryland, Connecticut \\
Hours worked & weeks worked, education \\
\end{tabular}
\caption{\label{t-similar}Most similar features in demography space.}
\end{center}
\end{table}

\section{Fitting low rank models} \label{s-algorithms}

In this section, we discuss a number of algorithms that may be used to fit
generalized low rank models.
As noted earlier, 
it can be computationally hard to find the global optimum of a generalized low rank model.
For example, it is NP-hard to compute an exact solution to
$k$-means \cite{drineas2004}, nonnegative matrix factorization \cite{vavasis2009},
and weighted PCA and matrix completion \cite{gillis2011}
all of which are special cases of low rank models.

In \S\ref{s-am}, we will examine a number of local optimization methods
based on alternating minimization.
Algorithms implementing lazy variants of alternating minimization,
such as the alternating gradient, proximal gradient, or stochastic gradient algorithms,
are faster per iteration than alternating minimization, 
although they may require more iterations for convergence.
In numerical experiments, we notice that lazy variants often converge to points
with a lower objective value: it seems that these lazy variants are less
likely to be trapped at a saddle point than is alternating minimization.
\S\ref{s-convergence} explores the convergence of these algorithms in practice.

We then consider a few special cases in which we can show that 
alternating minimization converges to the global optimum in 
some sense: for example, we will see convergence with high probability,
approximately, and in retrospect.
\S\ref{s-initialization} discusses a few strategies for initializing
these local optimization methods, with provable guarantees in
special cases.
\S\ref{s-optimality-certificate} shows that for problems with
convex loss functions and quadratic regularization, 
it is sometimes possible to certify
global optimality of the resulting model.

\subsection{Alternating minimization}\label{s-am}
We showed earlier how to use alternating minimization to find an (approximate) solution
to a generalized low rank model. 
Algorithm (\ref{alg-am}) shows how to explicitly extend alternating minimization
to a generalized low rank model (\ref{eq-gpca}) with observations $\Omega$.

\begin{algorithm}
\caption{\label{alg-am}}
\begin{algorithmic}
\State \textbf{given} $X^0$, $Y^0$
\For{$k=1,2,\ldots$}
	\For{$i=1,\ldots,M$}
	    \State $x_i^k = \argmin_x \left( \sum_{j: (i,j) \in \Omega} L_{ij}(x y^{k-1}_j, A_{ij}) + r(x)\right)$
	\EndFor
	\For{$j=1,\ldots,N$}
	    \State $y_j^k = \argmin_y  \left(\sum_{i: (i,j) \in \Omega} L_{ij}(x^k_i y, A_{ij}) + \tilde r(y)\right)$
	\EndFor
\EndFor
\end{algorithmic}
\end{algorithm}

\paragraph{Parallelization.}
Alternating minimization parallelizes naturally over examples and features.
In Algorithm \ref{alg-am},
the loops over $i=1,\ldots,N$ and over $j=1,\ldots,M$ may both be executed in parallel.

\subsection{Early stopping}

It is not very useful to spend a lot of effort optimizing over $X$
before we have a good estimate for $Y$.
If an iterative algorithm is used to compute the minimum over $X$, 
it may make sense to stop the optimization over $X$ early
before going on to update $Y$.
In general, we may consider replacing the minimization over
$x$ and $y$ above by any update rule that moves towards the minimum.
This templated algorithm is presented as Algorithm \ref{alg-update}.
Empirically, we find that this approach often finds a better local minimum than
performing a full optimization over each factor in every iteration,
in addition to saving computational effort on each iteration.

\begin{algorithm}
\caption{\label{alg-update}}
\begin{algorithmic}
\State \textbf{given} $X^0$, $Y^0$
\For{$t=1,2,\ldots$}
  \For{$i=1,\ldots,m$}
      \State $x_i^t = \update_{L,r}(x_i^{t-1},Y^{t-1},A)$
  \EndFor
  \For{$j=1,\ldots,n$}
      \State $y_j^t = \update_{L,\tilde r}(y_j^{(t-1)T},X^{(t)T},A^T)$ 
  \EndFor
\EndFor
\end{algorithmic}
\end{algorithm}

We describe below a number of different update rules $\update_{L,r}$
by writing the $X$ update.
The $Y$ update can be implemented similarly. (In fact, it can be implemented
by substituting $\tilde r$ for $r$, switching the roles of $X$ and $Y$,
and transposing all matrix arguments.)
All of the approaches outlined below can still be executed in parallel 
over examples (for the $X$ update) and features (for the $Y$ update).

\paragraph{Gradient method.}
For example, we might take just one gradient step on the objective.
This method can be used as long as $L$, $r$, and $\tilde r$ do not take infinite values.
(If any of these functions $f$ is not differentiable, 
replace $\nabla f$ below by any subgradient of $f$ \cite{borwein2010,boyd2003subgradient}.)

We implement $\update_{L,r}$ as follows.
Let 
\[
g = \sum_{j: (i,j) \in \Omega} \nabla L_{ij}(x_i y_j, A_{ij}) y_j + \nabla r(x_i).
\]
Then set
\[
x_i^t = x_i^{t-1} - \alpha_t g,
\] 
for some step size $\alpha_t$.
For example, a common step size rule is $\alpha_t = 1/t$, 
which guarantees convergence to the globally
optimal $X$ if $Y$ is fixed \cite{borwein2010, boyd2003subgradient}.

\paragraph{Proximal gradient method.} \label{proxgradmethod}
If a function takes on the value $\infty$, it need
not have a subgradient at that point,
which limits the gradient update to cases where
the regularizer and loss are (finite) real-valued.
When the regularizer (but not the loss) takes on infinite values
(say, to represent a hard constraint),
we can use a proximal gradient method instead.

The proximal operator of a function $f$ \cite{parikh2013} is
\[
\prox_f(z) = \argmin _x (f(x) + \frac{1}{2}\|x-z\|_2^2).
\]
If $f$ is the indicator function of a set $\mathcal C$, 
the proximal operator of $f$ is just (Euclidean)
projection onto $\mathcal C$.

A proximal gradient update $\update_{L,r}$ is implemented as follows.
Let 
\[
g = \sum_{j: (i,j) \in \Omega} \nabla L_{ij}(x^{t-1}_i y^{t-1}_j, A_{ij}) y^{t-1}_j.
\]
Then set
\[
x_i^t = \prox_{\alpha_t r} (x_i^{t-1} - \alpha_t g),
\] 
for some step size $\alpha_t$.
The step size rule $\alpha_t = 1/t$ guarantees convergence to the globally
optimal $X$ if $Y$ is fixed, while
using a fixed, but sufficiently small, step size $\alpha$ guarantees
convergence to a small $O(\alpha)$ neighborhood around the optimum \cite{bertsekas2011}.
The technical condition required on the step size
is that $\alpha_t < 1/L$, where $L$ is the 
Lipshitz constant of the gradient of the objective function.
Bolte et al.\ have shown that the iterates $x_i^t$ and $y_j^t$ 
produced by the proximal gradient update rule
(which they call proximal alternating linearized minimization, or PALM)
globally converge to a critical point of the objective function 
under very mild conditions on $L$, $r$, and $\tilde r$ \cite{bolte2013}.

\paragraph{Prox-prox method.}
Letting $f_t(X) = \sum_{(i,j) \in \Omega} L_{ij}(x_i y^{t}_j, A_{ij})$, 
define the proximal-proximal (prox-prox) update
\[
X^{t+1} = \prox_{\alpha_t r}(\prox_{\alpha_t {f_t}}(X^t)).
\] 

The prox-prox update is simply a proximal gradient step on the objective
when $f$ is replaced by the \emph{Moreau envelope} of $f$,
\[
M_f(X) = \inf_{X'} \left( f(X')+ \|X-X'\|_F^2\right).
\]
(See \cite{parikh2013} for details.)
The Moreau envelope has the same minimizers as the original objective.
Thus, just as the proximal gradient method repeatedly applied to $X$
converges to global minimum of the objective if $Y$ is fixed,
the prox-prox method repeatedly applied to $X$ also
converges to global minimum of the objective if $Y$ is fixed
under the same conditions on the step size $\alpha_t$.
for any constant stepsize $\alpha \leq \|G\|_2^2$.
(Here, $\|G\|_2 = \sup_{\|x\|_2 \leq 1} \|Gx\|_2$ is the operator norm of $G$.)

This update can also be seen as a single iteration of ADMM when the dual variable in ADMM 
is initialized to 0; see \cite{boyd2011}. 
In the case of quadratic objectives, we will see below that the prox-prox update 
can be applied very efficiently,
making iterated prox-prox, or ADMM, effective means of computing the solution to the subproblems
arising in alternating minimization.

\paragraph{Choosing a step size.}
In numerical experiments, we find that using a slightly more nuanced rule
allowing \emph{different} step sizes for different rows and columns
can allow fast progress towards convergence
while ensuring that the value of the objective never increases.
The safeguards on step sizes we propose are quite important in practice:
without these checks, we observe \emph{divergence} when the initial step
sizes are chosen too large.

Motivated by the convergence proof in \cite{bertsekas2011},
for each \emph{row}, we seek a step size on the order of $1/\|g_i\|_2$,
where $g_i$ is the gradient of the objective function with respect to $x_i$.
We start by choosing an initial step size scale $\alpha_i$ for each row
of the same order as the average gradient of 
the loss functions for that row.
In the numerical experiments reported here, we choose $\alpha_i = 1$ for $i=1,\ldots,m$.
Since $g_i$ grows with the number of observations 
$n_i = |\{j: (i,j) \in \Omega\}|$ in row $i$, we achieve the desired scaling by setting
$\alpha_i^0 = \alpha_i / n_i$.
We take a gradient step on each row $x_i$ using the step size $\alpha_i$.
Our procedure for choosing $\alpha_j^0$ is the same.

We then check whether the objective value for the row,
\[
\sum_{j: (i,j) \in \Omega} L_{j}(x_i y_j, A_{ij}) + \gamma\|x_i\|_2^2,
\] 
has increased or decreased.
If it has increased, then we trust our first order approximation to the objective function
less far, and reduce the step size; 
if it has decreased, we gain confidence, and increase the step size. 
In the numerical experiments reported below, 
we decrease the step size by $30\%$ when the objective increases,
and increase the step size by $5\%$ when the objective decreases.
This check stabilizes the algorithm and prevents divergence even when the initial
scale has been chosen poorly.

We then do the same with respect to each column $y_j$: we take a
gradient step,
check if the objective value for the column has increased or decreased,
and adjust the step size. 

The time per iteration is thus $O(k(m + n + |\Omega|))$:
computing the gradient of the $i$th loss function with respect to $x_i$ takes time $O(k n_i)$;
computing the proximal operator of the square loss takes time $O(k)$;
summing these over all the rows $i=1,\ldots,m$ gives time $O(k(m + |\Omega|))$;
and adding the same costs for the column updates gives time $O(k(m + n + |\Omega|))$.
The checks on the objective value take time $O(k)$ per observation
(to compute the inner product $x_i y_j$ and value of the loss function for each observation)
and time $O(1)$ per row and column to compute the value of the regularizer.
Hence the total time per iteration is $O(k(m + n + |\Omega|))$.

By partitioning the job of updating different rows and different columns onto different processors, 
we can achieve an iteration time of $O(k(m + n + |\Omega|)/p)$ using $p$ processors.

\paragraph{Stochastic gradients.}
Instead of computing the full gradient of $L$ with respect to $x_i$ above,
we can replace the gradient $g$ in either the gradient or proximal gradient method
by any \emph{stochastic gradient} $g$, which is a vector that satisfies
\[
\Expect g =  \sum_{j: (i,j) \in \Omega} \nabla L_{ij}(x_i y_j, A_{ij}) y_j.
\]
A stochastic gradient can be computed by 
sampling $j$ uniformly at random from among observed features of $i$,
and setting $g = |\{j: (i,j) \in \Omega\}| \nabla L_{ij}(x_i y_j, A_{ij}) y_j$.
More samples from $\{j: (i,j) \in \Omega\}$ can be used to compute a less noisy stochastic gradient.

\subsection{Quadratic objectives}\label{s-alg-qp}
Here we describe how to efficiently implement the prox-prox update rule 
for quadratic objectives and arbitrary regularizers,
extending the factorization caching technique introduced in \S\ref{s-caching}.
We assume here that the objective is given by
\[
\|A-XY\|_F^2 + r(X) + \tilde r(Y).
\]
We will concentrate here on the $X$ update; 
as always, the $Y$ update is exactly analogous.

As in the case of quadratic regularization, 
we first form the Gram matrix $G=Y Y^T$.
Then the proximal gradient update is fast to evaluate:
\[
\prox_{\alpha_k r}(X - \alpha_k(XG - 2AY^T)).
\]

But we can take advantage of the ease of inverting the Gram matrix $G$ to design
a faster algorithm using the prox-prox update.
For quadratic objectives with Gram matrix $G = Y^T Y$,
the prox-prox update takes the simple form
\[
\prox_{\alpha_k r}((G+\frac{1}{\alpha_k}I)^{-1}(AY^T+\frac{1}{\alpha_k}X)).
\]
As in \S\ref{s-caching}, we can compute $(G+\frac{1}{\alpha_k}I)^{-1}(AY^T+\frac{1}{\alpha_k}X)$
in parallel by first caching the factorization of $(G+\frac{1}{\alpha_k}I)^{-1}$.
Hence it is advantageous to repeat this update many times before updating $Y$, 
since most of the computational effort is in forming $G$ and $AY^T$.

For example, in the case of nonnegative least squares, this update is just
\[
\Pi_+((G+\frac{1}{\alpha_k}I)^{-1}(AY^T+\frac{1}{\alpha_k}X)),
\]
where $\Pi_+$ projects its argument onto the nonnegative orthant.

\subsection{Convergence}\label{s-convergence}

Alternating minimization need not converge to the same model 
(or the same objective value)
when initialized at different starting points.
Through examples, we explore this idea here.
These examples are fit using the serial Julia implementation (presented in \S\ref{s-implementation})
of the alternating proximal gradient updates method.

\paragraph{Global convergence for quadratically regularized PCA.}
Figure~\ref{f-qpca-convergence} shows the convergence of the alternating proximal gradient update method 
on a quadratically regularized PCA problem 
with randomly generated, fully observed data $A = X^{\mathrm{true}} Y^{\mathrm{true}}$, 
where entries of $X^{\mathrm{true}}$ and $Y^{\mathrm{true}}$ are
drawn from a standard normal distribution. 
We pick five different random initializations of $X$ and $Y$ with standard normal entries to 
generate five different convergence trajectories.
Quadratically regularized PCA is a simple problem with an analytical solution (see \S\ref{s-qpca}),
and with no local minima (see Appendix~\ref{a-qpca}).
Hence it should come as no surprise that the trajectories all converge to the same,
globally optimal value.

\paragraph{Local convergence for nonnegative matrix factorization.}
Figure~\ref{f-nnmf-convergence} shows convergence of the same algorithm on 
a nonnegative matrix factorization model, with data generated in the same way as in Figure~\ref{f-qpca-convergence}.
(Note that $A$ has some negative entries, so the minimal objective value is strictly greater than zero.)
Here, we plot the convergence of the objective value, rather than the suboptimality, since we cannot 
provably compute the global minimum of the objective function.
We see that the algorithm converges to a different optimal value (and point) depending
on the initialization of $X$ and $Y$.
Three trajectories converge to the same optimal value (though one does so much faster than the others), 
one to a value that is somewhat better, and one to a value that is substantially worse.

\begin{figure}[htb!]
\begin{centering}
\begin{tikzpicture}[]
\begin{semilogyaxis}[view = {0}{90}, width = 5in, xmin = 0, xmax = 3, xtick={0,1,2,3}, ylabel = objective suboptimality, xlabel = time (s)]
\addplot[color=blue,no markers] coordinates {
(0.0, 162306.28878996446)
(0.2823331356048584, 88515.68451398159)
(0.6157271862030029, 84588.08409960709)
(0.902780294418335, 70886.53599541285)
(1.1303253173828125, 32240.45384143944)
(1.4286322593688965, 2111.5339258811555)
(1.6582612991333008, 130.42049882611704)
(1.937556266784668, 11.902208799352167)
(2.228029251098633, 3.388755576606542)
(2.4776461124420166, 2.7654083835847985)
(2.7177441120147705, 2.7151239590201897)
(2.9989211559295654, 2.7064999619332184)
};
\addplot[color=blue,no markers] coordinates {
(0.0, 162703.21353041055)
(0.24230003356933594, 85917.58345787862)
(0.48273396492004395, 76439.87823056939)
(0.68589186668396, 55434.26023839697)
(0.9246060848236084, 43845.332126883884)
(1.15061616897583, 40546.9133090243)
(1.4445431232452393, 34142.440174292606)
(1.6670122146606445, 19447.842732049507)
(1.9573931694030762, 3640.6461681475885)
(2.2152082920074463, 277.3552946653996)
(2.459270477294922, 23.787700842874543)
(2.687990665435791, 4.592662596539853)
(2.959815740585327, 3.1405922002718825)
};
\addplot[color=blue,no markers] coordinates {
(0.0, 155735.4217170638)
(0.3296689987182617, 87695.78196223431)
(0.588951826095581, 82968.03290522602)
(0.8999238014221191, 69322.28816119903)
(1.1653656959533691, 51199.0395935667)
(1.4196608066558838, 44873.53022183818)
(1.6670517921447754, 35802.39203938894)
(1.9260575771331787, 14852.79417538596)
(2.240365505218506, 1074.808353906841)
(2.498340368270874, 62.62688703946657)
(2.770785331726074, 6.573974391469349)
(3.0034172534942627, 2.9523836964742713)
};
\addplot[color=blue,no markers] coordinates {
(0.0, 166159.02999810255)
(0.21102499961853027, 87208.78526220514)
(0.45676088333129883, 77842.08539254131)
(0.6431748867034912, 59716.39723476941)
(0.8840019702911377, 48265.93949348301)
(1.0804390907287598, 41231.54159718496)
(1.3116631507873535, 22075.441753847856)
(1.5023109912872314, 2230.6429326032257)
(1.7062218189239502, 109.83159590236457)
(1.9477019309997559, 9.891811500970121)
(2.1414389610290527, 3.3609396373858544)
(2.3825738430023193, 2.9107721002333165)
};
\addplot[color=blue,no markers] coordinates {
(0.0, 181001.4571394322)
(0.25560903549194336, 89177.30554432835)
(0.4474220275878906, 82361.55299870433)
(0.6821129322052002, 67017.58150956912)
(0.8890230655670166, 36456.47664636432)
(1.1188480854034424, 7034.1491195651715)
(1.3343689441680908, 309.9931359295911)
(1.587702989578247, 25.074796773495535)
(1.776566982269287, 5.6102933043745224)
(1.9732730388641357, 3.9706505286936107)
(2.199392080307007, 3.8190755676519217)
(2.41235089302063, 3.7987829537278373)
};
\end{semilogyaxis}
\end{tikzpicture}
\caption{\label{f-qpca-convergence}Convergence of alternating proximal gradient updates on quadratically regularized PCA for $n=m=200$, $k=2$.}
\end{centering}
\end{figure}
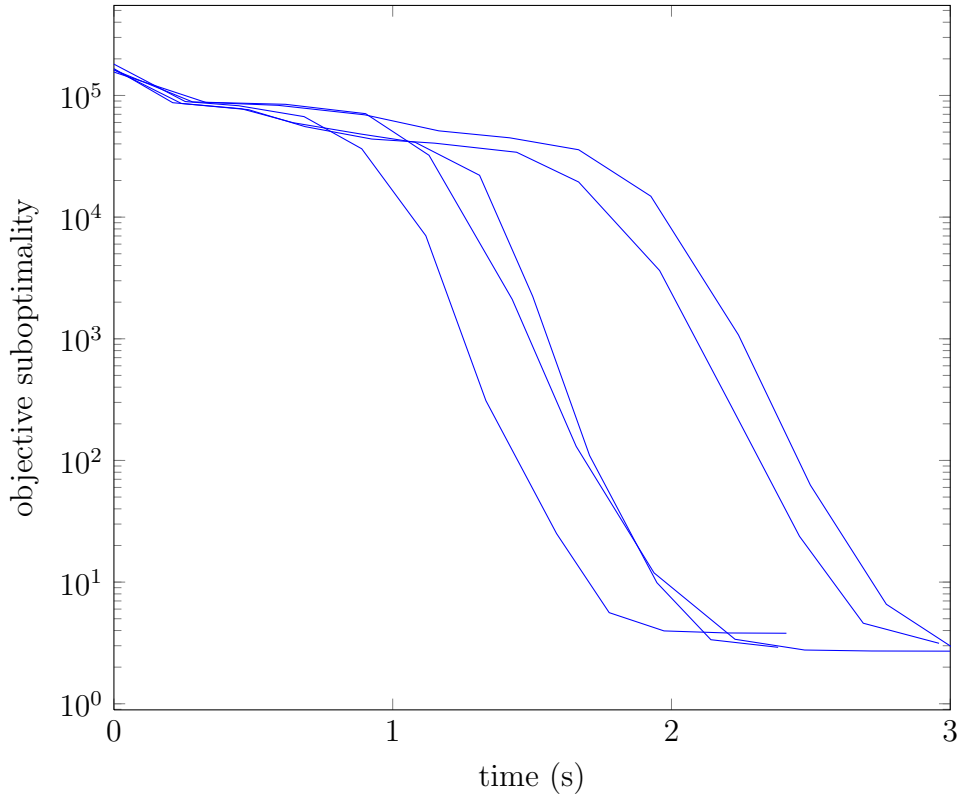

\begin{figure}[htb!]
\begin{centering}
\begin{tikzpicture}[]
\begin{axis}[view = {0}{90}, width = 5in, xmin = 0, xmax = 3, xtick={0,1,2,3}, ylabel = objective value, xlabel = time (s)]\addplot[color=blue,no markers] coordinates {
(0.177001953125, 17774.851098805386)
(0.22897601127624512, 17566.02411020949)
(0.336108922958374, 17379.470166831245)
(0.4035360813140869, 17163.250002883924)
(0.4490702152252197, 16936.62568781356)
(0.5354881286621094, 16738.57088278383)
(0.5710272789001465, 16585.26372526139)
(0.6136364936828613, 16478.336730115105)
(0.721158504486084, 16407.23343378354)
(0.7814385890960693, 16359.43262247778)
(0.8486506938934326, 16317.91366687969)
(0.9527637958526611, 16276.874571575665)
(1.021460771560669, 16233.843373905835)
(1.1346766948699951, 16186.104150967847)
(1.1995818614959717, 16132.936440588379)
(1.2672419548034668, 16073.254173504129)
(1.382185935974121, 16005.876888952358)
(1.4495458602905273, 15940.39618195999)
(1.4968218803405762, 15878.130567844502)
(1.5894479751586914, 15826.761074625396)
(1.656451940536499, 15781.469435885998)
(1.7219388484954834, 15740.27845125789)
(1.8313188552856445, 15702.40236164603)
(1.895042896270752, 15669.204437603)
(1.999344825744629, 15641.342959956506)
(2.062774896621704, 15619.276950048023)
(2.1098108291625977, 15601.957021988288)
(2.1922929286956787, 15588.581441080261)
(2.2277588844299316, 15578.163109137105)
(2.290457010269165, 15569.281883286847)
};
\addplot[color=blue,no markers] coordinates {
(0.0986940860748291, 17865.753511084054)
(0.1447889804840088, 17423.084470381753)
(0.2353379726409912, 16866.34661510219)
(0.2949528694152832, 16184.57059335689)
(0.34473681449890137, 15575.684004544142)
(0.42554283142089844, 15099.811976403196)
(0.460979700088501, 14610.339914094748)
(0.49863767623901367, 14007.102864582517)
(0.578284740447998, 13433.807949450978)
(0.6168358325958252, 13099.144785979386)
(0.6639456748962402, 12973.729303839296)
(0.7408847808837891, 12930.620257890829)
(0.7767188549041748, 12918.162210028055)
(0.8525447845458984, 12913.781172219533)
};
\addplot[color=blue,no markers] coordinates {
(0.039396047592163086, 17952.91633659547)
(0.07667303085327148, 17664.185371402225)
(0.1548628807067871, 17317.165655704575)
(0.1915607452392578, 16803.185123223673)
(0.227294921875, 16167.180766123483)
(0.30399513244628906, 15573.46454441867)
(0.3416931629180908, 15137.027101541404)
(0.4229881763458252, 14797.051701696519)
(0.4705312252044678, 14525.972655271675)
(0.5283610820770264, 14332.225604091756)
(0.6604430675506592, 14193.725327151495)
(0.7135109901428223, 14083.171363173411)
(0.763084888458252, 13987.132126509805)
(0.848660945892334, 13897.442440338475)
(0.8850898742675781, 13810.76123462777)
(0.9898698329925537, 13723.304527857668)
(1.038254737854004, 13635.546030828373)
(1.0762057304382324, 13553.313855531864)
(1.1671257019042969, 13480.702394630378)
(1.2122547626495361, 13423.70913633765)
(1.258260726928711, 13384.361296849205)
(1.356076955795288, 13353.389782157297)
(1.3946897983551025, 13326.954183591692)
(1.4353117942810059, 13304.30633782066)
(1.537336826324463, 13284.069042319987)
(1.6010937690734863, 13265.824149117594)
(1.7030668258666992, 13249.145175941994)
(1.740738868713379, 13232.626483444561)
(1.7763350009918213, 13216.38300301449)
(1.8521559238433838, 13200.545445423799)
(1.908823013305664, 13184.901052215238)
(1.960710048675537, 13169.473447251316)
(2.043921947479248, 13154.396639513054)
(2.098723888397217, 13139.760096895541)
(2.1629838943481445, 13125.58749741501)
(2.268411874771118, 13111.946810562473)
(2.3039920330047607, 13098.861175144633)
(2.3907179832458496, 13086.743922464368)
(2.426158905029297, 13075.51218163933)
(2.4730780124664307, 13065.039012843332)
(2.564666986465454, 13055.393315758649)
};
\addplot[color=blue,no markers] coordinates {
(0.054522037506103516, 18034.90848268617)
(0.11849594116210938, 17888.509080727872)
(0.21422791481018066, 17752.93540679853)
(0.27085089683532715, 17524.33643969415)
(0.3587329387664795, 17138.970159072072)
(0.41502881050109863, 16607.406515960676)
(0.46277689933776855, 16030.170961959131)
(0.5560858249664307, 15503.462267312167)
(0.6108589172363281, 15033.800941192361)
(0.6556880474090576, 14606.440682661387)
(0.7639269828796387, 14262.916896449751)
(0.8183999061584473, 13987.235850025549)
(0.8866069316864014, 13802.259369970358)
(0.9735610485076904, 13668.027821749778)
(1.0262839794158936, 13555.268722958137)
(1.1195499897003174, 13462.628826095903)
(1.1619839668273926, 13403.143191993233)
(1.2226178646087646, 13363.26099584446)
(1.3121068477630615, 13333.057760510335)
(1.3499937057495117, 13308.069664337954)
(1.4190287590026855, 13286.052387967375)
(1.5275325775146484, 13266.392958380658)
(1.5924205780029297, 13248.602194013436)
(1.684483528137207, 13231.50922007214)
(1.7480535507202148, 13214.61718850457)
(1.8115825653076172, 13198.03020515489)
(1.907212495803833, 13181.498110127935)
(1.95233154296875, 13165.129179210608)
(1.9943156242370605, 13149.153763471882)
(2.0716936588287354, 13133.661601406387)
(2.1232316493988037, 13118.704047723475)
(2.180736541748047, 13104.360457009581)
(2.2584924697875977, 13090.770809683116)
(2.294142484664917, 13078.382218459403)
(2.3695504665374756, 13066.912549545068)
(2.4057254791259766, 13056.318386582238)
(2.441619634628296, 13046.88867117435)
};
\addplot[color=blue,no markers] coordinates {
(0.1246180534362793, 17953.37252682276)
(0.1953449249267578, 17548.683029542106)
(0.25040292739868164, 17019.441271668773)
(0.35127997398376465, 16266.746308244235)
(0.4113650321960449, 15516.209854011919)
(0.5144181251525879, 14903.449906466825)
(0.5749311447143555, 14358.902878848847)
(0.6294541358947754, 13874.987378627275)
(0.7373471260070801, 13533.116956709415)
(0.7918341159820557, 13334.981763373964)
(0.8525021076202393, 13238.565365169241)
(0.9635031223297119, 13197.217901975931)
(1.024360179901123, 13175.84073277369)
(1.1004912853240967, 13164.583255532427)
(1.2004432678222656, 13159.462343532541)
};
\end{axis}

\end{tikzpicture}
\caption{\label{f-nnmf-convergence}Convergence of alternating proximal gradient updates on NNMF for $n=m=200$, $k=2$.}
\end{centering}
\end{figure}
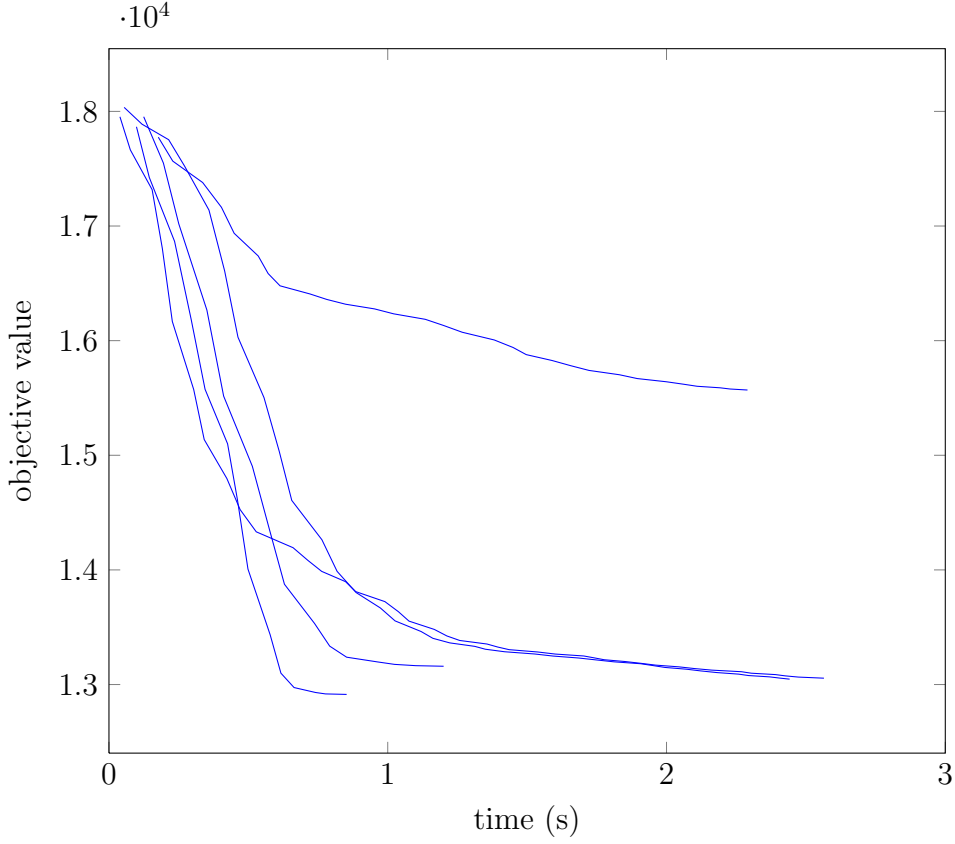

\subsection{Initialization}\label{s-initialization}
Alternating minimization need not converge to the same solution 
(or the same objective value)
when initialized at different starting points.
Above, we saw that alternating minimization can converge to models
with optimal values that differ significantly.

Here, we discuss two approaches to initialization
that result in provably good solutions, for special cases of the generalized low rank problem.
We then discuss how to apply these initialization schemes to more general models.

\paragraph{SVD.}
A literature that is by now extensive shows that the SVD provides 
a provably good initialization for the quadratic matrix completion problem (\ref{eq-mc})
\cite{keshavan2009, keshavan2010, keshavan2010regularization, jain2013, hardt2013, gunasekar2013}.
Algorithms based on alternating minimization have been shown to converge quickly 
(even geometrically \cite{jain2013}) 
to a global solution satisfying a recovery guarantee when the initial values 
of $X$ and $Y$ are chosen carefully; see \S\ref{s-mc} for more details.

Here, we extend the SVD initialization previously proposed for matrix completion
to one that works well for all PCA-like problems:
problems with convex loss functions that have been scaled as in \S\ref{s-gpca-scaling};
with data $A$ that consists of real values, Booleans, categoricals, and ordinals;
and with quadratic (or no) regularization.

But we will need a matrix on which to perform the SVD. 
What matrix corresponds to our data table?
Here, we give a simple proposal for how to construct such a matrix,
motivated by \cite{keshavan2010,jain2013,chatterjee2014}.
Our key insight is that the SVD is the solution to our problem when
the entries in the table have mean zero and variance one
(and all the loss functions are quadratic).
Our initialization will construct a matrix with mean zero and variance one
from the data table, take its SVD,
and invert the construction to produce the correct initialization.

Our first step is to expand the categorical columns taking on $d$ values 
into $d$ Boolean columns, and to re-interpret ordinal and Boolean columns as numbers.
The scaling we propose below is insensitive to the values of the numbers in the expansion
of the Booleans: for example, using (false, true)$=(0, 1)$ or (false, true)$=(-1, 1)$
produces the same initialization.
The scaling \emph{is} sensitive to the differences between ordinal values: while 
encoding (never, sometimes, always) as $(1, 2, 3)$ or as $(-5, 0, 5)$ will make no difference,
encoding these ordinals as $(0,1,10)$ \emph{will} result in a different initialization.

Now we assume
that the \emph{rows} of the data table are independent and identically distributed,
so they each have equal means and variances. 
Our mission is to standardize the \emph{columns}.
The observed entries in column $j$ have mean $\mu_j$ and variance $\sigma_j^2$,
\BEAS
\mu_j = \argmin_\mu \sum_{i: (i,j) \in \Omega} L_j(\mu, A_{ij}) \\
\sigma^2_j = \frac{1}{n_j - 1} \sum_{i: (i,j) \in \Omega} L_j(\mu_j, A_{ij}),
\EEAS
so the matrix whose $(i,j)$th entry is $(A_{ij} - \mu_j) / \sigma_j$ for $(i,j) \in \Omega$
has columns whose observed entries have mean 0 and variance 1.

Each missing entry can be safely replaced with 0
in the scaled version of the data without changing the column mean.
But the column \emph{variance} will decrease to $m_j/m$.
If instead we define
\[
\tilde A_{ij} = \left\{\ba{ll}
\frac{m}{\sigma_j m_j}(A_{ij} - \mu_j) & (i,j) \in \Omega \\
0 & \mbox{otherwise},
\ea \right.
\]
then the column will have mean 0 and variance 1.

Take the SVD $U \Sigma V^T$ of $\tilde A$, 
and let $\tilde U \in \reals^{m \times k}$, 
$\tilde \Sigma \in \reals^{k \times k}$,
and $\tilde V \in \reals^{n \times k}$ 
denote these matrices truncated to the top $k$ singular vectors and values.
We initialize $X = \tilde U \tilde \Sigma^{1/2}$, and 
$Y = \tilde \Sigma^{1/2} \tilde V^T \diag(\sigma)$.
The offset row in the model is initialized with the means, \ie, 
the $k$th column of $X$ is filled with 1's, and the $k$th row of $Y$ is filled with the means,
so $Y_{kj} = \mu_j$.


Finally, note that we need not compute the full SVD of $\tilde A$, but 
instead can simply compute the top $k$ singular triples.
For example, the randomized top $k$ SVD algorithm proposed in \cite{halko2011}
computes the top $k$ singular triples
of $\tilde A$ in time linear in $|\Omega|$, $m$, and $n$ (and quadratic in $k$).

Figure \ref{f-svd-convergence} compares the convergence of this SVD-based initialization
with random initialization on a low rank model for census data
described in detail in \S\ref{s-census}.
We initialize the algorithm at six different points: 
from five different random normal initializations
(entries of $X^0$ and $Y^0$ drawn iid from $\mathcal N(0,1)$),
and from the SVD of $\tilde A$.
The SVD initialization produces a better initial value for the objective function,
and also allows the algorithm to converge to a substantially lower final objective value
than can be found from \emph{any} of the five random starting points.
This behaviour indicates that the ``good'' local minimum discovered by the SVD
initialization is located in a basin of attraction that has low probability
with respect to the measure induced by random normal initialization.


\begin{figure}
\begin{center}
\begin{tikzpicture}[]
\begin{axis}[
		view = {0}{90}, 
		width=5in,
		xlabel=iteration,
		ylabel=objective value
	]
	\addplot[color=blue,no markers] coordinates {
(2.0, 873729.888194348)
(3.0, 829533.3136935838)
(4.0, 790819.871871953)
(5.0, 654909.7103520385)
(6.0, 487049.84848666174)
(7.0, 450916.8658144163)
(8.0, 417492.33511719946)
(9.0, 396128.8977446231)
(10.0, 389802.51199362864)
(11.0, 358839.4157658114)
(12.0, 355701.85878376185)
(13.0, 349679.71614544804)
(14.0, 345853.95336203405)
(15.0, 341893.3347655441)
(16.0, 340202.91638392943)
(17.0, 339262.41528693)
(18.0, 338040.3233988243)
(19.0, 336743.82688709884)
(20.0, 336700.2378720764)
(21.0, 336391.00880781905)
(22.0, 335218.34361301665)
(23.0, 334202.5661925412)
(24.0, 333765.98348767404)
(25.0, 333449.05999378714)
(26.0, 333247.5652059346)
(27.0, 332861.1630904935)
(28.0, 332788.35349440126)
(29.0, 332309.35957483895)
(30.0, 332213.8923786297)
(31.0, 331933.1938150223)
(32.0, 331765.4500437346)
(33.0, 331680.7253728598)
(34.0, 331680.7253728598)
};
\addlegendentry{random}
\addplot[color=blue,no markers] coordinates {
(2.0, 597253.8461498419)
(3.0, 419400.39729697385)
(4.0, 398470.35164651455)
(5.0, 383805.2268968231)
(6.0, 370170.69442871405)
(7.0, 367747.5550477448)
(8.0, 357459.74774598074)
(9.0, 354682.7131104882)
(10.0, 351208.60385651357)
(11.0, 348822.9519831141)
(12.0, 345042.4592163142)
(13.0, 342677.76105293806)
(14.0, 340629.9879194479)
(15.0, 338927.5121115017)
(16.0, 337973.33552107506)
(17.0, 337708.2788673361)
(18.0, 337195.7191872192)
(19.0, 336537.8478791696)
(20.0, 334769.35103221197)
(21.0, 332726.38057623943)
(22.0, 332623.884907817)
(23.0, 332323.09878918267)
(24.0, 330905.37106069585)
(25.0, 329937.98593887285)
(26.0, 329857.9456838864)
(27.0, 329410.4648749505)
(28.0, 328845.28896487685)
(29.0, 328732.5002903112)
(30.0, 328679.0248016629)
(31.0, 328532.1333770791)
(32.0, 328517.2453608084)
(33.0, 328372.8119698268)
(34.0, 327997.47346258216)
(35.0, 327741.9602940207)
(36.0, 327741.9602940207)
};
\addlegendentry{random}
\addplot[color=blue,no markers] coordinates {
(2.0, 576411.5994330856)
(3.0, 495058.6675279004)
(4.0, 431190.17726056685)
(5.0, 416914.0096587776)
(6.0, 405873.60922042676)
(7.0, 384062.3423589405)
(8.0, 356589.5437706274)
(9.0, 343075.73672896175)
(10.0, 341281.1165672342)
(11.0, 336332.92717524106)
(12.0, 332805.72634765104)
(13.0, 330162.1962212119)
(14.0, 328590.69682586315)
(15.0, 327073.17762617243)
(16.0, 326670.2913582652)
(17.0, 326012.1519455335)
(18.0, 325445.22035664227)
(19.0, 324966.24657695775)
(20.0, 324303.05407749827)
(21.0, 324246.14853914385)
(22.0, 324113.93110497)
(23.0, 323528.15745377267)
(24.0, 323214.1500636815)
(25.0, 323016.8898957058)
(26.0, 322658.2142292026)
(27.0, 322414.6945093508)
(28.0, 322082.36276016134)
(29.0, 321928.28429484623)
(30.0, 321780.2665071678)
(31.0, 321737.8206636559)
(32.0, 321465.0197656342)
(33.0, 321180.60638211784)
(34.0, 320970.14808292966)
(35.0, 320849.51340309303)
(36.0, 320849.51340309303)
};
\addlegendentry{random}
\addplot[color=blue,no markers] coordinates {
(2.0, 594710.3810784162)
(3.0, 510209.17144939327)
(4.0, 452631.9752830471)
(5.0, 448286.98160747276)
(6.0, 367602.80044831673)
(7.0, 350734.55335020606)
(8.0, 343795.35554982215)
(9.0, 343158.02004275593)
(10.0, 340122.64760387136)
(11.0, 339206.1853153142)
(12.0, 338931.17206849885)
(13.0, 335432.47435988707)
(14.0, 333386.6311611552)
(15.0, 326693.2621513337)
(16.0, 326512.17008019774)
(17.0, 325528.5603597335)
(18.0, 324846.27061073727)
(19.0, 322312.41073414544)
(20.0, 320614.9988541134)
(21.0, 318964.1674170207)
(22.0, 318850.585547303)
(23.0, 317646.0789413322)
(24.0, 317049.0693890328)
(25.0, 316836.37699606334)
(26.0, 316726.7563271608)
(27.0, 316171.95337234903)
(28.0, 315935.9862121451)
(29.0, 315812.0532946038)
(30.0, 315619.7553829763)
(31.0, 315439.6533439822)
(32.0, 315325.50360445393)
(33.0, 315297.8559614159)
(34.0, 315163.1887286475)
(35.0, 315163.1887286475)
};
\addlegendentry{random}
\addplot[color=blue,no markers] coordinates {
(2.0, 498772.8011211088)
(3.0, 419331.66706965974)
(4.0, 401464.1952900847)
(5.0, 389268.20930328936)
(6.0, 358260.09543399676)
(7.0, 352952.18099127605)
(8.0, 347132.5684779385)
(9.0, 346146.14687200915)
(10.0, 340172.4965664208)
(11.0, 337221.47873228043)
(12.0, 333480.2447429349)
(13.0, 329993.9600412735)
(14.0, 328796.97459730576)
(15.0, 328221.83549463976)
(16.0, 327222.84913555265)
(17.0, 326004.902628331)
(18.0, 324332.4345587176)
(19.0, 323135.8136569273)
(20.0, 322576.896227519)
(21.0, 321455.21066799876)
(22.0, 320704.8193688907)
(23.0, 320627.4888218349)
(24.0, 320218.9181408711)
(25.0, 320042.63528939907)
(26.0, 319675.1364988295)
(27.0, 319444.2823565937)
(28.0, 319210.22984718834)
(29.0, 319208.14492769947)
(30.0, 319068.46607704414)
(31.0, 318866.29466919426)
(32.0, 318828.73435844644)
(33.0, 318700.9421358182)
(34.0, 318629.01681062696)
(35.0, 318629.01681062696)
};
\addlegendentry{random}
\addplot[color=red,mark=x] coordinates {
(2.0, 244275.09163092592)
(3.0, 242148.4737869167)
(4.0, 240408.96539358806)
(5.0, 238864.38967507993)
(6.0, 237445.46912163793)
(7.0, 236273.47373403676)
(8.0, 235213.01759613975)
(9.0, 234152.17456335263)
(10.0, 233115.78153824137)
(11.0, 231999.14212798656)
(12.0, 231015.93484650613)
(13.0, 229606.3890291173)
(14.0, 228402.22705669003)
(15.0, 227311.93579723846)
(16.0, 225805.34488822435)
(17.0, 224382.32921205467)
(18.0, 222997.10726116505)
(19.0, 220725.53431488734)
(20.0, 219417.11086095416)
(21.0, 219266.47182962383)
(22.0, 218481.1715962479)
(23.0, 217790.22479193445)
(24.0, 217183.6676200531)
(25.0, 216850.74380838996)
(26.0, 216653.57031366669)
(27.0, 216405.56387848503)
(28.0, 215892.89091867718)
(29.0, 215650.00337761638)
(30.0, 215606.18108260122)
(31.0, 215169.7200061441)
(32.0, 214963.4051105464)
(33.0, 214770.6653782049)
(34.0, 214389.03818068697)
(35.0, 214299.26451170802)
(36.0, 214005.64027861634)
(37.0, 213878.63320030703)
(38.0, 213749.04801865047)
(39.0, 213601.39338842558)
(40.0, 213547.83779308852)
(41.0, 213273.1137467881)
(42.0, 212960.13253841278)
(43.0, 212798.43768531622)
(44.0, 212684.87478676662)
(45.0, 212540.00310099075)
(46.0, 212540.00310099075)
};
\addlegendentry{SVD}
\end{axis}

\end{tikzpicture}
\caption{\label{f-svd-convergence}Convergence from five different random initializations, and from the SVD initialization.}
\end{center}
\end{figure}
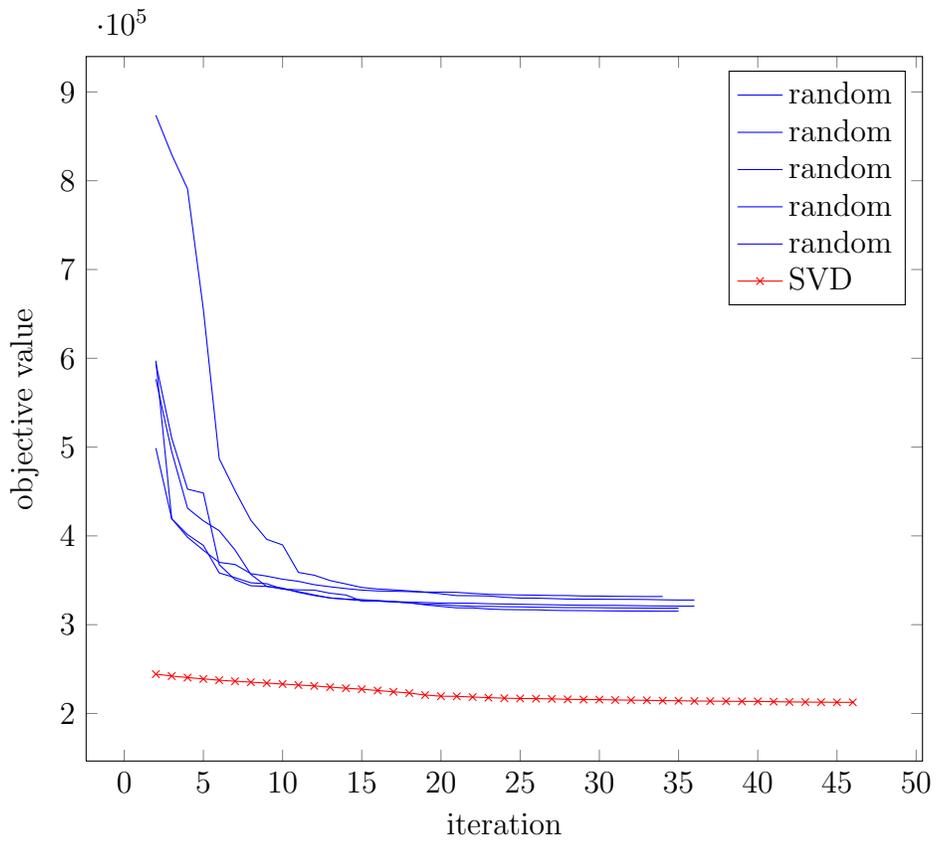

\paragraph{$k$-means$++$.} 
The $k$-means$++$ algorithm is an initialization scheme designed for quadratic clustering problems \cite{arthur2007}.
It consists of choosing an initial cluster centroid at random from the points, and 
then choosing the remaining $k-1$ centroids from the points $x$ that have not yet been chosen
with probability proportional to $D(x)^2$, where $D(x)$ is the minimum distance of $x$ 
to any previously chosen centroid. 

Quadratic clustering is known to be NP-hard, even with only two clusters ($k = 2$) \cite{drineas2004}.
However, $k$-means$++$ followed by alternating minimization
gives a solution with expected approximation ratio within $O(\log k)$ of the optimal value \cite{arthur2007}. 
(Here, the expectation is over the randomization in the initialization algorithm.)
In contrast, an arbitrary initialization of the cluster centers
for $k$-means can result in a solution whose value is arbitrarily worse than the true optimum.

A similar idea can be used for other low rank models. 
If the model rewards a solution that is spread out, as is the case in 
quadratic clustering or subspace clustering, 
it may be better to initialize the algorithm by choosing
elements with probability proportional to a distance measure, as in $k$-means$++$.
In the $k$-means$++$ procedure, one can use the loss function $L(u)$  as the distance metric $D$.

\subsection{Global optimality}\label{s-optimality-certificate}

All generalized low rank models are non-convex, but some are more non-convex 
than others. In particular, for some problems, the only important
source of non-convexity is the low rank constraint. For these problems,
it is sometimes possible to certify global optimality of a model by considering
an equivalent rank-constrained convex problem.

The arguments in this section are similar to ones found in \cite{recht2010},
in which Recht et al. propose using a factored (nonconvex) formulation of the 
(convex) nuclear norm regularized estimator in order to efficiently solve
the large-scale SDP arising in a matrix completion problem.
However, the algorithm in \cite{recht2010} relies on a subroutine for
finding a local minimum of an augmented Lagrangian which has the same 
biconvex form as problem~(\ref{eq-mc}).
Finding a local minimum of this problem (rather than a saddle point) may be hard.
In this section, we avoid the issue of finding a local minimum of the nonconvex problem; 
we consider instead whether it is possible to verify global optimality 
when presented with some putative solution.

\paragraph{The factored problem is equivalent to the rank constrained problem.}

Consider the \emph{factored} problem
\BEQ\label{eq-factored}
\begin{array}{ll}
\mbox{minimize} & L(XY) + \frac{\gamma}{2} \|X\|_F^2 + \frac{\gamma}{2} \|Y\|_F^2,
\end{array}
\EEQ
with variables $X \in \reals^{m \times k}$, $Y \in \reals^{k \times n}$,
where $L:\reals^{m \times n} \to \reals$ is any convex loss function.
Compare this to the \emph{rank-constrained} problem
\BEQ\label{eq-rank-constrained}
\ba{ll}
\mbox{minimize} & L(Z) + \gamma \|Z\|_* \\
\mbox{subject to} & \rank(Z) \leq k.
\ea
\EEQ
with variable $Z \in \reals^{m \times n}$.
Here, we use $\|\cdot\|_*$ to denote the nuclear norm,
the sum of the singular values of a matrix.

\begin{theorem}\label{thm-convex-equivalence}
$(X^\star, Y^\star)$ is a solution to the factored problem~\ref{eq-factored}
if and only if 
$Z^\star = X^\star Y^\star$ is a solution to the rank-constrained problem~\ref{eq-rank-constrained},
and $\|X^\star\|_F^2 = \|Y^\star\|_F^2 = \frac{1}{2} \|Z^\star\|_*$.
\end{theorem}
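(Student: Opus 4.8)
The plan is to reduce everything to the variational (``factorization'') characterization of the nuclear norm. The one fact I will lean on is that, for any $Z$ with $\rank(Z) \le k$,
\[
\|Z\|_* = \min\left\{\tfrac{1}{2}\left(\|X\|_F^2 + \|Y\|_F^2\right) : X \in \reals^{m\times k},\ Y \in \reals^{k\times n},\ XY = Z\right\},
\]
and that every minimizing factorization is \emph{balanced}, meaning $\|X\|_F = \|Y\|_F$. First I would prove the ``$\le$'' inequality by chaining the Schatten--H\"older bound $\|XY\|_* \le \|X\|_F\,\|Y\|_F$ with the arithmetic--geometric-mean bound $\|X\|_F\,\|Y\|_F \le \tfrac{1}{2}(\|X\|_F^2 + \|Y\|_F^2)$. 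For attainment and the reverse inequality I would exhibit the SVD-based factorization: writing $Z = U\Sigma V^T$, set $X = U\Sigma^{1/2}$ and $Y = \Sigma^{1/2}V^T$, padded with zero columns and rows to bring the inner dimension up to exactly $k$ (possible precisely because $\rank(Z)\le k$). This factorization satisfies $\|X\|_F^2 = \|Y\|_F^2 = \sum_i \sigma_i = \|Z\|_*$, so both the value $\|Z\|_*$ and the balancing are achieved. Finally, at any minimizer both inequalities above are forced to be equalities, and inspecting the equality case of AM--GM yields $\|X\|_F = \|Y\|_F$, which together with the value identity pins down the individual factor norms.

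Second, I would establish equality of the two optimal values by \emph{partial minimization over the factors}. Every product $XY$ with inner dimension $k$ has rank at most $k$, and conversely every $Z$ with $\rank(Z)\le k$ admits such a factorization; hence minimizing the factored objective amounts to first fixing $Z = XY$ and minimizing the regularizer over all factorizations of $Z$. By the lemma this inner minimum equals $\gamma\|Z\|_*$, so
\[
\min_{X,Y}\ L(XY) + \tfrac{\gamma}{2}\|X\|_F^2 + \tfrac{\gamma}{2}\|Y\|_F^2 \;=\; \min_{\rank(Z)\le k}\ \left(L(Z) + \gamma\|Z\|_*\right),
\]
which is exactly the optimal value of the rank-constrained problem. (I assume $\gamma > 0$, which is what makes the factor norms matter and forces balancing at optimality.)

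Third, I would upgrade this value equality to the claimed solution correspondence by exploiting tightness of the bounds in each direction. Forward: given an optimal $(X^\star, Y^\star)$ with $Z^\star = X^\star Y^\star$, the inequality $L(Z^\star) + \gamma\|Z^\star\|_* \le L(X^\star Y^\star) + \tfrac{\gamma}{2}(\|X^\star\|_F^2 + \|Y^\star\|_F^2)$ combined with the value equality shows $Z^\star$ is feasible and optimal for the rank-constrained problem; optimality of $(X^\star, Y^\star)$ then forces this inequality to be tight, so $(X^\star, Y^\star)$ is a \emph{minimizing} factorization of $Z^\star$ and the lemma delivers the stated relation between $\|X^\star\|_F$, $\|Y^\star\|_F$, and $\|Z^\star\|_*$. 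Converse: if $Z^\star = X^\star Y^\star$ solves the rank-constrained problem and the factors satisfy the stated norm condition, then $\tfrac{\gamma}{2}(\|X^\star\|_F^2 + \|Y^\star\|_F^2) = \gamma\|Z^\star\|_*$, so $(X^\star, Y^\star)$ attains the common optimal value and is therefore a solution of the factored problem.

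I expect the main obstacle to be the variational lemma, and specifically the equality-case analysis that produces the \emph{balancing} conclusion rather than merely the value identity, together with the bookkeeping that restricting the inner factorization dimension to exactly $k$ never raises the infimum above $\|Z\|_*$ (this is where $\rank(Z)\le k$ enters, through the zero-padding). Once the lemma is in place, the two-way correspondence is a routine partial-minimization-and-tightness argument, and the only remaining care is to keep track of the factor of $\gamma$ so that $\tfrac{\gamma}{2}(\|X\|_F^2 + \|Y\|_F^2)$ matches $\gamma\|Z\|_*$ at optimal factorizations.
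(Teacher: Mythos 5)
Your proposal is correct and takes essentially the same route as the paper: the same variational lemma $\|Z\|_* = \inf_{XY=Z}\,\tfrac{1}{2}\left(\|X\|_F^2+\|Y\|_F^2\right)$, proved on one side by the Cauchy--Schwarz/H\"older chain $\|XY\|_*\le\|X\|_F\|Y\|_F\le\tfrac{1}{2}\left(\|X\|_F^2+\|Y\|_F^2\right)$ and on the other by the SVD factorization $X=U\Sigma^{1/2}$, $Y=\Sigma^{1/2}V^T$, followed by partial minimization over factorizations of $Z=XY$ (a step the paper compresses into ``follows as a corollary,'' and which you usefully make explicit, along with the zero-padding needed to keep the inner dimension exactly $k$). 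One remark: both your balancing argument and the paper's own lemmas actually yield $\|X^\star\|_F^2=\|Y^\star\|_F^2=\|Z^\star\|_*$ at optimality, i.e., $\tfrac{1}{2}\left(\|X^\star\|_F^2+\|Y^\star\|_F^2\right)=\|Z^\star\|_*$, so the factor ``$\tfrac{1}{2}\|Z^\star\|_*$'' in the theorem statement is a typo that neither derivation supports.
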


We will need the following lemmas to understand the relation between
the rank-constrained problem and the factored problem.

\begin{lemma}\label{t-nuclear-ub}
Let $XY = U \Sigma V^T$ be the SVD of $XY$, where $\Sigma = \diag(\sigma)$. Then
\BEQ
\|\sigma\|_1 \leq \frac{1}{2}( ||X||_F^2 + ||Y||_F^2 ).
\EEQ
\end{lemma}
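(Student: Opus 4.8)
The plan is to reduce the bound to an elementwise arithmetic--geometric mean inequality applied inside a trace. Since $\|\sigma\|_1$ is exactly the nuclear norm of $XY$, I would first rewrite it as a trace. Because $U$ and $V$ have orthonormal columns, $U^TU = I$ and $V^TV = I$, so from $XY = U\Sigma V^T$ we obtain $\Sigma = U^T(XY)V$ and hence
\[
\|\sigma\|_1 = \mathrm{tr}(\Sigma) = \mathrm{tr}(U^T X Y V).
\]
Introducing the contracted factors $\tilde X = U^T X$ and $\tilde Y = YV$, this reads $\|\sigma\|_1 = \mathrm{tr}(\tilde X \tilde Y)$.

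The second step is the core inequality. Writing the trace entrywise as $\mathrm{tr}(\tilde X \tilde Y) = \sum_{i}\sum_{l} \tilde X_{il}\tilde Y_{li}$ and applying $ab \le \half(a^2 + b^2)$ to each summand gives
\[
\mathrm{tr}(\tilde X \tilde Y) \le \half \sum_{i,l}\left(\tilde X_{il}^2 + \tilde Y_{li}^2\right) = \half\left(\|\tilde X\|_F^2 + \|\tilde Y\|_F^2\right).
\]

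Finally, I would discharge the contraction: multiplying by a matrix with orthonormal columns cannot increase the Frobenius norm. Since $UU^T \preceq I$ and $VV^T \preceq I$, we have $\|\tilde X\|_F^2 = \mathrm{tr}(X^T U U^T X) \le \mathrm{tr}(X^T X) = \|X\|_F^2$ and, symmetrically, $\|\tilde Y\|_F^2 = \mathrm{tr}(V^T Y^T Y V) = \mathrm{tr}(Y^T Y VV^T) \le \|Y\|_F^2$. Chaining the three displays yields $\|\sigma\|_1 \le \half(\|X\|_F^2 + \|Y\|_F^2)$.

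I do not expect a serious obstacle here; the only points demanding care are the dimension bookkeeping ($U \in \reals^{m\times r}$, $V \in \reals^{n\times r}$ with $r = \rank(XY) \le k$, so that $\tilde X \tilde Y = \Sigma$ is exactly $r\times r$) and the direction of the Frobenius-norm inequality, which rests on $UU^T$ and $VV^T$ being orthogonal projections rather than the identity. An alternative, essentially equivalent, route would be to invoke the dual characterization $\|Z\|_* = \sup_{\|W\|_2 \le 1}\mathrm{tr}(W^T Z)$ with the choice $W = UV^T$; I prefer the direct trace computation above, since it exposes the AM--GM step explicitly and thereby makes transparent the equality conditions ($\tilde X_{il} = \tilde Y_{li}$, together with tightness of the contraction) that are needed to establish the norm-balancing claim $\|X^\star\|_F^2 = \|Y^\star\|_F^2 = \half\|Z^\star\|_*$ later in Theorem~\ref{thm-convex-equivalence}.
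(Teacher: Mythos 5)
Your proposal is correct and takes essentially the same route as the paper's proof: both reduce the claim to $\|\sigma\|_1 = \mathrm{tr}(U^T XY\, V)$, then bound this trace by $\frac{1}{2}(\|X\|_F^2 + \|Y\|_F^2)$ using the AM--GM inequality together with the fact that multiplying by $U^T$ or $V$ (matrices with orthonormal columns) cannot increase the Frobenius norm. The only cosmetic difference is that the paper first applies Cauchy--Schwarz, $\mathrm{tr}(U^TX \cdot YV) \le \|U^TX\|_F\|YV\|_F$, and then scalar AM--GM, whereas you fuse those two steps into a single elementwise AM--GM inside the trace; your remark that the contraction step rests on $UU^T \preceq I$ and $VV^T \preceq I$ being projections (rather than ``orthogonal invariance,'' as the paper loosely puts it) is in fact the more precise statement.
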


\begin{proof}
We may derive this fact as follows:
\BEAS
\|\sigma\|_1 &=& \tr( U^T XY V) \\
&\leq& \|U^T X\|_F \|Y V\|_F \\
&\leq& \|X\|_F \|Y\|_F \\
&\leq& \frac{1}{2}( ||X||_F^2 + ||Y||_F^2 ), \\
\EEAS
where the first inequality above uses the Cauchy-Schwartz inequality,
the second relies on the orthogonal invariance of the Frobenius norm,
and the third follows from the basic inequality $ab \leq \frac{1}{2} (a^2 + b^2)$
for any real numbers $a$ and $b$.
\end{proof}

\begin{lemma}\label{t-nuclear}
For any matrix $Z$, $\|Z\|_* = \inf_{XY = Z} \frac{1}{2}( ||X||_F^2 + ||Y||_F^2 )$.
\end{lemma}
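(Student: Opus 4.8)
The plan is to prove the two inequalities separately, and to observe that the infimum is in fact attained. One direction is essentially free: the inequality $\|Z\|_* \le \inf_{XY=Z} \frac{1}{2}(\|X\|_F^2 + \|Y\|_F^2)$ is exactly what Lemma~\ref{t-nuclear-ub} delivers. Indeed, for \emph{any} factorization $XY = Z$, writing the SVD $Z = XY = U\Sigma V^T$ with $\sigma = \diag(\Sigma)$ gives $\|Z\|_* = \|\sigma\|_1 \le \frac{1}{2}(\|X\|_F^2 + \|Y\|_F^2)$ by that lemma. Since this holds for every admissible $(X,Y)$, it persists after taking the infimum over all factorizations, yielding the ``$\le$'' direction.

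For the reverse inequality, I would exhibit a single factorization that achieves the value $\|Z\|_*$, which simultaneously shows the infimum is no larger than $\|Z\|_*$ and that it is attained. Take the (compact) SVD $Z = U\Sigma V^T$, with $U$ and $V$ having orthonormal columns and $\Sigma = \diag(\sigma_1,\ldots,\sigma_r)$ where $r = \rank(Z)$. Set
\[
X = U\Sigma^{1/2}, \qquad Y = \Sigma^{1/2} V^T.
\]
Then $XY = U\Sigma^{1/2}\Sigma^{1/2}V^T = U\Sigma V^T = Z$, so this is a valid factorization (with inner dimension $r$). The key computation is that, using $U^T U = I$ and $V^T V = I$,
\[
\|X\|_F^2 = \tr(\Sigma^{1/2} U^T U \Sigma^{1/2}) = \tr(\Sigma) = \|Z\|_*,
\]
and identically $\|Y\|_F^2 = \tr(\Sigma^{1/2} V^T V \Sigma^{1/2}) = \tr(\Sigma) = \|Z\|_*$. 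Hence $\frac{1}{2}(\|X\|_F^2 + \|Y\|_F^2) = \|Z\|_*$, so this particular factorization meets the lower bound.

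Combining the two directions gives $\inf_{XY=Z} \frac{1}{2}(\|X\|_F^2 + \|Y\|_F^2) = \|Z\|_*$, with the infimum attained by the balanced SVD factorization above. I do not expect any real obstacle here: the only thing to be careful about is matching conventions — making sure the $Z = XY$ factorization in the statement aligns with the SVD $Z = U\Sigma V^T$ used in Lemma~\ref{t-nuclear-ub}, and noting that the ``symmetric'' choice of splitting $\Sigma = \Sigma^{1/2}\cdot\Sigma^{1/2}$ is precisely what turns the inequality $ab \le \frac{1}{2}(a^2+b^2)$ (used in Lemma~\ref{t-nuclear-ub}) into an equality, i.e.\ it achieves the arithmetic–geometric-mean bound termwise on the singular values. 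This equality case is what makes the exhibited factorization optimal.
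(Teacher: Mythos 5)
Your proof is correct and follows essentially the same route as the paper: the lower bound via Lemma~\ref{t-nuclear-ub}, and the upper bound (with attainment) via the balanced factorization $X = U\Sigma^{1/2}$, $Y = \Sigma^{1/2}V^T$ from the SVD of $Z$. The only cosmetic difference is that you verify $\|X\|_F^2 = \tr(\Sigma) = \|Z\|_*$ by a trace computation where the paper cites orthogonal invariance of the Frobenius norm, which is the same calculation.
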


\begin{proof}
Writing $Z = U D V^T$ and recalling the definition of the nuclear norm
$\|Z\|_* = \|\sigma\|_1$, 
we see that Lemma \ref{t-nuclear-ub} implies 
\[
\|Z\|_* \leq \inf_{XY = Z} \frac{1}{2}( ||X||_F^2 + ||Y||_F^2 ).
\]
But taking $X = U \Sigma^{1/2}$ and $Y = \Sigma^{1/2} V^T$, 
we have 
\[
\frac{1}{2}( ||X||_F^2 + ||Y||_F^2 ) = 
\frac{1}{2}( \| \Sigma^{1/2} \|_F ^2 +  \| \Sigma^{1/2} \|_F ^2 ) = \|\sigma\|_1,
\]
(using once again the orthogonal invariance of the Frobenius norm),
so the bound is satisfied with equality.
\end{proof}

Note that the infimum is achieved by
$X = U \Sigma^{1/2} T$ and $Y = T^T \Sigma^{1/2} V^T$
for any orthonormal matrix $T$.

Theorem \ref{thm-convex-equivalence} follows as a corollary, 
since $L(Z) = L(XY)$ so long as $Z = XY$.

\paragraph{The rank constrained problem is sometimes equivalent to an unconstrained problem.}

Note that problem~(\ref{eq-rank-constrained}) is still a hard problem to solve:
it is a rank-constrained semidefinite program.
On the other hand, the same problem without the rank constraint is convex
and tractable (though not easy to solve at scale).
In particular, it is possible to write down an optimality condition for 
the problem
\BEQ\label{eq-rank-unconstrained}
\ba{ll}
\mbox{minimize} & L(Z) + \gamma \|Z\|_*
\ea
\EEQ
that certifies that a matrix $Z$ is globally optimal.
This problem is a relaxation of problem~(\ref{eq-rank-constrained}), and so has an optimal value
that is at least as small.
Furthermore, if any solution to problem~(\ref{eq-rank-unconstrained}) has rank no more than $k$,
then it is feasible for problem~(\ref{eq-rank-constrained}),
so the optimal values of problem~(\ref{eq-rank-unconstrained}) and 
problem~(\ref{eq-rank-constrained})
must be the same.
Hence any solution of problem~(\ref{eq-rank-unconstrained}) with rank no more than $k$
also solves problem~\ref{eq-rank-constrained}.

Recall that the matrix $Z$ is a solution the problem $\mathcal{U}$ if and only if 
\[
0 \in \partial (L(Z) + \gamma \|Z\|_*),
\]
where $\partial f(Z)$ is the \emph{subgradient} of the function $f$ at $Z$.
The subgradient is a \emph{set-valued} function.

The subgradient of the nuclear norm at a matrix $Z = U \Sigma V^T$ is 
any matrix of the form $U V^T + W$ where $U^T W = 0$, $W V = 0$, and $\|W\|_2 \leq 1$.
Equivalently, define the set-valued function $\sign$ on scalar arguments $x$ as 
\[
\sign(x) = \left\{
\ba{ll}
\{1\} & x > 0 \\
{[}-1, 1{]} & x = 0 \\
\{-1\} & x < 0,
\ea
\right . ,
\]
and define $(\sign(x))_i = \sign(x_i)$ for vectors $x \in \reals^n$.
Then we can write the subgradient of the nuclear norm at $Z$ as 
\[
\partial \|Z\|_* = U \diag(\sign(\sigma)) V^T,
\]
where now we use the full SVD of $Z$ with $U \in \reals^{m \times \min(m, n)}$,
$V \in \reals^{n \times \min(m, n)}$, and $\sigma \in \reals^{\min(m,n)}$.

Hence $Z = U \Sigma V^T$ is a solution to problem~(\ref{eq-rank-unconstrained})
if and only if 
\[
0 \in \partial L(Z) + \gamma (U V^T + W),
\]
or more simply, if
\BEQ\label{eq-convex-certificate}
\|(1/\gamma) G + U V^T\|_2 \leq 1
\EEQ
for some $G \in \partial L(Z)$.
In particular, if a matrix $Z$ with rank no more than $k$
satisfies~(\ref{eq-convex-certificate}), then $Z$ also solves
the rank-constrained problem (\ref{eq-rank-constrained}). 

This result allows us to (sometimes) certify global optimality of a particular model.
Given a model $(X, Y)$, we compute the SVD of the product $XY = U \Sigma V^T$
and an element $G \in \partial L(Z)$.
If $\|(1/\gamma) G + U V^T\|_2 \leq 1$, then $(X, Y)$ is globally optimal.
(If the objective is differentiable then we simply pick $G = \nabla L(Z)$;
otherwise some choices of $G \in \partial L(Z)$ may produce invalid certificates
even if $(X, Y)$ is globally optimal.) 

\section{Choosing low rank models}\label{s-choosing}


\subsection{Regularization paths}
Suppose that we wish to understand the entire \emph{regularization path} for a GLRM;
that is, we would like to know the solution $(X(\gamma),Y(\gamma))$ to the problem
\[
\begin{array}{ll}
\mbox{minimize} & \sum_{(i,j) \in \Omega} L_{ij}(x_i y_j, A_{ij}) 
+ \gamma \sum_{i=1}^m r_i(x_i) + \gamma \sum_{j=1}^n \tilde r_j(y_j)
\end{array}
\]
as a function of $\gamma$.
Frequently, the regularization path may be computed almost as quickly as the solution for 
a single value of $\gamma$.
We can achieve this by initially fitting the model with a very high value for $\gamma$, which
is often a very easy problem. 
(For example, when $r$ and $\tilde r$ are norms, the solution is $(X,Y)=(0,0)$
for sufficiently large $\gamma$.)
Then we may fit models corresponding to smaller and smaller values of $\gamma$ by initializing
the alternating minimization algorithm from our previous solution.
This procedure is sometimes called a \emph{homotopy method}.

For example, Figure~\ref{f-regpath} shows the regularization path for 
quadratically regularized Huber PCA on a synthetic data set. 
We generate a dataset $A = XY + S$ with $X\in\reals^{m\times k}$, $Y\in\reals^{k \times n}$,
and $S\in\reals^{m \times n}$,
with $m=n=300$ and $k=3$.
The entries of $X$ and $Y$ are drawn from a standard normal distribution,
while the entries of the sparse noise matrix $S$ are drawn from a uniform distribution
on $[0,1]$ with probability $0.05$, and are $0$ otherwise.
We fit a rank 5 GLRM to an observation set $\Omega$ consisting of 
$10\%$ of the entries in the matrix, drawn uniformly at random from $\{1,\ldots,i\}\times
\{1,\ldots,j\}$,
using Huber loss and quadratic regularization, and vary the regularization parameter.
That is, we fit the model
\[
\begin{array}{ll}
\mbox{minimize} & \sum_{(i,j) \in \Omega} \huber(x_i y_j, A_{ij}) 
+ \gamma \sum_{i=1}^m \|x_i\|_2^2 + \gamma \sum_{j=1}^n \|y_j\|_2^2
\end{array}
\]
and vary the regularization parameter $\gamma$.
The figure plots both the normalized training error, 
\[
\frac{1}{|\Omega|}\sum_{(i,j) \in \Omega} \huber(x_i y_j, A_{ij}),
\]
and the normalized test error, 
\[
\frac{1}{nm - |\Omega|}\sum_{(i,j) \not \in \Omega} \huber(x_i y_j, A_{ij}),
\]
of the fitted model $(X,Y)$, for $\gamma$ ranging from 0 to 3.
Here, we see that while the training error decreases and $\gamma$ decreases, the test
error reaches a minimum around $\gamma = .5$.
Interestingly, it takes only three times longer (about 3 seconds) to generate the entire regularization path
than it does to fit the model for a single value of the regularization parameter (about 1 second).

\begin{figure}[htb!]
\begin{centering}
\begin{tikzpicture}[]
	\begin{axis}[
		view = {0}{90}, 
		width=5in,
		xlabel=$\gamma$,
		ylabel=normalized error
	]
		
	\addplot+ coordinates {
		(3.0, 0.268134158973883)
		(2.9, 0.268134158973883)
		(2.8, 0.268134158973883)
		(2.7, 0.268134158973883)
		(2.6, 0.268134158973883)
		(2.5, 0.268134158973883)
		(2.4000000000000004, 0.268134158973883)
		(2.3000000000000003, 0.268134158973883)
		(2.1999999999999997, 0.268134158973883)
		(2.0999999999999996, 0.268134158973883)
		(2.0, 0.268134158973883)
		(1.9, 0.268134158973883)
		(1.7999999999999998, 0.268134158973883)
		(1.7, 0.268134158973883)
		(1.6, 0.268134158973883)
		(1.5, 0.268134158973883)
		(1.4, 0.268134158973883)
		(1.3, 0.268134158973883)
		(1.2000000000000002, 0.26026322502268284)
		(1.0999999999999999, 0.26026322502268284)
		(1.0, 0.2543554108470851)
		(0.8999999999999999, 0.2543554108470851)
		(0.8, 0.2458180301410701)
		(0.7, 0.23692671977507312)
		(0.6000000000000001, 0.23732936092681242)
		(0.5, 0.23732936092681242)
		(0.4, 0.23732936092681242)
		(0.30000000000000004, 0.3011279841378484)
		(0.2, 0.3187365802787273)
		(0.1, 0.3343240062352943)
		(0.0, 0.35603664941298185)
	};
	\addlegendentry{test error}

	\addplot+ coordinates {
		(3.0, 0.13628214585822127)
		(2.9, 0.13628214585822127)
		(2.8, 0.13628214585822127)
		(2.7, 0.13628214585822127)
		(2.6, 0.13628214585822127)
		(2.5, 0.13628214585822127)
		(2.4000000000000004, 0.13628214585822127)
		(2.3000000000000003, 0.13628214585822127)
		(2.1999999999999997, 0.13628214585822127)
		(2.0999999999999996, 0.13628214585822127)
		(2.0, 0.13628214585822127)
		(1.9, 0.13628214585822127)
		(1.7999999999999998, 0.13628214585822127)
		(1.7, 0.13628214585822127)
		(1.6, 0.13628214585822127)
		(1.5, 0.13628214585822127)
		(1.4, 0.13628214585822127)
		(1.3, 0.13628214585822127)
		(1.2000000000000002, 0.13028904132817043)
		(1.0999999999999999, 0.13028904132817043)
		(1.0, 0.12637108557782675)
		(0.8999999999999999, 0.12637108557782675)
		(0.8, 0.1214163988150702)
		(0.7, 0.11615412250438717)
		(0.6000000000000001, 0.10920985518454013)
		(0.5, 0.10920985518454013)
		(0.4, 0.10920985518454013)
		(0.30000000000000004, 0.089262844411835)
		(0.2, 0.08361863581383462)
		(0.1, 0.08078737619662292)
		(0.0, 0.07894851325763326)
	};
	\addlegendentry{train error}
	\end{axis}
\end{tikzpicture}
\caption{\label{f-regpath}Regularization path.}
\end{centering}
\end{figure}
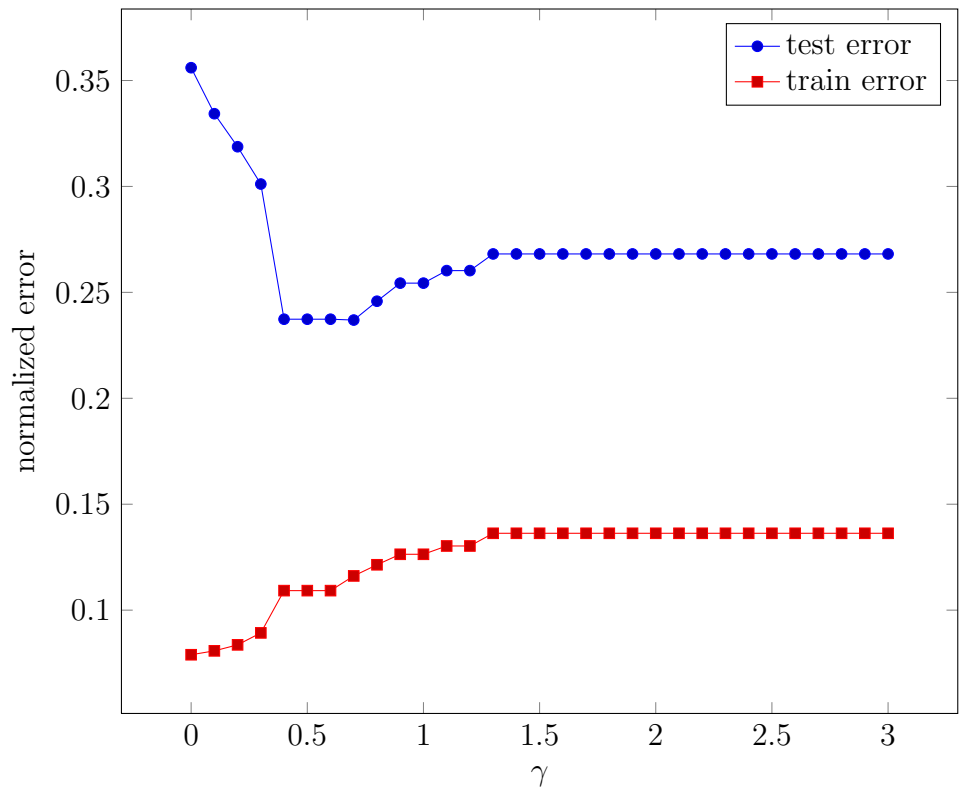

\subsection{Choosing model parameters}

To form a generalized low rank model, one needs to specify the loss functions $L_j$, 
regularizers $r$ and $\tilde r$, and a rank $k$.
The loss function should usually be chosen by a domain expert to reflect the
intuitive notion of what it means to ``fit the data well''.
On the other hand,
the regularizers and rank are often chosen based on statistical considerations,
so that the model generalizes well to unseen (missing) data.

There are three major considerations to balance in choosing the regularization and rank of the model. 
In the following discussion, we suppose that 
the regularizers $r = \gamma r^0$ and $\tilde r = \gamma \tilde r^0$ 
have been chosen up to a scaling $\gamma$.

\paragraph{Compression.} 
A low rank model $(X,Y)$ with rank $k$ and no sparsity represents the data table $A$
with only $(m+n)k$ numbers, achieving a compression ratio of $(m+n)k/(mn)$.
If the factors $X$ or $Y$ are sparse, then we have used fewer than $(m+n)k$ numbers
to represent the data $A$, achieving a higher compression ratio.
We may want to pick parameters of the model ($k$ and $\gamma$) 
in order to achieve a good error $\sum_{(i,j)\in \Omega}L_j(A_{ij} - x_i y_j)$
for a given compression ratio. 
For each possible combination of model parameters, 
we can fit a low rank model with those parameters, observing both 
the error and the compression ratio.
We can then choose the best model parameters (highest compression rate) 
achieving the error we require, or the best model parameters (lowest error rate)
achieving the compression we require. 

More formally, one can construct an information criterion 
for low rank models by analogy with the 
Aikake Information Criterion (AIC) or the 
Bayesian Information Criterion (BIC).
For use in the AIC, 
the number of degrees of freedom in a low rank model can be computed as
the difference between the number of nonzeros in the model and the dimensionality
of the symmetry group of the problem. 
For example, if the model $(X, Y)$ is dense, and 
the regularizer is invariant under 
orthogonal transformations (\eg, $r(x) = \|x\|_2^2$),
then the number of degrees of freedom is $(m+n)k - k^2$ \cite{tipping1999}.
Minka \cite{minka2001} proposes a method based on the BIC
to automatically choose the dimensionality in PCA, and observes
that it performs better than cross validation in identifying the true rank of the model
when the number of observations is small ($m$, $n \lesssim 100$).

\paragraph{Denoising.} 

Suppose we observe every entry in a true data matrix contaminated by noise, \eg, 
$A_{ij} = A^{\mathrm{true}}_{ij} + \epsilon_{ij}$, with $\epsilon_{ij}$ some random variable. 
We may wish to choose model parameters to identify the truth and remove the noise:
we would like to find $k$ and $\gamma$ to minimize 
$\sum_{(i,j)\in \Omega}L_j(A^{\mathrm{true}}_{ij} - x_i y_j)$.

A number of commonly used rules-of-thumb have been proposed in the case of PCA
to distinguish the signal (the true rank $k$ of the data) from the noise, 
some of which can be generalized to other low rank models.
These include using scree plots, often known as the ``elbow method'' \cite{cattell1966}; 
the eigenvalue method;
Horn's parallel analysis \cite{horn1965, dinno2009}; 
and other related methods \cite{zwick1986, preacher2003}.
A recent, more sophisticated method adapts 
the idea of dropout training \cite{srivastava2014dropout} 
to regularize low-rank matrix estimation \cite{josse2014stable}.

Some of these methods can easily be adapted to the GLRM context.
The ``elbow method'' increases $k$ until the objective value decreases less than linearly;
the eigenvalue method increases $k$ until the objective value decreases by less than some threshold;
Horn's parallel analysis increases $k$ until the objective value compares unfavorably
to one generated by fitting a model to data drawn from a synthetic noise distribution.

Cross validation is also simple to apply, and is discussed further below as a means of
predicting missing entries.
However, applying cross validation to the denoising problem is somewhat tricky, since
leaving out too few entries results in overfitting to the noise,
while leaving out too many results in underfitting to the signal.
The optimal number of entries to leave out may depend on the aspect
ratio of the data, as well as on the type of noise present in the data \cite{perry2009}, 
and is not well understood except in the case of Gaussian noise \cite{owen2009}.
We explore the problem of choosing a holdout size numerically below.

\paragraph{Predicting missing entries.} 
Suppose we observe some entries in the matrix and wish to predict the others. 
A GLRM with a higher rank will always be able to fit the (noisy) data better than one 
of lower rank.
However, a model with many parameters may also overfit to the noise.
Similarly, a GLRM with no regularization ($\gamma=0$) will always produce a model
with a lower empirical loss $\sum_{(i,j) \in \Omega} L_j(x_iy_j, A_{ij})$.
Hence, we cannot pick a rank $k$ or regularization $\gamma$ simply by considering the
objective value obtained by fitting the low rank model.

But by resampling from the data, we can 
simulate the performance of the model on out of sample (missing) data 
to identify GLRMs that neither over nor underfit.
Here, we discuss a few methods for choosing model parameters by 
cross-validation; that is, by resampling from the data to evaluate
the model's performance.
Cross validation is commonly used in regression models to choose parameters such as the regularization parameter $\gamma$, as in Figure \ref{f-regpath}.
In GLRMs, cross validation can also be used to choose the rank $k$. 
Indeed, using a lower rank $k$ can be considered another form of model regularization.

We can distinguish between three sources of noise or variability in the data,
which give rise to three different resampling procedures. 
\bit
\item The \emph{rows} or \emph{columns} of the data are chosen at random, \ie, 
drawn iid from some population.
In this case it makes sense to resample the \emph{rows} or \emph{columns}.
\item The rows or columns may be fixed, but the indices of the observed entries in the matrix are chosen at random. In this case, it makes sense to resample from the observed \emph{entries} in the matrix.
\item The indices of the observed entries are fixed, but the values are observed with some measurement error. In this case, it makes sense to resample the \emph{errors} in the model.
\eit

Each of these leads to a different reasonable kind of resampling scheme. 
The first two give rise to resampling schemes based on cross validation (\ie, resampling the rows, columns,
or individual entries of the matrix) which we discuss further below.
The third gives rise to resampling schemes based on the bootstrap or jackknife procedures,
which resample from the errors or residuals after fitting the model.
A number of methods using the third kind of resampling have been proposed in order to perform inference
(\ie, generate confidence intervals) for PCA; see Josse et al.\ \cite{josse2014confidence} and references therein.

As an example, let's explore the effect of varying $|\Omega|/mn$, $\gamma$, and $k$.
We generate random data as follows.
Let $X\in\reals^{m\times k^\mathrm{true}}$, $Y\in\reals^{k^\mathrm{true} \times n}$,
and $S\in\reals^{m \times n}$,
with $m=n=300$ and $k^\mathrm{true}=3$,.
Draw the entries of $X$ and $Y$ from a standard normal distribution,
and draw the entries of the sparse outlier matrix $S$ are drawn from a uniform distribution
on $[0,3]$ with probability $0.05$, and are $0$ otherwise.
Form $A = XY + S$.
Select an observation set $\Omega$ by picking
entries in the matrix uniformly at random from $\{1,\ldots, n\}\times
\{1,\ldots,m\}$. 
We fit a rank $k$ GLRM with Huber loss and quadratic regularization $\gamma \|\cdot\|_2^2$,
varying $|\Omega|/mn$, $\gamma$, and $k$, and compute the test error.
We average our results over 5 draws from the distribution generating the data.

In Figure \ref{f-cv2}, we see that the true rank $k=3$ performs best on cross-validated error for any number of observations $|\Omega|$. 
(Here, we show performance for $\gamma=0$. The plot for other values of the regularization parameter is qualitatively the same.) 
Interestingly, it is easiest to identify the true rank with a small number of observations: higher numbers of observations make it more difficult to overfit to the data even when allowing higher ranks.

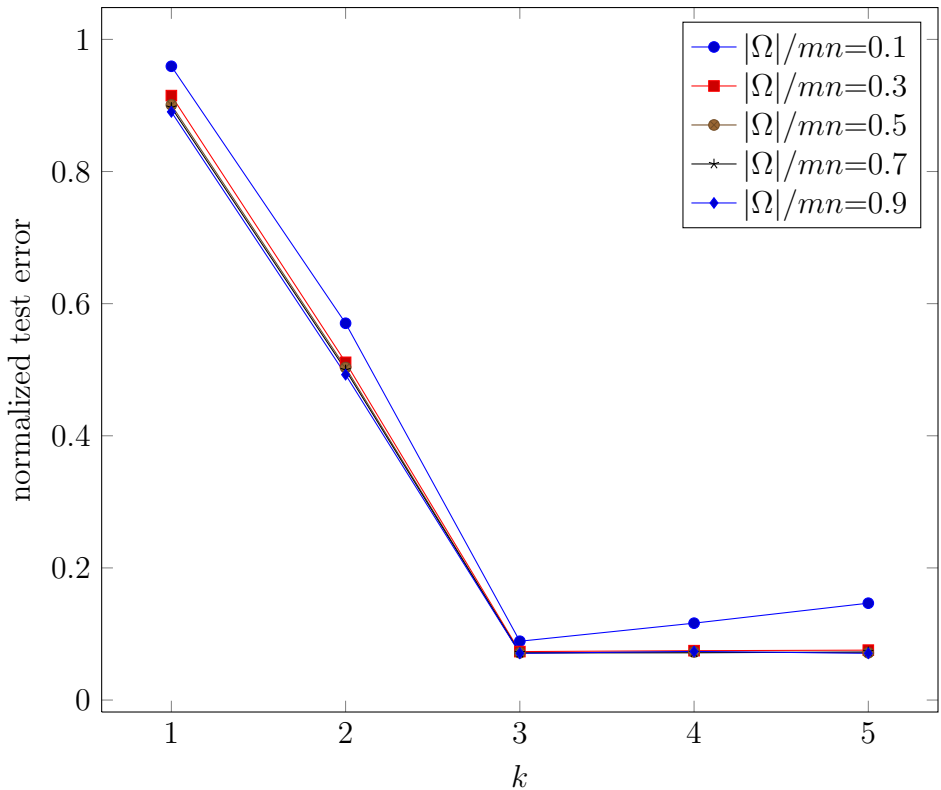
\begin{figure}[htb!]
\begin{centering}
\begin{tikzpicture}[]
    \begin{axis}[
        view = {0}{90}, 
        width=5in,
        xlabel=$k$,
        ylabel=normalized test error,
        xtick={1,2,3,4,5}
    ]
     
\addplot+ coordinates {
    (1,0.9593126702209849)
    (2,0.5703283720746729)
    (3,0.08902437274803894)
    (4,0.11636780692119926)
    (5,0.14661385052572817)
};
\addlegendentry{$|\Omega|/mn$=0.1}
\addplot+ coordinates {
    (1,0.914901508797407)
    (2,0.511192846722518)
    (3,0.07336947342541625)
    (4,0.0747247414548605)
    (5,0.07553750964759079)
};
\addlegendentry{$|\Omega|/mn$=0.3}
\addplot+ coordinates {
    (1,0.9008840003534594)
    (2,0.5032567087473229)
    (3,0.07149694257265456)
    (4,0.07277169388558886)
    (5,0.07154883662633502)
};
\addlegendentry{$|\Omega|/mn$=0.5}
\addplot+ coordinates {
    (1,0.8969909263051777)
    (2,0.49963536974406564)
    (3,0.07133607593704534)
    (4,0.07150690280052976)
    (5,0.07279099725267732)
};
\addlegendentry{$|\Omega|/mn$=0.7}
\addplot+ coordinates {
    (1,0.8902879760901993)
    (2,0.4924940852679792)
    (3,0.070745659554346)
    (4,0.07357063394113014)
    (5,0.0707679273010429)
};
\addlegendentry{$|\Omega|/mn$=0.9}
    \end{axis}
\end{tikzpicture}
\caption{\label{f-cv2}Test error as a function of $k$, for $\gamma=0$.}
\end{centering}
\end{figure}

In Figure \ref{f-cv3}, we consider the interdependence of our choice of $\gamma$ and $k$. Regularization is most important when few matrix elements have been observed: the curve for each $k$ is nearly flat when more than about 10\% of the entries have been observed, so we show here a plot for $|\Omega|=.1mn$. Here, we see that the true rank $k=3$ performs best on cross-validated error for any value of the regularization parameter. Ranks that are too high ($k>3$) benefit from increased regularization $\gamma$, whereas higher regularization hurts the performance of models with $k$ lower than the true rank. That is, regularizing the rank (small $k$) can substitute for explicit regularization of the factors (large $\gamma$).

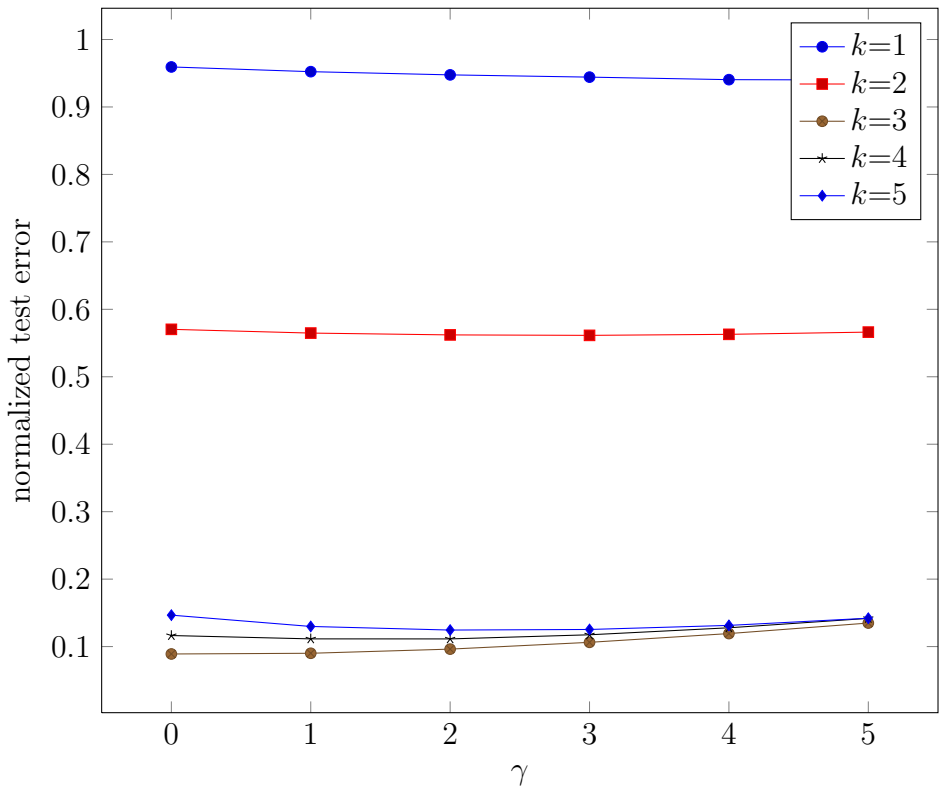
\begin{figure}[htb!]
\begin{centering}
\begin{tikzpicture}[]
    \begin{axis}[
        view = {0}{90}, 
        width=5in,
        xlabel=$\gamma$,
        ylabel=normalized test error,
        xtick={0,1,2,3,4,5}
    ]
\addplot+ coordinates {
    (0,0.9593126702209849)
    (1,0.9523237731977117)
    (2,0.9475697068902894)
    (3,0.9442279990340736)
    (4,0.9404767903269382)
    (5,0.9399891111119189)
};
\addlegendentry{$k$=1}
\addplot+ coordinates {
    (0,0.5703283720746729)
    (1,0.5647730480796055)
    (2,0.5620686687741911)
    (3,0.5613831996863705)
    (4,0.5629144165667216)
    (5,0.5662068407477036)
};
\addlegendentry{$k$=2}
\addplot+ coordinates {
    (0,0.08902437274803894)
    (1,0.0900730804384017)
    (2,0.09623360895607005)
    (3,0.10628244755440559)
    (4,0.11927338894188697)
    (5,0.13488128232200527)
};
\addlegendentry{$k$=3}
\addplot+ coordinates {
    (0,0.11636780692119926)
    (1,0.11141681061195807)
    (2,0.11135440827194938)
    (3,0.11747986343435486)
    (4,0.12784999446542716)
    (5,0.14180515164733842)
};
\addlegendentry{$k$=4}
\addplot+ coordinates {
    (0,0.14661385052572817)
    (1,0.12983132899845698)
    (2,0.12449270674496474)
    (3,0.12536422121969473)
    (4,0.13140090611203598)
    (5,0.14206256580965673)
};
\addlegendentry{$k$=5}   

    \end{axis}
\end{tikzpicture}
\caption{\label{f-cv3}Test error as a function of $\gamma$ when 10\% of entries are observed.}
\end{centering}
\end{figure}

Finally, in Figure \ref{f-cv1} we consider how the fit of the model depends on the number of observations. 
If we correctly guess the rank $k=3$, we find that the fit is insensitive to the number of observations. 
If our rank is either too high or too low, the fit improves with more observations.

\begin{figure}[htb!]
\begin{centering}
\begin{tikzpicture}[]
    \begin{axis}[
        view = {0}{90}, 
        width=5in,
        xlabel=$|\Omega|/mn$,
        ylabel=normalized test error,
        xtick={.1,.3,.5,.7,.9}
    ]
        
\addplot+ coordinates {
    (0.1,0.9593126702209849)
    (0.3,0.914901508797407)
    (0.5,0.9008840003534594)
    (0.7,0.8969909263051777)
    (0.9,0.8902879760901993)
};
\addlegendentry{k=1}
\addplot+ coordinates {
    (0.1,0.5703283720746729)
    (0.3,0.511192846722518)
    (0.5,0.5032567087473229)
    (0.7,0.49963536974406564)
    (0.9,0.4924940852679792)
};
\addlegendentry{k=2}
\addplot+ coordinates {
    (0.1,0.08902437274803894)
    (0.3,0.07336947342541625)
    (0.5,0.07149694257265456)
    (0.7,0.07133607593704534)
    (0.9,0.070745659554346)
};
\addlegendentry{k=3}
\addplot+ coordinates {
    (0.1,0.11636780692119926)
    (0.3,0.0747247414548605)
    (0.5,0.07277169388558886)
    (0.7,0.07150690280052976)
    (0.9,0.07357063394113014)
};
\addlegendentry{k=4}
\addplot+ coordinates {
    (0.1,0.14661385052572817)
    (0.3,0.07553750964759079)
    (0.5,0.07154883662633502)
    (0.7,0.07279099725267732)
    (0.9,0.0707679273010429)
};
\addlegendentry{k=5}
    \end{axis}
\end{tikzpicture}
\caption{\label{f-cv1}Test error as a function of observations $|\Omega|/mn$, for $\gamma=0$.}
\end{centering}
\end{figure}
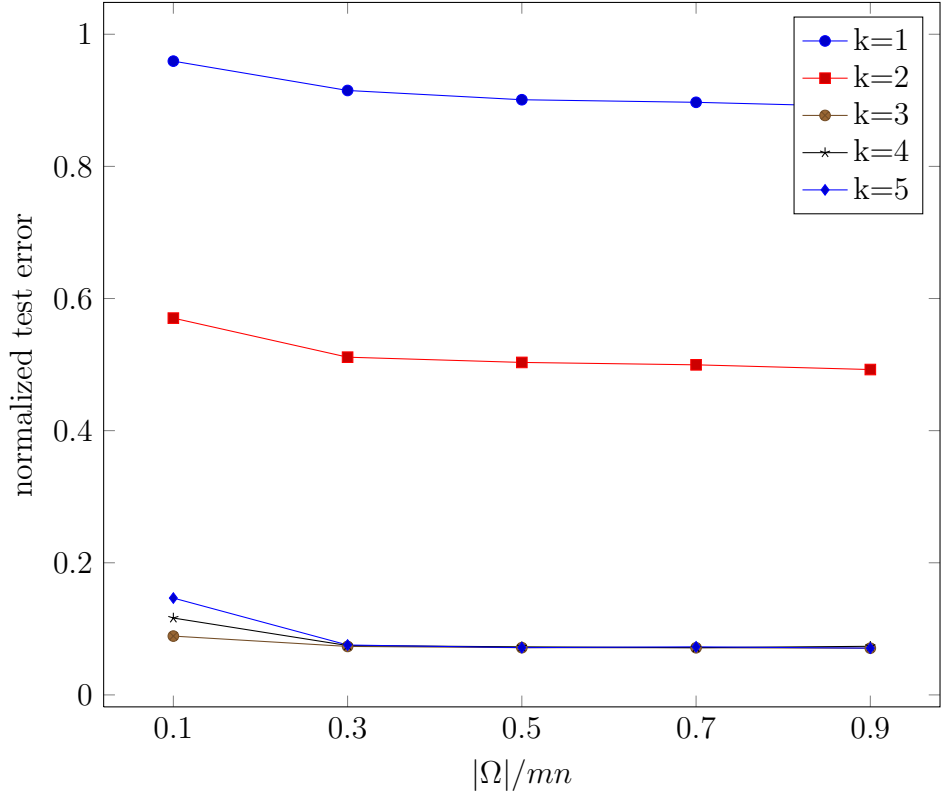

\subsection{On-line optimization}
Suppose that new examples or features are being added to our data set continuously,
and we wish to perform \emph{on-line optimization}, which means that we should have a good 
estimate at any time for the representations of 
those examples $x_i$ or features $y_j$ which we have seen.
This model is equivalent to adding new rows or columns to the data table $A$ as the algorithm continues.
In this setting, alternating minimization performs quite well, and has a very natural interpretation.
Given an estimate for $Y$,
when a new example is observed in row $i$, we may solve
\[
\begin{array}{ll}
	\mathrm{minimize} & \sum_{j: (i,j) \in \Omega} L_{ij}(A_{ij}, x y_j) + r(x)
\end{array}
\]
with variable $x$ to compute a representation for row $i$. 
This computation is exactly the same as one step of alternating minimization.
Here, we are finding the best feature representation for the new example in terms
of the (already well understood) archetypes $Y$.
If the number of other examples previously seen is large, 
the addition of a single new example should not change the optimal $Y$ by very much;
hence if $(X,Y)$ was previously the global minimum of (\ref{eq-gpca}),
this estimate of the feature representation for the new example will be very close
to its optimal representation (\ie, the one that minimizes problem (\ref{eq-gpca})).
A similar interpretation holds when new columns are added to $A$.

\section{Implementations}\label{s-implementation}

The authors have developed and released three open source codes for modelling 
and fitting generalized low rank models:
a basic serial implementation written in Python,
a serial and shared-memory parallel implementation written in Julia, 
and a distributed implementation written in Scala using the Spark framework.
The Julia and Spark implementations use the alternating proximal gradient method 
described in \S\ref{s-algorithms} to fit GLRMs,
while the Python implementation uses alternating minimization and a \verb|cvxpy|
\cite{cvxpy} backend for each subproblem.
In this section we briefly discuss these implementations, and report some timing results.
For a full description and up-to-date information about available functionality,
we encourage the reader to consult the on-line documentation for each
of these packages.

There are also many implementations available for fitting special cases of GLRMs. 
For example, an implementation capable of fitting any GLRM for which the subproblems in
an alternating minimization method are quadratic programs was recently developed 
in Spark by Debasish Das \cite{das2014}.

\subsection{Python implementation}
\verb|GLRM.py| is a Python implementation for fitting GLRMs that can be found, 
together with documentation, at
\begin{center}
\url{https://github.com/cehorn/glrm}.
\end{center}
We encourage the interested reader to consult the on-line documentation for the
most up-to-date functionality and a collection of examples.

\paragraph{Usage.}

The user initializes a GLRM by specifying 
\bit
\item the data table \verb|A| ($A$), stored as a Python list of 2-D arrays,
    where each 2-D array in \verb|A| contains all data associated
    with a particular loss function,
\item the list of loss functions \verb|L| ($L_j$, $j = 1, \ldots, n$), that
    correspond to the data as specified by \verb|A|,
\item regularizers \verb|regX| ($r$) and \verb|regY| ($\tilde r$),
\item the rank \verb|k| ($k$),
\item an optional list \verb|missing_list| with the same length as 
    \verb|A| so that each entry of
    \verb|missing_list| is a list of missing entries 
    corresponding to the data from \verb|A|, and
\item an optional convergence object \verb|converge| that characterizes the
    stopping criterion for the alternating minimization procedure.
\eit

The following example illustrates how to use \verb|GLRM.py| to fit a GLRM with
Boolean (\verb|A_bool|) and numerical (\verb|A_real|) data, with quadratic
regularization and a few missing entries.
\begin{verbatim}
    from glrm import GLRM                               # import the model
    from glrm.loss import QuadraticLoss, HingeLoss      # import losses
    from glrm.reg import QuadraticReg                   # import regularizer
    
    A = [A_bool, A_real]                                # data stored as a list
    L = [Hinge_Loss, QuadraticLoss]                     # loss function as a list
    regX, regY = QuadraticReg(0.1), QuadraticReg(0.1)   # penalty weight is 0.1
    missing_list = [[], [(0,0), (0,1)]]                 # indexed by submatrix

    model = GLRM(A, L, regX, regY, k, missing_list)     # initialize GLRM
    model.fit()                                         # fit GLRM 
\end{verbatim}
The \verb|fit()| method automatically 
adds an offset to the GLRM and scales the loss functions 
as described in \S\ref{s-gpca-scaling}.

\verb|GLRM.py| fits GLRMS by alternating minimization.
The code instantiates \verb|cvxpy| problems \cite{cvxpy}
corresponding to the $X$- and $Y$-update steps, 
then iterates by alternately solving each problem until
convergence criteria are met.  

The following loss functions and regularizers are supported by \verb|GLRM.py|:
\bit
\item quadratic loss \verb|QuadraticLoss|,
\item Huber loss \verb|HuberLoss|,
\item hinge loss \verb|HingeLoss|,
\item ordinal loss \verb|OrdinalLoss|,
\item no regularization \verb|ZeroReg|,
\item $\ell_1$ regularization \verb|LinearReg|,
\item quadratic regularization \verb|QuadraticReg|, and
\item nonnegative constraint \verb|NonnegativeReg|.
\eit
Users may implement their own loss functions (regularizers) using the abstract
class \verb|Loss| (\verb|Reg|). 

\subsection{Julia implementation}

\verb|LowRankModels| is a code written in Julia \cite{bezanson2012} 
for modelling and fitting GLRMs.
The implementation is available on-line at
\begin{center}
\url{https://github.com/madeleineudell/LowRankModels.jl}.
\end{center}
We discuss some aspects of the usage and features of the code here. 
For a full description and up-to-date information about available functionality,
we encourage the reader to consult the on-line documentation.

\paragraph{Usage.}

To form a GLRM using \verb|LowRankModels|, the user specifies
\bit
\item the data \verb|A| ($A$), which can be any array or array-like data structure (\eg, a Julia \verb|DataFrame|);
\item the observed entries \verb|obs| ($\Omega$), a list of tuples of the indices of the observed entries in the matrix,
which may be omitted if all the entries in the matrix have been observed;
\item the list of loss functions \verb|losses| ($L_j$, $j=1,\ldots,n$), one for each column of \verb|A|;
\item the regularizers \verb|rx| ($r$) and \verb|ry| ($\tilde r$); and
\item the rank \verb`k` ($k$).
\eit
For example, the following code forms and fits a $k$-means model with $k=5$ on the matrix $A \in \reals^{m \times n}$.
\begin{verbatim}
  losses = fill(quadratic(),n)   # quadratic loss
  rx = unitonesparse()           # x is 1-sparse unit vector
  ry = zeroreg()                 # y is not regularized
  glrm = GLRM(A,losses,rx,ry,k)  # form GLRM
  X,Y,ch = fit!(glrm)            # fit GLRM
\end{verbatim}

\verb|LowRankModels| uses the proximal gradient method described in \S\ref{proxgradmethod} to fit GLRMs. 
The optimal model is returned in the factors \verb|X| and \verb|Y|, while \verb`ch` gives the convergence history.
The exclamation mark suffix follows the convention in Julia
denoting that the function mutates at least one of its arguments.
In this case, it caches the best fit $X$ and $Y$ as \verb`glrm.X` and \verb`glrm.Y`
\cite{chen2014}.

Losses and regularizers must be of type \verb|Loss| and \verb|Regularizer|, respectively,
and may be chosen from a list of supported losses and regularizers, which include
\bit
\item quadratic loss \verb`quadratic`,
\item hinge loss \verb`hinge`,
\item $\ell_1$ loss \verb`l1`,
\item Huber loss \verb`huber`,
\item ordinal hinge loss \verb`ordinal_hinge`,
\item quadratic regularization \verb`quadreg`,
\item no regularization \verb`zeroreg`,
\item nonnegative constraint \verb`nonnegative`, and
\item 1-sparse constraint \verb`onesparse`.
\item unit 1-sparse constraint \verb`unitonesparse`.
\eit
Users may also implement their own losses and regularizers.

\paragraph{Shared memory parallelism.}

\verb|LowRankModels| takes advantage of Julia's \verb|SharedArray| data structure
to implement a fitting procedure that takes advantage of shared memory parallelism.
While Julia does not yet support threading, \verb|SharedArray|s in Julia allow 
separate processes on the same computer to access the same block of memory.
To fit a model using multiple processes, \verb|LowRankModels| loads the data
$A$ and the initial model $X$ and $Y$ into shared memory, broadcasts other problem
data (\eg, the losses and regularizers) to each process,
and assigns to each process a partition of the rows of $X$ and columns of $Y$.
At every iteration, each process updates its rows of $X$, its columns of $Y$,
and computes its portion of the objective function, synchronizing after each of these steps
to ensure that \eg the $X$ update is completed before the $Y$ update begins;
then the master process checks a convergence criterion and adjusts the step length.

\paragraph{Automatic modeling.}

\verb|LowRankModels| is capable of adding offsets to a GLRM, and of automatically scaling the loss 
functions, as described in \S\ref{s-gpca-scaling}.
It can also automatically detect the types of different columns of a data frame and
select an appropriate loss.
Using these features, \verb|LowRankModels| implements a method
\begin{verbatim}
  glrm(dataframe, k)
\end{verbatim}
that forms a rank $k$ model on a data frame, automatically selecting loss functions and regularization that suit the data well,
and ignoring any missing (\verb|NA|) element in the data frame.
This GLRM can then be fit with the function \verb|fit!|.

\paragraph{Example.}
As an example, we fit a GLRM to the Motivational States Questionnaire (MSQ) data set \cite{revelle1998}.
This data set measures 3896 subjects on 92 aspects of mood and personality type, 
as well as recording the time of day the data were collected. 
The data include real-valued, Boolean, and ordinal measurements, 
and approximately 6\% of the measurements are missing (\verb|NA|).

The following code loads the MSQ data set and encodes it in two dimensions:
\begin{verbatim}
  using RDatasets
  using LowRankModels
  # pick a data set
  df = RDatasets.dataset("psych","msq")
  # encode it!
  X,Y,labels,ch = fit(glrm(df,2))
\end{verbatim}

Figure~\ref{f-psychmsq} uses the rows of $Y$ as a coordinate system to plot some of the features of the data set.
Here we see the automatic embedding separates positive from negative emotions along the $y$ axis.
This embedding is notable for being interpretable despite 
having been generated completely automatically.
Of course, better embeddings may be obtained by a more careful choice of 
loss functions, regularizers, scaling, and embedding dimension $k$.

\begin{figure}[htb!]
\begin{center}
\includegraphics[width = 0.9\textwidth]{\figures/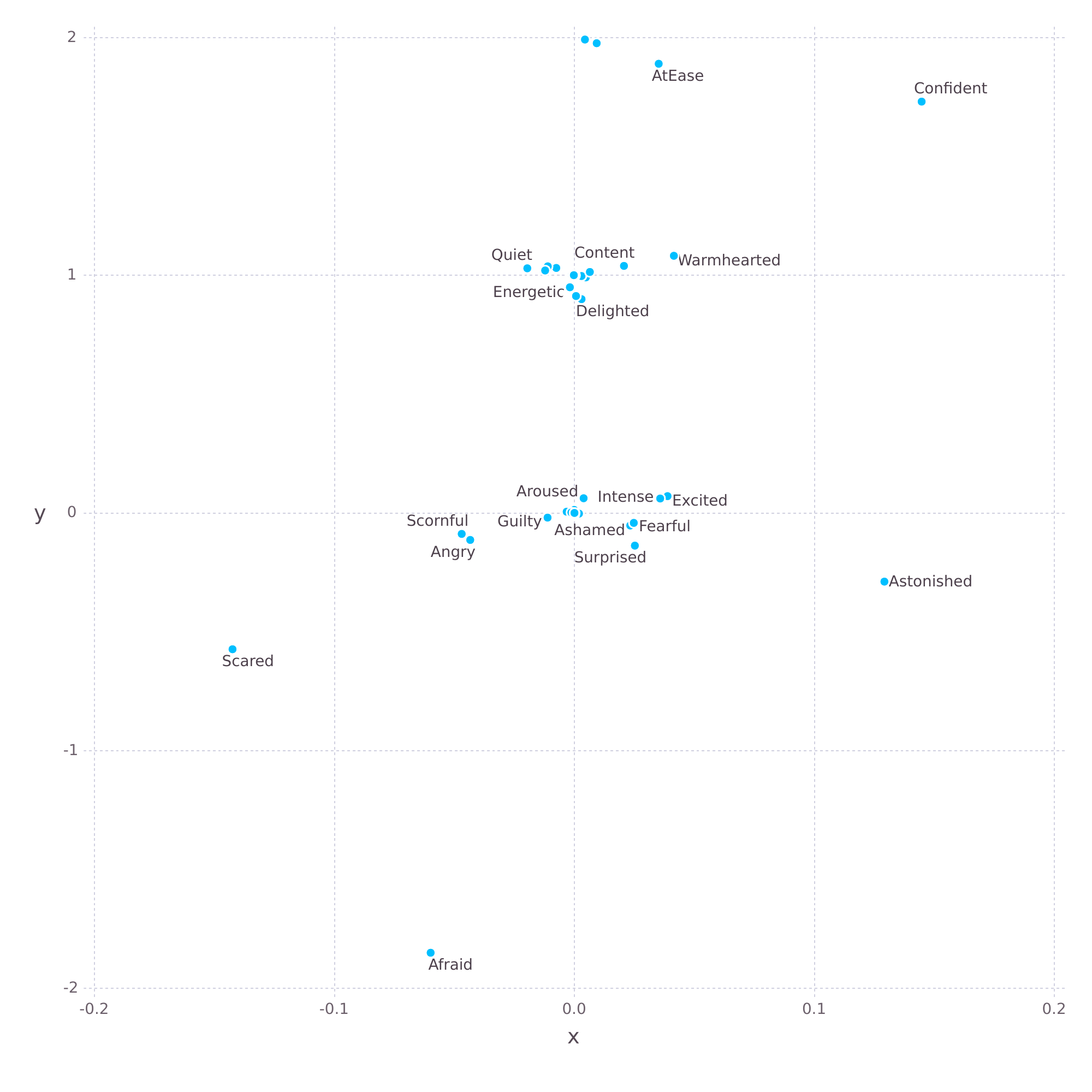}
\end{center}
\caption{\label{f-psychmsq}An automatic embedding of the MSQ \cite{revelle1998} 
data set into two dimensions.}
\end{figure}

\subsection{Spark implementation}

\texttt{SparkGLRM} is a code written in Scala,
built on the Spark cluster programming framework \cite{sparkpaper},
for modelling and fitting GLRMs.
The implementation is available on-line at
\begin{center}
\url{http://git.io/glrmspark}.
\end{center}

\paragraph{Design.} In \texttt{SparkGLRM},
the data matrix $A$ is split entry-wise across many machines, 
just as in \cite{hastiefastals}. 
The model ($X$, $Y$) is replicated and stored in memory on every machine. 
Thus the total computation time required to fit the model is proportional to 
the number of nonzeros divided by the number of cores, 
with the restriction that the model should fit in memory.
(The authors leave to future work an extension to models that do not fit
in memory, \eg, by using a parameter server \cite{factorbird}.)
Where possible, hardware acceleration (via breeze and BLAS)
is used for local linear algebraic operations. 

At every iteration, the current model is broadcast to all machines, 
so there is only one copy of the model on each machine. 
This particularly important in machines with many cores, 
because it avoids duplicating the model those machines.
Each core on a machine will process a partition of the input matrix, 
using the local copy of the model.

\paragraph{Usage.}
The user provides loss functions $L_{ij}(u,a)$
indexed by $i=0,\ldots, m-1$ and $j=0, \ldots,n-1$, 
so a different loss function can be defined for each column, or even for each entry. 
Each loss function is defined by its gradient (or a subgradient). 
The method signature is 
\begin{center}
\texttt{loss\_grad(i: Int, j: Int, u: Double, a: Double)}
\end{center}
whose implementation can be customized by particular $i$ and $j$.  
As an example, the following line implements squared error loss
($L(u,a) = 1/2 (u-a)^2$) for all entries: 
\begin{center}
\texttt{u - a} 
\end{center}

Similarly, the user provides functions implementing the proximal operator 
of the regularizers $r$ and $\tilde r$, 
which take a dense vector and perform the appropriate proximal operation. 

\paragraph{Experiments.}

We ran experiments on several large matrices. 
For size comparison, a very popular matrix in the recommender systems community is the Netflix Prize Matrix,
which has $17770$ rows, $480189$ columns, and $100480507$ nonzeros. 
Below we report results on several larger matrices, up to 10 times larger. 
The matrices are generated by fixing the dimensions and number of nonzeros per row, 
then uniformly sampling the locations for the nonzeros, 
and finally filling in those locations with a uniform random number in $[0,1]$.

We report iteration times using an Amazon EC2 cluster with 10 slaves and one master, 
of instance type ``c3.4xlarge". Each machine has 16 CPU cores and 30 GB of RAM. 
We ran \texttt{SparkGLRM} to fit two GLRMs on matrices of varying sizes.
Table~\ref{sparktable1} gives results for quadratically regularized PCA 
(\ie, quadratic loss and quadratic regularization) with $k=5$. 
To illustrate the capability to write and fit custom loss functions, 
we also fit a GLRM using a loss function that depends on the parity of $i+j$:
\[
L_{ij}(u,a) = \left\{\begin{array}{ll}
|u-a| & i+j~\mbox{is even} \\
(u-a)^2 & i+j~\mbox{is odd},
\end{array} \right .
\]
with $r(x) = \|x\|_1$ and $\tilde r(y) = \|y\|_2^2$, setting $k=10$.
(This loss function was chosen merely to illustrate the generality of the implementation. 
Usually losses will be the same for each row in the same column.)
The results for this custom GLRM are given in Table~\ref{sparktable2}.

\begin{table}[h]
\begin{center}
  \begin{tabular}{ | c | c | c | }
    \hline
    \textbf{Matrix size} & \textbf{\# nonzeros} & \textbf{Time per iteration (s)} \\ \hline
    $10^6 \times 10^6$  & $10^6$ & 7 \\ \hline
    $10^6 \times 10^6$  & $10^9$ & 11 \\ \hline
    $10^7 \times 10^7$  & $10^9$ &  227\\ \hline
  \end{tabular}
  \caption[Distributed proximal gradient GLRM]{\texttt{SparkGLRM} for quadratically regularized PCA, $k=5$.}
   \label{sparktable1}
\end{center}
\end{table}

\begin{table}[h]
\begin{center}

     \begin{tabular}{ | c | c | c | }
    \hline
    \textbf{Matrix size} & \textbf{\# nonzeros} & \textbf{Time per iteration (s)} \\ \hline
    $10^6 \times 10^6$  & $10^6$ &  9 \\ \hline
    $10^6 \times 10^6$  & $10^9$ &  13 \\ \hline
    $10^7 \times 10^7$  & $10^{9}$ & 294 \\ \hline
  \end{tabular}
	  \caption[Distributed proximal gradient GLRM]{\texttt{SparkGLRM} for custom GLRM, $k=10$.}
	   \label{sparktable2}
   \end{center}
\end{table}

The table gives the time per iteration.  
The number of iterations required for convergence depends on the size
of the ambient dimension.
On the matrices with the dimensions shown in Tables~\ref{sparktable1} and \ref{sparktable2},
convergence typically requires about 100 iterations, 
but we note that useful GLRMs often emerge after only a few tens of iterations.


\section*{Acknowledgements}
The authors are grateful to 
Chris De Sa,
Yash Deshpande,
Nicolas Gillis,
Maya Gupta,
Trevor Hastie,
Irene Kaplow,
Lester Mackey, 
Andrea Montanari,
Art Owen, 
Haesun Park,
David Price,
Chris R\'{e},
Ben Recht, 
Yoram Singer,
Nati Srebro,
Ashok Srivastava, 
Peter Stoica,
Sze-chuan Suen,
Stephen Taylor,
Joel Tropp,
Ben Van Roy,
and Stefan Wager
for a number of illuminating discussions and comments on early drafts of this paper,
and to Debasish Das and Matei Zaharia for their insights into creating a successful Spark implementation.
This work was developed with support from
the National Science Foundation Graduate Research Fellowship program (under Grant No. DGE-1147470),
the Gabilan Stanford Graduate Fellowship,
the Gerald J. Lieberman Fellowship,
and the DARPA X-DATA program.

\appendix
\section{Quadratically regularized PCA}\label{a-qpca}

In this appendix we describe some properties of the quadratically regularized PCA problem~(\ref{eq-qpca}),
\BEQ
\begin{array}{ll}
    \mbox{minimize} & \|A - XY\|_F^2 + \gamma\|X\|_F^2 + \gamma\|Y\|_F^2.
 \end{array}
\EEQ
In the sequel, we let $U \Sigma V^T = A$ be the SVD of $A$ and let $r$ be the rank of $A$.
We assume for convenience that
all the nonzero singular values $\sigma_1 > \sigma_2 > \cdots > \sigma_{r} > 0$ 
of $A$ are distinct.

\subsection{Solution}
Problem (\ref{eq-qpca}) is the only problem we will encounter that has an analytical solution. 
A solution is given by
\BEQ
\label{eq-qpca-soln}
X = \tilde U \tilde \Sigma^{1/2}, \qquad Y = \tilde \Sigma ^{1/2} \tilde V^T,
\EEQ
where $\tilde U$ and $\tilde V$ are defined as in (\ref{eq-svd-trunc}),
and $\tilde \Sigma = \diag((\sigma_1-\gamma)_+,\ldots, (\sigma_k-\gamma)_+)$.

To prove this, let's consider the optimality conditions of (\ref{eq-qpca}).
The optimality conditions are
\[
-(A - XY)Y^T + \gamma X = 0, \qquad
-(A - XY)^T X + \gamma Y^T = 0.
\]
Multiplying the first optimality condition on the left by $X^T$ and the 
second on the left by $Y$ and rearranging, we find
\[
X^T(A - XY)Y^T = \gamma X^T X, \qquad
Y (A - XY)^T X = \gamma Y Y^T,
\]
which shows, by taking a transpose, that $X^T X = Y Y^T$ at any stationary point.

We may rewrite the optimality conditions together as
\BEAS
\begin{bmatrix} -\gamma I & A \\ A^T & -\gamma I \end{bmatrix}
\begin{bmatrix} X \\ Y^T \end{bmatrix}
&=&
\begin{bmatrix} 0 & XY \\ (XY)^T & 0 \end{bmatrix}
\begin{bmatrix} X \\ Y^T \end{bmatrix}\\
&=&
\begin{bmatrix} X(YY^T) \\ Y^T(X^TX) \end{bmatrix}\\
&=&
\begin{bmatrix} X \\ Y^T \end{bmatrix} (X^TX),
\EEAS	
where we have used the fact that $X^T X = Y Y^T$.

Now we see that $(X,Y^T)$ lies in an \emph{invariant subspace} of the matrix
$\begin{bmatrix} -\gamma I & A \\ A^T & -\gamma I \end{bmatrix}$.
Recall that $V$ is an invariant subspace of a matrix $A$ if $AV = VM$ for some matrix $M$.
If $\rank(M) \leq \rank(A)$, we know that the eigenvalues of $M$ are eigenvalues of $A$,
and that the corresponding eigenvectors lie in the span of $V$.

Thus the eigenvalues of $X^TX$ must be eigenvalues of $\begin{bmatrix} -\gamma I & A \\ A^T & -\gamma I \end{bmatrix}$,
and $(X,Y^T)$ must span the corresponding eigenspace.
More concretely, notice that $\begin{bmatrix} -\gamma I & A \\ A^T & -\gamma I \end{bmatrix}$ is 
(symmetric, and therefore)
diagonalizable, with eigenvalues $- \gamma \pm \sigma_i$.
The larger eigenvalues $- \gamma + \sigma_i$ correspond to the eigenvectors $(u_i, v_i)$, 
and the smaller ones $- \gamma - \sigma_i$ to $(u_i,-v_i)$.

Now, $X^TX$ is positive semidefinite, so the eigenvalues shared by 
$X^TX$ and $\begin{bmatrix} -\gamma I & A \\ A^T & -\gamma I \end{bmatrix}$ must
be positive.
Hence there is some set $|\Omega| \leq k$ with $\sigma_i\geq\gamma$ for $i \in \Omega$ 
such that $X$ has have singular values $\sqrt{- \gamma + \sigma_i}$ for $i \in \Omega$.
(Recall that $X^T X = Y Y^T$, so $Y$ has the same singular values as $X$.)
Then $(X,Y^T)$ spans the subspace generated by 
the vectors $(u_i, v_i$ for $i \in \Omega$.
We say the stationary point $(X,Y)$ has active subspace $\Omega$. 
It is easy to verify that $XY = \sum_{i \in \Omega} u_i (\sigma_i - \gamma) v_i^T$.

Each active subspace gives rise to an orbit of stationary points.
If $(X,Y)$ is a stationary point, then $(XT,T^{-1}Y)$ is also a stationary point so long as
\[
-(A - XY)Y^TT^{-T} + \gamma X T = 0, \qquad
-(A - XY)^T X T + \gamma Y^T T^{-T} = 0,
\]
which is always true if $T^{-T} = T$, \ie, T is orthogonal.
This shows that the set of stationary points is invariant under orthogonal transformations.

To simplify what follows, we choose a representative element for each orbit.
Represent any stationary point with active subspace $\Omega$ by
\[
X = U_\Omega (\Sigma_\Omega-\gamma I)^{1/2}, \quad Y = (\Sigma_\Omega-\gamma I)^{1/2} V_\Omega^T,
\]
where by $U_\Omega$ we denote the submatrix of $U$ with columns indexed by $\Omega$,
and similarly for $\Sigma$ and $V$.
At any value of $\gamma$, let $k'(\gamma) = \max\{i: \sigma_i\geq\gamma\}$.
Then we have $\sum_{i=0}^k {k'(\gamma) \choose  i}$ (representative) stationary points,
one for each choice of $\Omega$ 
The number of (representative) stationary points is decreasing in $\gamma$;
when $\gamma > \sigma_1$, the only stationary point is $X = 0$, $Y = 0$. 

These stationary points can have quite different values.
If $(X,Y)$ has active subspace $\Omega$, then
\[
||A - XY||_F^2  + \gamma ( ||X||_F^2 + ||Y||_F^2 ) =
\sum_{i \notin \Omega} \sigma_i^2 
+ \sum_{i \in \Omega} \left( \gamma^2 + 2\gamma |\sigma_i - \gamma| \right).
\]
From this form, it is clear that we should choose $\Omega$ to include
the top singular values $i=1,\ldots,k'(\gamma)$.
Choosing any other subset $\Omega$ will result in a higher (worse) objective value:
that is, the other stationary points are \emph{not} global minima.

\subsection{Fixed points of alternating minimization}

\begin{theorem}
The quadratically regularized PCA problem (\ref{eq-qpca}) has only one local minimum, which is the global minimum. 
\end{theorem}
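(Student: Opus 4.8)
The plan is to exploit the complete characterization of stationary points derived in the previous subsection. Since the objective $\|A-XY\|_F^2 + \gamma\|X\|_F^2 + \gamma\|Y\|_F^2$ is a smooth (polynomial) function of $(X,Y)$, every local minimum must be a stationary point; and we have already enumerated every stationary point, up to the orthogonal orbit $(X,Y)\mapsto(XT,T^{-1}Y)$, by its active subspace $\Omega$. It therefore suffices to show that every stationary point whose active subspace differs from the optimal choice $\Omega^\star = \{1,\ldots,\min(k,k'(\gamma))\}$ admits a strict descent direction, and so cannot be a local minimum. Because the objective is invariant under the orthogonal action, it is enough to check this for the representative $X = U_\Omega(\Sigma_\Omega - \gamma I)^{1/2}$, $Y = (\Sigma_\Omega - \gamma I)^{1/2} V_\Omega^T$ of each orbit; a descent direction there transports to one at every point of the orbit.

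First I would reduce to two scenarios for a suboptimal $\Omega$, using that the singular values are distinct. If $\Omega\neq\Omega^\star$, then either (i) $\Omega$ omits some index $i$ with $\sigma_i > \gamma$ while leaving an unused (zero) column of $X$ and row of $Y$; or (ii) $|\Omega|=|\Omega^\star|$ but $\Omega$ contains some index $j$ with $\sigma_j < \sigma_i$ for an omitted top index $i$. In both cases the directions $u_i,v_i,u_j,v_j$ are orthogonal to all other active directions, so the objective restricted to the $(i,j)$ block decouples from the rest and can be analyzed in isolation.

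Next I would write down the perturbation and compute its effect. In case (ii), filling the slot associated with $j$ with a symmetric component $a_i = b_i = s$ along $u_i$ and $v_i$ while keeping the existing weight $c = (\sigma_j-\gamma)^{1/2}$, the objective restricted to these directions becomes
\begin{equation*}
(\sigma_j - c^2)^2 + 2c^2 s^2 + (\sigma_i - s^2)^2 + \gamma(2c^2 + 2s^2),
\end{equation*}
whose difference from the stationary value simplifies to $2s^2(\sigma_j - \sigma_i) + s^4$. In case (i) the same perturbation applied to the empty slot yields the analogous difference $2s^2(\gamma - \sigma_i) + s^4$. Since $\sigma_i > \sigma_j$ in case (ii) and $\sigma_i > \gamma$ in case (i), the leading quadratic term is strictly negative, so for sufficiently small $s>0$ the objective strictly decreases. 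This exhibits the claimed descent direction.

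Finally I would conclude that the only stationary points that are local minima are those with active subspace $\Omega^\star$; these all attain the global optimal value and form a single orthogonal orbit, so the set of local minima coincides with the set of global minima and there are no spurious local minima. The main obstacle is not any single computation, since each is elementary, but rather the bookkeeping: confirming that the stationary-point enumeration is exhaustive for the full (non-reduced) problem, that the orthogonal orbit preserves the local-minimum property, and that the distinctness of the singular values guarantees \emph{strict} separation in the swap case so the descent is strict. The knife-edge situation $\sigma_i = \gamma$ does not threaten the conclusion, since there the two competing active subspaces yield identical objective values and both are global minima.
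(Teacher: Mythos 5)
Your proof is correct, and its core computation is the same as the paper's: your case (ii) perturbation is exactly the paper's descent direction after the reparametrization $s = \epsilon\sqrt{d_i}$ (and an exchange of the labels $i$ and $j$), so your net change $2s^2(\sigma_j-\sigma_i)+s^4$ is literally the paper's $2\epsilon^2 d_i(\sigma_i-\sigma_j)+\epsilon^4 d_i^2$. The genuine difference is your case (i), and it is not a redundancy --- it closes a gap in the paper's own argument. The paper reduces everything to the claim that any suboptimal stationary point admits a pair $i \in \Omega$, $j \notin \Omega$ with $\sigma_j > \sigma_i$. That claim fails when $\Omega$ consists of top indices but has spare capacity, i.e.\ $|\Omega| < \min(k, k'(\gamma))$: for instance $\Omega = \{1\}$ with $k=2$ and $\sigma_1 > \sigma_2 > \gamma$ gives a stationary point ($X$ and $Y$ padded with a zero column and row) that is suboptimal, yet every omitted index has a \emph{smaller} singular value than every active one, so no swap pair exists; moreover the paper's perturbation, which scales with the weight $\sqrt{d_i}$ of an active column, degenerates on a zero column. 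Your fill-the-empty-slot perturbation, with net change $2s^2(\gamma - \sigma_i) + s^4 < 0$, is precisely what dispatches these points. Two minor remarks: your case (ii) is more naturally delimited by $|\Omega| = k$ than by $|\Omega| = |\Omega^\star|$ (when $|\Omega| = |\Omega^\star| < k$ you can, and implicitly do, fall back on case (i)), though as stated the two cases still cover all suboptimal stationary points; and, like the paper, your argument ultimately rests on the completeness of the stationary-point enumeration and on transporting descent directions along the orthogonal orbit, both of which you flag explicitly rather than prove --- an acceptable level of rigor here, since it matches the paper's.
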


Our proof is similar to that of \cite{baldi1989}, who proved a related theorem for 
the case of PCA~(\ref{eq-pca}).

\begin{proof}
We showed above that every stationary point of (\ref{eq-qpca}) has the form
$XY = \sum_{i \in \Omega} u_i d_i v_i^T$, with 
$\Omega \subseteq \{1,\ldots,k'\}$, $|\Omega| \leq k$, and $d_i = \sigma_i - \gamma$.
We use the representative element from each stationary orbit described above, 
so each column of $X$ is $u_i \sqrt{d_i}$
and each row of $Y$ is $\sqrt{d_i} v_i^T$ for some $i \in \Omega$.
The columns of $X$ are orthogonal, as are the rows of $Y$.

If a stationary point is not the global minimum, then 
$\sigma_j > \sigma_i$ for some $i \in \Omega$, $j \not \in \Omega$.
Below, we show we can always find a descent direction if 
this condition holds, thus showing that the only local minimum is the global minimum.

Assume we are at a stationary point with $\sigma_j > \sigma_i$ for some $i \in \Omega$, $j \not \in \Omega$.
We will find a descent direction by perturbing $XY$ in direction $u_j v_j^T$.
Form $\tilde X$ by replacing the column of $X$ containing $u_i \sqrt{d_i}$ by $(u_i + \epsilon u_j) \sqrt{d_i}$,
and $\tilde Y$ by replacing the row of $Y$ containing $\sqrt{d_i} v_i^T$ by $\sqrt{d_i} (v_i + \epsilon v_j)^T$.
Now the regularization term increases slightly:
\BEAS
\gamma ( \|\tilde X\|_F^2 + \|\tilde Y\|_F^2 ) - \gamma ( \|X\|_F^2 + \|Y\|_F^2 )
&=& 
\sum_{i' \in \Omega, i' \ne i} \left( 2 \gamma t_{i'}\right) + 2 \gamma d_i (1 + \epsilon^2) - \sum_{i' \in \Omega} 2 \gamma t_{i'}\\
&=& 
2 \gamma d_i \epsilon^2.
\EEAS
Meanwhile, the approximation error decreases:
\BEAS
\|A - \tilde X \tilde Y\|_F^2 - \|A - XY\|_F^2
&=& 
\| u_i \sigma_i v_i^T + u_j \sigma_j v_j^T - (u_i + \epsilon u_j)d_i(v_i + \epsilon v_j)^T\|_F^2 - (\sigma_i - d_i)^2 - \sigma_j^2 \\
&=& 
\| u_i (\sigma_i - d_i) v_i^T + u_j (\sigma_j - \epsilon^2 d_i) v_j^T - \epsilon u_i d_i v_j^T - \epsilon u_j d_i v_i^T\|_F^2 \\
&& - (\sigma_i - d_i)^2 - \sigma_j^2 \\
&=& 
\left\| \begin{bmatrix} \sigma_i - d_i & -\epsilon d_i \\ -\epsilon d_i & \sigma_j - \epsilon^2 d_i \end{bmatrix}\right\|_F^2 - (\sigma_i - d_i)^2 - \sigma_j^2 \\
&=& 
(\sigma_i - d_i)^2 + (\sigma_j - \epsilon^2 d_i)^2 + 2\epsilon^2 d_i^2 - (\sigma_i - d_i)^2 - \sigma_j^2 \\
&=& 
-2 \sigma_j \epsilon^2 d_i + \epsilon^4 d_i^2 + 2\epsilon^2 d_i^2 \\
&=& 
2 \epsilon^2 d_i( d_i - \sigma_j) + \epsilon^4 d_i^2,
\EEAS
where we have used the rotational invariance of the Frobenius norm to arrive at the third equality above.
Hence the net change in the objective value in going from $(X,Y)$ to $(\tilde X,\tilde Y)$ is
\BEAS
2 \gamma d_i \epsilon^2 + 2 \epsilon^2 d_i( d_i - \sigma_j) + \epsilon^4 d_i^2 
&=& 2 \epsilon^2 d_i( \gamma + d_i - \sigma_j) + \epsilon^4 d_i^2 \\
&=& 2 \epsilon^2 d_i( \sigma_i - \sigma_j) + \epsilon^4 d_i^2,
\EEAS
which is negative for small $\epsilon$. Hence we have found a descent direction, showing that any stationary point with 
$\sigma_j > \sigma_i$ for some $i \in \Omega$, $j \not \in \Omega$ is not a local minimum.
\end{proof}

\bibliography{glrm}

\end{document}